\documentclass[twoside]{article}
\usepackage[accepted]{aistats2024}
\usepackage[utf8]{inputenc}
\usepackage{bm,bbm,amsmath,amsfonts,amssymb}
\usepackage{authblk}

\usepackage{amsthm}

\usepackage{xspace}
\usepackage{url,ifthen}
\usepackage{caption}

\usepackage{mathtools}
\usepackage{tabularx}
\usepackage{graphicx}
\usepackage{algorithmicx,algorithm}
\usepackage[algo2e]{algorithm2e}
\usepackage[font=footnotesize]{subcaption}

\usepackage{hyperref}
\usepackage{mathrsfs}
\usepackage{xcolor}
\usepackage{tikz}
\usepackage[capitalise]{cleveref}
\usepackage{thm-restate}

\newtheorem{theorem}{Theorem}
\newtheorem{lemma}[theorem]{Lemma}
\theoremstyle{definition}
\newtheorem{definition}[theorem]{Definition}

\newcommand{\eps}{\epsilon}

\newcommand{\E}{\operatorname{E}}

\newcommand{\R}{\mathcal{R}}

%

%

%
 
\DeclareMathOperator*{\argmax}{arg\,max}

%


\newcommand{\EE}[2]{\mathbb{E}_{#1}\left[#2\right]}

\def\E#1{\EE{\,}{#1}}
\def\R{\mathbb{R}}
\newcommand{\cN}{\mathcal{N}}

\newcommand{\norm}[1]{\| #1 \|}

%


\setlength{\pdfpageheight}{11in}
\setlength{\pdfpagewidth}{8.5in}

\usepackage[round]{natbib}

\bibliographystyle{apalike}

\begin{document}

\twocolumn[

\aistatstitle{On the Privacy of Selection Mechanisms with Gaussian Noise}

\aistatsauthor{ Jonathan Lebensold \And Doina Precup \And  Borja Balle }

\aistatsaddress{ McGill University, Mila \And McGill University, Mila, Google DeepMind \And Google DeepMind } ]

\begin{abstract}
  Report Noisy Max and Above Threshold are two classical differentially private (DP) selection mechanisms. Their output is obtained by adding noise to a sequence of low-sensitivity queries and reporting the identity of the query whose (noisy) answer satisfies a certain condition. Pure DP guarantees for these mechanisms are easy to obtain when Laplace noise is added to the queries. On the other hand, when instantiated using Gaussian noise, standard analyses only yield approximate DP guarantees despite the fact that the outputs of these mechanisms lie in a discrete space. In this work, we revisit the analysis of Report Noisy Max and Above Threshold with Gaussian noise and show that, under the additional assumption that the underlying queries are bounded,  it is possible to provide pure ex-ante DP bounds for Report Noisy Max and pure ex-post DP bounds for Above Threshold. The resulting bounds are tight and depend on closed-form expressions that can be numerically evaluated using standard methods.  Empirically we find these lead to tighter privacy accounting in the high privacy, low data regime. Further, we propose a simple privacy filter for composing pure ex-post DP guarantees, and use it to derive a fully adaptive Gaussian Sparse Vector Technique mechanism. Finally, we provide experiments on mobility and energy consumption datasets demonstrating that our Sparse Vector Technique is practically competitive with previous approaches and requires less hyper-parameter tuning.
\end{abstract}

\section{INTRODUCTION}

Differential Privacy (DP) \citep{Dwork2006} has become the standard framework used for the private release of sensitive statistics. In particular, DP has been embraced by industry and governments to guarantee that potentially sensitive statistics cannot be linked back to individual users. For example, during the COVID-19 pandemic, Google Maps mobility data was published with DP \citep{Aktay2020-fi} in order for public health authorities to better understand various curve-flattening measures (such as work-from-home, or shelter-in-place). 
Recently, the Wikimedia Foundation deployed DP for their page-level visit statistics \citep{Desfontaines_tumult_wiki-nz}.

Underpinning the design of differentially private mechanisms is a fundamental trade-off between privacy and utility characterized by the Fundamental Law of Information Recovery stating that ``overly accurate answers to too many questions will destroy privacy in a spectacular way'' \citep{Dwork2014}. In practice this imposes a limit on how many statistics can be privately released to a desired accuracy within a pre-specified privacy budget. In applications where the space of possible statistics is too large to allow for a full private and accurate release, analysts can overcome the privacy-utility trade-off by identifying and releasing only those statistics that contain relevant information.
This is necessary for example in periodic data collection where users contribute many times to a dataset and relevant statistics must be released repeatedly (e.g.\ to report temporal trends, change points, extreme events, etc.) \citep{hu2021human, Wang2016-lf, Xu2017-on}.
A notable example is the use of smart meters in energy grids, where statistics can help manage electrical demand and encourage smoother consumption but can also be used to infer information like income, occupancy, etc. \citep{acs2011have, Bohli2010-bk, Haji_Mirzaee2022-mf}.

Private selection mechanisms aim to identify relevant statistics. This problem is usually framed as query selection: an analyst defines a collection of queries against the target dataset, and a private mechanism is used to identify queries returning ``abnormally'' large values. Two notable settings arise: the offline case, where all the queries can be specified in advance, and the online case, where the analyst can adaptively select queries based on the previous ones.
In the offline setting, one of the best known private selection mechanisms is \emph{Report Noisy Max} whereby the analyst submits a collection of low-sensitivity scalar queries. The mechanism then adds noise to the values of all the queries and returns the index of the query attaining the largest (noisy) value. The Exponential Mechanism \citep{Dwork2014} and Permute-and-flip are commonly used for offline selection and are generally considered best-in-class \citep{mckenna2020permute}. 

In the online setting, the cornerstone selection mechanism is \emph{Above Threshold}, where the analyst submits a threshold and a sequence of (potentially adaptive) low-sensitivity scalar queries to which the mechanism iteratively computes answers, until a noisy value exceeds the target threshold.
Running Above Threshold repeatedly is called the Sparse Vector Technique \citep{Dwork2009-od}, and is a common privacy primitive in change-point detection, empirical risk minimization, density estimation and other online learning algorithms \citep{Zhang2021-by, Ligett2017-accfirst, Dwork2014, cummings2018differential}. 

When Laplace noise is used, Above Threshold and Report Noisy Max offer \emph{pure} privacy guarantees -- the strongest type of DP guarantee which does not have to account for some small failure probability. The use of the Laplace distribution also has the advantage of making the mathematical analysis of the privacy guarantees relatively simple; however, since the noise distribution is less concentrated than the Gaussian distribution, it can lead to a less accurate mechanism.
Replacing the Laplace distribution with a Gaussian distribution is advantageous in many DP mechanisms, including online query selection \citep{Abadi2016-bm, zhu2020improving, papernot2018scalable}.

\paragraph{Contributions.}
In this paper we revisit the privacy analysis of the Above Threshold mechanism instantiated with Gaussian noise. Our main observation is that under the mild assumption that the queries submitted are uniformly bounded (in addition to low-sensitivity), we can provide pure DP guarantees that can be computed using standard numerical tools. In particular, we provide pure \emph{ex-post}\footnote{This is a guarantee that depends on the output produced by the mechanism; see \Cref{sec:svt} for details.} DP guarantees for Gaussian Above Threshold. In the process, we also develop an \emph{ex-ante} DP guarantee for Gaussian Report Noisy Max. Our analysis relies on identifying the worst case values for the query answers on a pair of neighboring datasets, a technique which might be of independent interest. Empirically, we find that these privacy bounds lead to tighter privacy accounting in the high privacy, low data regime, when compared to other Gaussian-based mechanisms.

Further, we define a meta-algorithm that composes Gaussian Above Threshold with ex-post DP guarantees. Thus, we derive a fully-adaptive Sparse Vector Technique (SVT), which we call \emph{Filtered Self-Reporting Composition} (FSRC). Our method is particularly appealing since an analyst need not choose the number of releases or the maximum number of queries up front.

Finally, we provide experiments on mobility and energy consumption datasets demonstrating that our analyses yield mechanisms that in practice match or outperform previous approaches. 

\section{PRELIMINARIES}

\paragraph{Differential Privacy.} Throughout we will be considering randomized mechanisms operating on some dataset $D \in \mathcal{X}$. 
Two datasets $D$ and $D'$ are said to be \textit{neighboring}, denoted $D \simeq D'$, if they differ in the data of a single individual (e.g. one user is added, removed, or replaced by another).

\begin{definition}[DP \citep{Dwork2006Calibrating}]
\label{def:approxdp}
Let $\epsilon \geq 0$ and $\delta \geq 0$. A randomized mechanism $M : \mathcal{X} \rightarrow \mathcal{O}$, is $(\epsilon, \delta)$-DP if for every pair of neighboring datasets $D \simeq D'$ and every subset $S \subseteq \mathcal{O}$, we have:
\begin{align*}
  \label{eq:puredp}
  \Pr [ M(D) \in S ] \leq e^{\epsilon} \Pr [ M(D') \in S ] + \delta \enspace.
\end{align*}
When $\delta = 0$, we write $\epsilon$-DP and say the mechanism satisfies \emph{pure DP}; otherwise we say that it satisfies \emph{approximate DP}. If the output space $\mathcal{O}$ is discrete, then a mechanism is $\epsilon$-DP if and only if $\Pr [ M(D) = o ] \leq e^{\epsilon} \Pr [ M(D') = o ]$ for all $o \in \mathcal{O}$ and $D \simeq D'$.
\end{definition}

A key feature of DP is its resilience to \emph{post-processing}: any function of the output of a DP mechanism still satisfies DP with the same (or better) parameters. A common way to establish that a mechanism satisfies differential privacy is through a high-probability bound on the privacy loss random variable.

\begin{restatable}[Privacy Loss \citep{Dinur2003}]{definition}{privacyloss}
    \label{def:privacy_loss}
    Let $M : \mathcal{X} \to \cal{O}$ be a randomized mechanism and consider neighboring datasets $D \simeq D'$. The \emph{privacy loss} of the pair $D$ and $D'$ under $M$ for a given output $o \in \cal{O}$ is defined as
    \begin{align*}
        \mathcal{L}_{M, D, D'} (o) &= 
        \log 
            \frac{
            \Pr[M(D) = o]
            }{
            \Pr[M(D') = o]
        }
        \enspace.
    \end{align*}
\end{restatable}

\begin{definition}[pDP\citep{Kasiviswanathan2008-ug}]
\label{lem:pDP_dp}
A mechanism $M: \mathcal{X} \to \cal{O}$ satisfies $(\epsilon, \delta)$-pDP if for any $D \simeq D' \in \cal{X}$,
\begin{align*}
    \Pr_{o \sim M(D)} \left [ \mathcal{L}_{M, D, D'} (o) > \epsilon \right ] \leq \delta \enspace.
\end{align*}
\end{definition}

If a mechanism is $(\epsilon, \delta)$-pDP, then it is also $(\epsilon, \delta)$-DP \citep{Kasiviswanathan2008-ug}.
A simple way to obtain pDP guarantees is through bounds on the moment-generating function of the privacy loss random variable; this is one of the motivations for Rényi Differential Privacy (RDP), which relies on a bound of the Rényi divergence between two distributions.

\begin{definition}[Rényi divergence~\citep{Renyi1961-st}]
  Let $\alpha > 1$. The Rényi divergence of order $\alpha$
  between two probability distributions $P$ and $Q$ on $\mathcal{X}$ is defined by:
\begin{equation*}\label{eq:Renyi_divergences}
  \mathbb{D}_{\alpha}(P||Q) \triangleq \frac{1}{\alpha - 1}
  \log \mathbb{E}_{o \sim Q }
  \left [ \frac{P(o)}{Q(o)} \right ]^\alpha.
\end{equation*}
\end{definition}

\begin{definition}[Rényi DP~\citep{mironov2017renyi}]
\label{def:rdp}
Let $\alpha > 1$ and $\epsilon \geq 0$. A randomized mechanism $M$ is $(\alpha, \epsilon)$-RDP for all $D \simeq D'$ if, $\mathbb{D}_{\alpha}(M(D) \| M(D') ) \leq \epsilon $.
\end{definition}

It is possible to convert RDP guarantees into probabilistic and approximate DP guarantees \citep{mironov2017renyi,DBLP:conf/aistats/BalleBGHS20}.
In particular, any $(\alpha, \epsilon)$-RDP mechanism satisfies $(\epsilon_p, \delta)$-pDP guarantees with $\epsilon_p = \epsilon + \log(1/\delta)/(\alpha-1)$ by Markov's inequality. RDP greatly simplifies the analysis of mechanisms based on Gaussian noise because the R{\'e}nyi divergence between Gaussian distributions has a simple expression, as well as  a simple analysis under composition.

\paragraph{Gaussian Mechanism.} Adding Gaussian noise to the result of a low-sensitivity query is a staple of differentially private mechanism design.
Let $q : \mathcal{X} \to \R^d$ be a query with global sensitivity $\Delta_q = \sup_{D \simeq D'} \norm{q(D) - q(D')}_2$ and $Z \sim \mathcal{N}(0, \sigma^2 I)$. The Gaussian Mechanism defined as $M(D) = q(D) + Z$
satisfies $(\alpha, \epsilon)$-RDP with $\epsilon = \frac{\alpha \Delta_q^{2}}{2 \sigma^2}$ for every $\alpha > 1$ \citep{mironov2017renyi}.
It is possible to convert this RDP guarantee into an approximate DP guarantee, although tighter approximate DP bounds can be obtained directly \citep[Theorem 8]{balle2018_agm}.
\paragraph{Gaussian Report Noisy Max.}
In some applications it is useful to privately select which among a collection of queries $q_1, \ldots q_d : \mathcal{X} \to \R$ (approximately) attains the largest value on a given dataset. This leads to the \emph{report noisy max} mechanism -- when instantiated using Gaussian noise, the mechanism is given by,
$M(D) = \argmax_{i \in [d]} q_i(D) + Z_i$,
where $Z_1, \ldots, Z_d \sim \mathcal{N}(0, \sigma^2)$.
Since the $\argmax$ operation is merely a post-processing of the Gaussian mechanism applied to the $d$-dimensional query $q = (q_1, \ldots, q_d) : \mathcal{X} \to \R^d$ with sensitivity $\Delta_q$, the Gaussian Report Noisy Max mechanism inherits the same privacy guarantees as the Gaussian mechanism above (e.g.\ the bounds provided by \citet[Theorem 8]{balle2018_agm} or the RDP to DP conversion). Note that if each of the individual queries has sensitivity bounded by $\Delta$, then we have $\Delta_q \leq \sqrt{d} \Delta$.

\paragraph{Gaussian Above Threshold.} The Above Threshold mechanism receives a sequence of low-sensitivity queries and privately identifies the first query (approximately) exceeding a given threshold. The mechanism was introduced by \citet{Dwork2009-od} and forms the basis of the Sparse Vector Technique, a composition of Above Threshold algorithms to find a sparse set of relevant queries among a large set. The standard version of the Above Threshold algorithm uses Laplace noise, and its privacy analysis is notoriously subtle \citep{Lyu2016-ox}. \citet{zhu2020improving} recently proposed an RDP analysis which can be applied to the Gaussian version of Above Threshold (see \Cref{alg:Gaussian_at}).

\begin{algorithm}[H]
    \SetAlgoLined
    \SetKwInOut{Input}{input}
    \Input{dataset $D$; noise parameters $\sigma_X, \sigma_Z$; a stream of queries $q_1, q_2, \ldots$; threshold $\rho$.}
    
    $\hat{\rho} = \rho + \mathcal{N}(0, \sigma_X^2)$
    
    \For{$t = 1, 2, \ldots$}{
        
        $\hat{q_t} = q_t(D) + \mathcal{N}(0, \sigma_Z^2)$
        
        \uIf{$\hat{q_t} \geq \hat{\rho}$}{

            Output $x_t = \top$ and HALT
        }
        \Else{
        
            Output $x_t = \bot$
            
        }
    }
    \caption{Gaussian Above Threshold \citep{zhu2020improving}}
    \label{alg:Gaussian_at}
\end{algorithm}

\begin{theorem}[General RDP Bound on Gaussian Above Threshold \citep{zhu2020improving}]
    \label{thm:zw2020_thm8} 
    Suppose all queries given to the mechanism $M$ in Algorithm~\ref{alg:Gaussian_at} have sensitivity bounded by $\Delta$. Then for $\gamma > 1$, $\infty > \alpha >1$, 
    \begin{align*}
        \mathbb{D}_\alpha\left(M(D) \| M(D^{\prime})\right) &\leq 
        \left ( \frac{\gamma}{\gamma-1} \right )
        \left ( \frac{\alpha \Delta^2}{2 \sigma_X^2} \right ) \\
        &+\frac{2 \alpha \Delta^2}{\sigma_Z^2} 
        +\frac{\log  \mathbb{E}_{x \sim \mathcal{N}(0, \sigma_X^2)}\left[\mathbb{E}[ T \mid x]^\gamma\right] }{\gamma(\alpha-1)} ,
    \end{align*}
     where $T$ is a random variable indicating the stopping time of $M(D)$.
\end{theorem}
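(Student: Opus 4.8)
The plan is to first reduce the problem to the \emph{stopping time} $T$, then condition on the noisy threshold, and finally decouple the two independent noise sources ($\sigma_X$ and $\sigma_Z$) via a Hölder split whose exponents are governed by $\gamma$. Every run outputs a string $\bot^{k-1}\top$ (or the all-$\bot$ string if it never halts), so the map from outputs to stopping times $T\in\{1,2,\dots\}\cup\{\infty\}$ is a bijection; by the data-processing invariance of \renyi divergence it suffices to bound $\mathbb{D}_\alpha(P\|Q)$, where $P,Q$ are the laws of $T$ under $D,D'$. Conditioned on $\hat\rho=\rho+x$ with $x\sim\mathcal{N}(0,\sigma_X^2)$, the per-step decisions use fresh independent query noise, so $\Pr[T=k\mid x]=\big(\prod_{t<k}(1-a_t(x))\big)a_k(x)$, where $a_t(x)=\Pr[\hat q_t\ge\hat\rho\mid\hat\rho=\rho+x]$ under $D$ and $b_t(x)$ is the analogue under $D'$. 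Adaptivity is harmless because along the surviving all-$\bot$ prefix the query sequence is deterministic, so each $a_t(x)$ is well defined on that path. Since $a_t(x)=\bar\Phi\big((\rho+x-q_t(D))/\sigma_Z\big)$ and $|q_t(D)-q_t(D')|\le\Delta$, we get the shift identity $b_t(x)=a_t(x-\delta_t)$ with $|\delta_t|\le\Delta$. Were all $\delta_t$ equal, $Q$ would be $P$ with a single threshold shift and only the Gaussian term $\alpha\Delta^2/(2\sigma_X^2)$ would appear; the remaining two terms arise precisely because the $\delta_t$ may disagree.

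\textbf{Decoupling the noise sources.} Starting from $\exp((\alpha-1)\mathbb{D}_\alpha(P\|Q))=\mathbb{E}_{k\sim Q}\big[(P(k)/Q(k))^\alpha\big]$, I would factor the likelihood ratio $P(k)/Q(k)$ as a \emph{pure threshold shift} — a single shift of $\hat\rho$ by the worst-case $\Delta$ that aligns the below-threshold behaviour of $D$ and $D'$ — times a \emph{residual} capturing the halting comparison and the per-step discrepancies left after that shift. Hölder's inequality with conjugate exponents $\gamma/(\gamma-1)$ and $\gamma$ then splits the \renyi moment into (i) a threshold factor, a \renyi moment between two $\mathcal{N}(\cdot,\sigma_X^2)$ laws with mean gap $\Delta$, and (ii) a query factor retaining the dependence on $\sigma_Z$ and on the number of below-threshold steps taken. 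The exponent $\gamma/(\gamma-1)$ on the first factor produces the coefficient $\tfrac{\gamma}{\gamma-1}\cdot\tfrac{\alpha\Delta^2}{2\sigma_X^2}$, while the $1/\gamma$ power on the second factor yields the term $\tfrac{1}{\gamma(\alpha-1)}\log\mathbb{E}_{x}\!\big[\mathbb{E}[T\mid x]^\gamma\big]$ after taking $\tfrac{1}{\alpha-1}\log$.

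\textbf{Bounding the factors.} The threshold factor is controlled by the closed-form Gaussian \renyi divergence (mean gap $\Delta$, variance $\sigma_X^2$) at the order dictated by $\alpha$ and the Hölder exponent. For the query factor, I would bound the per-step survival ratios $(1-a_t(x))/(1-b_t(x))$ and the halting ratio $a_k(x)/b_k(x)$ by Gaussian density/tail estimates: each lies within a controlled multiple of $1$ scaling like $\Delta/\sigma_Z$, the number of factors equals the stopping time, and collecting them yields both the coefficient $2\alpha\Delta^2/\sigma_Z^2$ and the moment $\mathbb{E}_x[\mathbb{E}[T\mid x]^\gamma]$ counting the below-threshold steps. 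Taking logarithms and summing the two contributions gives the stated inequality.

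\textbf{Main obstacle.} The crux is the classic subtlety of Above Threshold: under a dataset change the $\bot$ events and the final $\top$ event move the comparison in \emph{opposite} directions, so no single shift can absorb the whole discrepancy. The real work is making the Hölder split clean — deciding which part of each $\delta_t$ to charge to the shared threshold noise (paid for by $\sigma_X$) versus the fresh query noise (paid for by $\sigma_Z$), and then verifying that the leftover per-step factors assemble exactly into $\mathbb{E}_x[\mathbb{E}[T\mid x]^\gamma]$ rather than an uncontrolled product. Correctly handling the mixture over the threshold noise $x$ (so that the conditional bounds recombine, not merely the conditional divergences) is the other delicate point, and is why the Gaussian threshold noise and the explicit outer expectation $\mathbb{E}_x[\,\cdot\,]$ survive into the final bound.
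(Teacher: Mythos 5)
This statement is not proved in the paper at all: it is Theorem~8 of \citet{zhu2020improving}, imported verbatim as background, so there is no in-paper proof to compare your attempt against. Judged on its own terms, your sketch correctly reconstructs the architecture of the original argument --- reduce to the law of the stopping time, condition on the threshold noise, absorb all the $\bot$ comparisons with a single worst-case shift of $\hat\rho$ by $\Delta$ (so that only the final $\top$ comparison, with effective sensitivity $2\Delta$, is charged to $\sigma_Z$, yielding $\alpha(2\Delta)^2/(2\sigma_Z^2)=2\alpha\Delta^2/\sigma_Z^2$), and pay for that shift via H\"older on the outer expectation with conjugate exponents $\gamma/(\gamma-1)$ and $\gamma$. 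Those are exactly the right ingredients and the right places for $\gamma$ and the two noise scales to enter.

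There is, however, a genuine gap in how you claim the term $\mathbb{E}_x[\mathbb{E}[T\mid x]^\gamma]$ emerges. You write that the per-step survival ratios each lie ``within a controlled multiple of $1$ scaling like $\Delta/\sigma_Z$, the number of factors equals the stopping time, and collecting them yields \ldots{} the moment $\mathbb{E}_x[\mathbb{E}[T\mid x]^\gamma]$.'' Multiplying $T$ per-step factors each bounded by $e^{c\Delta/\sigma_Z}$ produces a quantity exponential in $T$, not the first conditional moment $\mathbb{E}[T\mid x]=\sum_{k}\Pr[T\ge k\mid x]$; an argument of that shape cannot give the stated bound. The whole point of the one-sided threshold shift is that after it the $\bot$ ratios are at most $1$ and therefore contribute \emph{no} accumulating per-step cost; the linear-in-$T$ quantity then arises from summing the surviving products $\prod_{i<k}(1-b_i(x+\Delta))\le\Pr[T\ge k\mid \cdot]$ over the output index $k$ when you assemble $\sum_k P(k)^\alpha Q(k)^{1-\alpha}$, not from counting factors along a single trajectory. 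You flag this assembly as ``the real work'' and leave it undone, so the proposal stands as a plausible plan whose decisive computation --- the one that distinguishes a $\mathbb{E}[T\mid x]^\gamma$ bound from a vacuous exponential-in-$T$ one --- is missing.
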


Unlike in the Laplace-based Above Threshold, the privacy bound for the Gaussian case depends on how long it takes the mechanism to stop (the third term in the expression above). When the queries are known a priori to be non-negative, it is possible to obtain bounds on the running time to provide the RDP guarantee below.

\begin{theorem}[Gaussian Above Threshold RDP \citep{zhu2020improving}]
    \label{lem_Gaussian_at}
    
    Gaussian Above Threshold, with $\sigma_Z \geq \sqrt{3} \sigma_X$, threshold $\rho \geq 0$, and $\Delta$-sensitive, non-negative queries, satisfies $(\alpha, \epsilon)$-RDP with
    \begin{align*}
        \epsilon = \frac{\alpha \triangle^2}{\sigma_X^2}+\frac{2 \alpha \triangle^2}{\sigma_Z^2}+\frac{\log \left(1+2 \sqrt{3} \pi\left(1+\frac{9 \rho^2}{\sigma_X^2}\right) e^{\frac{\rho^2}{\sigma_X^2}}\right)}{2(\alpha-1)} \enspace .
    \end{align*}
\end{theorem}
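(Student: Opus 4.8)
The plan is to specialize \Cref{thm:zw2020_thm8} to $\gamma = 2$ and then bound the stopping-time term using the non-negativity hypothesis. Setting $\gamma = 2$ makes the prefactor $\gamma/(\gamma-1)$ equal to $2$, so the first term collapses to $\alpha\Delta^2/\sigma_X^2$ while the second term $2\alpha\Delta^2/\sigma_Z^2$ is untouched; these are exactly the first two terms of the claimed $\epsilon$. By \Cref{def:rdp} it then suffices to show that the third term of \Cref{thm:zw2020_thm8}, namely $\frac{1}{2(\alpha-1)}\log\mathbb{E}_{x\sim\mathcal{N}(0,\sigma_X^2)}[\mathbb{E}[T\mid x]^2]$, is at most $\frac{1}{2(\alpha-1)}\log(1+2\sqrt{3}\pi(1+9\rho^2/\sigma_X^2)e^{\rho^2/\sigma_X^2})$; equivalently, it suffices to bound $\mathbb{E}_x[\mathbb{E}[T\mid x]^2]$ by the argument of this logarithm.

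First I would control $\mathbb{E}[T\mid x]$. Fix the threshold noise $x$, so that the noisy threshold $\hat\rho = \rho + x$ is deterministic, and let $\bar\Phi$ denote the standard Gaussian survival function. At each round the mechanism of \Cref{alg:Gaussian_at} continues only when $q_t(D) + Z_t < \hat\rho$ with $Z_t \sim \mathcal{N}(0,\sigma_Z^2)$; since $q_t(D) \ge 0$ this event is contained in $\{Z_t < \hat\rho\}$, so the per-round continuation probability is at most $\Phi(\hat\rho/\sigma_Z)$ irrespective of the (possibly adaptive) history. Hence $\Pr[T > t \mid x] \le \Phi(\hat\rho/\sigma_Z)^t$, and summing the geometric series gives $\mathbb{E}[T\mid x] \le 1/\bar\Phi((\rho+x)/\sigma_Z)$. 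Thus it remains to bound the Gaussian integral $\mathbb{E}_x[\bar\Phi((\rho+x)/\sigma_Z)^{-2}]$.

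The crux of the argument is this integral. The key ingredient is a Mills-ratio-type lower bound on $\bar\Phi$ of the form $\bar\Phi(t) \ge e^{-t^2/2}/(\sqrt{2\pi}\,g(t))$ for a suitable polynomial-growth factor $g$, which turns the integrand into $e^{t^2}$ times a polynomial in $t = (\rho+x)/\sigma_Z$. The hypothesis $\sigma_Z \ge \sqrt{3}\sigma_X$ enters precisely here: the combined exponent $-x^2/(2\sigma_X^2) + (\rho+x)^2/\sigma_Z^2$ has $x^2$-coefficient $-\frac{1}{2\sigma_X^2} + \frac{1}{\sigma_Z^2}$, which is negative (so the integral converges) as soon as $\sigma_Z > \sqrt{2}\sigma_X$, and replacing $1/\sigma_Z^2$ by its largest allowed value $1/(3\sigma_X^2)$ reduces everything to the boundary case $\sigma_Z = \sqrt{3}\sigma_X$ where the constants come out cleanly. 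Completing the square there rewrites the exponent as $\rho^2/\sigma_X^2 - (x-2\rho)^2/(6\sigma_X^2)$, pulling out the factor $e^{\rho^2/\sigma_X^2}$ and leaving a Gaussian centered at $x = 2\rho$ with variance $3\sigma_X^2$; under this shifted measure $t = (\rho+x)/\sigma_Z$ has mean $\sqrt{3}\rho/\sigma_X$ and unit variance, so the expectation of the polynomial prefactor grows like $\rho^2/\sigma_X^2$, which is what produces the $9\rho^2/\sigma_X^2$ term inside the logarithm. The leading constant $2\sqrt{3}\pi$ likewise appears mechanically, as the ratio of Gaussian normalizations $\sqrt{6\pi}/\sqrt{2\pi} = \sqrt{3}$ times the $2\pi$ carried by the $e^{t^2}$ bound, and the additive $1$ is obtained by integrating the constant part of the integrand exactly.

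I expect the main obstacle to be the last step, and specifically matching the stated constants rather than merely obtaining a finite bound: the naive uniform survival bound $\bar\Phi(t) \ge \phi(t)/(|t|+1)$ is easy to justify but too loose to recover $2\sqrt{3}\pi$, $(1+9\rho^2/\sigma_X^2)$ and the additive $1$ simultaneously, so a sufficiently sharp survival-function estimate together with careful bookkeeping of the lower-order polynomial terms after completing the square are what make the closed form tight. Everything upstream --- the choice $\gamma = 2$ and the geometric domination of the stopping time --- is routine once the non-negativity of the queries is invoked.
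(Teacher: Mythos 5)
First, note that the paper does not prove this statement at all: \cref{lem_Gaussian_at} is imported verbatim from \citet{zhu2020improving} and stated without proof, so there is no in-paper argument to compare against. Your overall route --- specialize \cref{thm:zw2020_thm8} to $\gamma=2$, use non-negativity of the queries to dominate the per-round continuation probability by $\Phi((\rho+x)/\sigma_Z)$, sum the geometric series to get $\mathbb{E}[T\mid x]\le 1/\bar\Phi((\rho+x)/\sigma_Z)$, and then evaluate the Gaussian integral of $\bar\Phi^{-2}$ via a Mills-ratio lower bound and completion of the square --- is exactly the strategy of the original source, and several of your intermediate checks are right: the first two terms match at $\gamma=2$, the exponent does complete to $\rho^2/\sigma_X^2-(x-2\rho)^2/(6\sigma_X^2)$ at $\sigma_Z=\sqrt3\sigma_X$, the normalization ratio is $\sqrt3$, and under the tilted measure $u=(\rho+x)/\sigma_Z$ has mean $\sqrt3\rho/\sigma_X$ and unit variance.

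The gap is precisely at the step you flag as the crux, and it is not merely a matter of bookkeeping. (i) You never commit to a specific survival-function inequality, and the natural candidates do not work: $\bar\Phi(t)\ge\phi(t)/\sqrt{1+t^2}$ is false (check $t=1$), while $\bar\Phi(t)\ge t\phi(t)/(1+t^2)$ blows up at $t=0$ and $\bar\Phi(t)\ge\phi(t)/(1+t)$ yields constants of the form $4\pi(2+3\rho^2/\sigma_X^2)$ rather than $2\pi(1+9\rho^2/\sigma_X^2)$ after tilting. (ii) Your own accounting of the constants is internally inconsistent: with tilted mean $\sqrt3\rho/\sigma_X$ you get $\mathbb{E}[u^2]=1+3\rho^2/\sigma_X^2$, which is off by a factor of $3$ from the claimed $9\rho^2/\sigma_X^2$ (that quantity is $(3\rho/\sigma_X)^2$, i.e.\ the tilted mean of $(\rho+x)/\sigma_X$, not of $(\rho+x)/\sigma_Z$ --- a sign that the source's inequality is phrased in units of $\sigma_X$). (iii) The additive $1$ cannot come from ``integrating the constant part exactly'': the region $\rho+x\le 0$, handled via $\bar\Phi\ge 1/2$, contributes $4$, not $1$, so one needs a different decomposition (e.g.\ expanding $1/\bar\Phi=1+\Phi/\bar\Phi$ before squaring) that you do not supply. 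As written the proposal establishes that the third term is finite for $\sigma_Z>\sqrt2\sigma_X$ and has the right exponential factor $e^{\rho^2/\sigma_X^2}$, but it does not recover the stated closed form; the decisive inequality is missing.
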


Alternative methods to bound the running time often include the analyst supplying a bound $k$ on the running time to the algorithm, forcing it to stop after a certain number of steps even if the threshold has not been exceeded.
\citet{zhu2020improving} also propose a number of composition results for the Sparse Vector Technique. One version allows the analyst to continue releasing queries without having to re-sample the threshold noise $\sigma_X$, but in each case the analyst must know ahead of time the number of times they wish to release a ``$\top$", and have an upper bound on the maximum number of queries they intend to run. 

\section{PURE PRIVATE GAUSSIAN MECHANISMS}\label{sec:svt}
\vspace{-0.5\baselineskip}
To introduce our main contribution, we begin with a warm-up. We propose a pure DP bound for Gaussian Report Noisy Max. The proof techniques are the same for Above Threshold, but the analysis is simpler since all the queries are symmetric.
\subsection{Warm-Up: Gaussian Report Noisy Max}\label{sec:rnmax}

The following is a pure DP bound for Gaussian Report Noisy Max for queries with bounded range.

\begin{restatable}[Pure DP for Gaussian Report Noisy Max]{theorem}{dpGNMAX}
\label{thm_expost_gnmax}

Let $M$ be the Gaussian Report Noisy Max mechanism with standard deviation $\sigma$ applied to $d > 1$ bounded queries $q_1, \ldots, q_d: \mathcal{X} \to [a, b]$, each with sensitivity bounded by $\Delta$. Let $c = b - a$. Let $\Phi(\cdot)$ be the standard Gaussian CDF. Then $M$ satisfies $\epsilon$-DP with
\begin{align*}
\epsilon
&=
\frac{
    \mathbb{E}_{z \sim \mathcal{N}(0, 1)} \left [
        \Phi \left(z - \frac{c - 2 \Delta}{\sigma}\right)^{d-1}
        \right ]
            }
            {
    \mathbb{E}_{z \sim \mathcal{N}(0, 1)} \left [
        \Phi \left (
        z - \frac{
        c
        }{
        \sigma
        }
        \right )^{d-1}
        \right ]
            }
\enspace.
\end{align*}
\end{restatable}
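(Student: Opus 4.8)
The plan is to use the discrete characterization of pure DP from \Cref{def:approxdp}: since the $\argmax$ in $M$ is almost surely unique, the output space is $[d]$, and it suffices to upper bound the ratio $\Pr[M(D)=o]/\Pr[M(D')=o]$ over all indices $o$ and all neighbors $D \simeq D'$; the displayed expression is then exactly the worst-case value of this ratio, i.e.\ $e^{\epsilon}$. By relabeling I fix the output to be $o = 1$ and write $\mu_j = q_j(D)$. Conditioning on the noise $Z_1 = \sigma z$ attached to query $1$ and using independence of the $d$ noise terms, query $1$ is the maximizer precisely when $Z_j \le (\mu_1 - \mu_j) + \sigma z$ for every $j \neq 1$, which gives
\begin{equation*}
\Pr[M(D)=1] = \mathbb{E}_{z \sim \mathcal{N}(0,1)}\Bigl[\textstyle\prod_{j \neq 1}\Phi\bigl(z + \tfrac{\mu_1 - \mu_j}{\sigma}\bigr)\Bigr],
\end{equation*}
and the same formula with $\mu_j' = q_j(D')$ holds for $D'$.

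It then remains to maximize the ratio of these two expectations over all feasible query values, that is over $\mu_j, \mu_j' \in [a,b]$ subject to the per-query sensitivity constraint $|\mu_j - \mu_j'| \le \Delta$. Each factor is increasing in the corresponding gap $\mu_1 - \mu_j$, so the ratio grows when query $1$ wins by more on the numerator dataset and by less on the denominator dataset. The only coupling between the two datasets is the sensitivity bound, which lets the two values of $\mu_1$ differ by at most $\Delta$ and the two values of each $\mu_j$ differ by at most $\Delta$; hence the \emph{relative} gap $(\mu_1 - \mu_j) - (\mu_1' - \mu_j')$ can be made at most $2\Delta$, which is the origin of the difference between $c$ and $c-2\Delta$ in the statement. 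This points to extreme configurations in which all competitors share one boundary of $[a,b]$ and query $1$ sits at the opposite boundary, with the neighbor shifting query $1$ and every competitor by $\Delta$ in the directions that enlarge the relative gap.

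The crux is to decide \emph{which} extreme is worst and to rule out asymmetric interior configurations. After fixing the per-query shifts at $\pm\Delta$, the reduced problem compares $g(s)/g(s - 2\Delta/\sigma)$ across admissible common gaps $s$, where $g(s) = \mathbb{E}_{z}[\Phi(z+s)^{d-1}]$. The key structural fact is that $g$ is \emph{log-concave}: its integrand $\phi(z)\,\Phi(z+s)^{d-1}$ is log-concave in $(z,s)$ because $\Phi$ is log-concave and log-concavity is preserved under affine substitution, nonnegative powers, and products, and it is preserved by integration through Pr\'ekopa's theorem. Log-concavity of $g$ makes $s \mapsto \log g(s) - \log g(s - 2\Delta/\sigma)$ nonincreasing, so the ratio is largest at the smallest admissible $s$ -- the configuration in which the reported query is \emph{least} likely to win. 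Intuitively, boosting an unlikely winner multiplies a tiny probability by the largest factor, which is the tail behaviour that dominates the privacy loss; this is why the worst case is the ``low'' extreme (query $1$ at $a$, competitors at $b$ for $D$, versus query $1$ at $a+\Delta$, competitors at $b-\Delta$ for $D'$) rather than its mirror image. Substituting these values yields gaps $-c/\sigma$ and $-(c-2\Delta)/\sigma$ and hence exactly the displayed expression.

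The main obstacle I anticipate is this last step: promoting the one-dimensional monotonicity to the genuinely multivariate optimization, where the competitors need not be equal and the shared value $\mu_1$ couples all the gaps simultaneously. I expect the cleanest route is to exploit exchangeability of the competitors together with the joint log-concavity of $(s_1,\dots,s_{d-1}) \mapsto \mathbb{E}_z[\prod_j \Phi(z+s_j)]$ (again Pr\'ekopa) to argue that the maximizing configuration is symmetric and extremal, thereby reducing to the common-gap case above. Verifying that the ratio is monotone coordinatewise, rather than only along the symmetric direction, is the delicate part, and it is precisely here that the bounded-range assumption is essential, since it is what makes the extremal query values well defined and the maximum finite.
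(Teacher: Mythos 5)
Your setup coincides with the paper's: you compute $\Pr[M(D)=o]$ as a Gaussian expectation of a product of shifted CDFs, reduce to a relative per-query shift of $2\Delta$ between the two datasets, and then try to locate the worst-case query values. The first two steps are correct and match \Cref{lem:gnmax:sensitivity}. The gap is in the last step, and you have flagged it yourself: after the reduction one must maximize
\begin{align*}
(y_1,\dots,y_{d-1}) \;\longmapsto\; \frac{\mathbb{E}_{z}\left[\prod_{i=1}^{d-1}\Phi\left(z-\tfrac{y_i-2\Delta}{\sigma}\right)\right]}{\mathbb{E}_{z}\left[\prod_{i=1}^{d-1}\Phi\left(z-\tfrac{y_i}{\sigma}\right)\right]}
\end{align*}
over a box, and placing the maximum at the symmetric corner requires \emph{coordinatewise} monotonicity of this ratio. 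Your proposed tool, joint log-concavity of $G(s_1,\dots,s_{d-1})=\mathbb{E}_z[\prod_j\Phi(z+s_j)]$ via Pr\'ekopa, does not deliver this: writing $H=\nabla^2\log G$, coordinatewise monotonicity of $\log G(\mathbf{s}+h\mathbf{1})-\log G(\mathbf{s})$ needs $(H\mathbf{1})_j\le 0$ for every $j$, and $H\preceq 0$ does not imply that (e.g.\ a negative definite $2\times2$ matrix with diagonal $-10,-1$ and off-diagonal $3$ has $H\mathbf{1}=(-7,2)^{\top}$). Nor does exchangeability of the objective force the maximizer to be symmetric, since a ratio of log-concave functions is a difference of concave functions after taking logs and need not be quasi-concave. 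Your one-dimensional Pr\'ekopa argument for the common-gap slice is correct and is a pleasant alternative for that slice, but it does not substitute for the multivariate claim, which is exactly the step that carries the theorem.

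The paper closes this gap with \Cref{lem:gnmax:monotone}: fixing all but one coordinate, it applies l'H\^opital's monotone rule (\Cref{thm_hospital_monotone}) to the single-variable ratio $f/g$, and observes that $f'/g'$ collapses to a ratio of moment generating functions $\mathbb{E}_{Z\sim p}[e^{(x+t)Z}]/\mathbb{E}_{Z\sim p}[e^{xZ}]$ of one fixed tilted density $p$ independent of $x$, which is monotone because the cumulant generating function is convex (\Cref{lem_mgf_increasing}). An equivalent elementary route you could use instead is a one-dimensional correlation (Chebyshev/FKG) inequality: the derivative of the ratio in $x_1$ has the sign of $\mathrm{Cov}_{\nu}\bigl(e^{tz},\,-\phi(z-x_1)/\Phi(z-x_1)\bigr)$ under the tilted measure $\nu\propto \prod_i\Phi(z-x_i)\,\phi(z)\,dz$, and both arguments are increasing in $z$ since the Mills-type ratio $\phi/\Phi$ is decreasing. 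Either argument completes your proof. Separately, a small but real slip: your stated extremal configuration interchanges $D$ and $D'$ (as written it puts the \emph{smaller} gap on the numerator dataset and would yield the reciprocal of the displayed expression); the correct worst case is $q_1(D)=\cdots=q_{d-1}(D)=b-\Delta$, $q_d(D)=a+\Delta$, $q_1(D')=\cdots=q_{d-1}(D')=b$, $q_d(D')=a$.
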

Notably, with the standard analysis of Gaussian Report Noisy Max, the privacy guarantee only depends on the ratio $\Delta / \sigma$. However in our case, the bound on privacy additionally depends on the range of the queries through $c/\sigma$. In fact, it is easy to see that if $c \to \infty$ then Gaussian Report Noisy Max cannot admit a pure DP bound. 

The proof, deferred to the supplemental, relies on identifying the worst-case values that uniformly bounded queries can attain on a pair of neighboring datasets. In particular, we show that the worst case is obtained on a pair of neighboring datasets $D$ and $D'$ such that $q_1(D) = \cdots = q_{d-1}(D) = b - \Delta$, $q_d(D) = a + \Delta$, $q_1(D') = \cdots = q_{d-1}(D') = b$, and $q_d(D') = a$.

Note that the classical approach sketched in the previous section applied to our setting would consider the privacy of a Gaussian mechanism with a $d$-dimensional query of sensitivity $\Delta_q = \Delta \sqrt{d}$, and treat the $\argmax$ operation as a post-processing step.
While our approach gives a pure DP guarantee, the classical approach does not because Gaussian noise by itself cannot provide that guarantee. However, the classical approach gives a bound that is easy to compute. 

The RDP approach gives a simple closed form expression, while the direct approximate DP approach gives a bound that has a simple dependence on the CDF of a standard Gaussian \citep{balle2018_agm}. In contrast, the bound provided by our result requires estimating Gaussian expectations of a complex function; unfortunately, this does not admit a simple closed form expression. In ~\cref{sec:numerics} we evaluate numerical methods for computing this quantity and compare the resulting values of $\epsilon$ with those provided by the classical approach.
\vspace{-1.5\baselineskip}
\begin{center}
\begin{figure*}
    \centering
    \includegraphics[width=1\linewidth]{./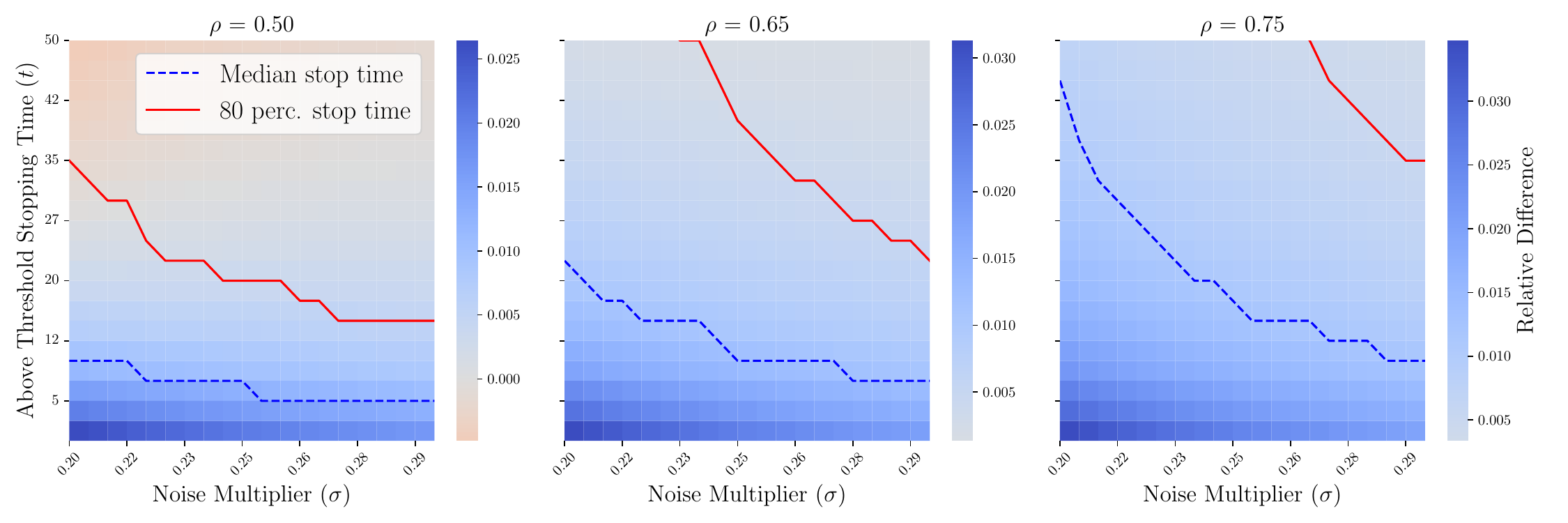}
    \caption{
    Privacy accounting for $\Delta = 1\mathrm{e}{-3}, \delta = 1\mathrm{e}{-5}$. The heatmap shows where the ex-post Above Threshold Analysis offers an improvement over the Gaussian Above Threshold. As a comparison, we simulate expected stopping times for a range of multipliers, for $[a,b] = [0, 1]$. The \textcolor{blue}{blue dotted line} corresponds to the median stopping time when simulating 10k trials with a worst-case dataset. The \textcolor{red}{red line} corresponds to the 80th percentile.
    The plot shows a range of hyper-parameters as well as where the worst-case dataset is likely to halt. Our bounds provide improvements over the baseline below the blue line when squares are blue.
    }    
    \label{fig:at-heatmap}
\end{figure*}
\end{center}
\vspace{-0.4\baselineskip}

\subsection{Ex-Post Gaussian Above Threshold}
In the preceding section, we introduced a method to judge the privacy of Above Threshold before execution (\cref{lem_Gaussian_at}) -- these are often referred to as \emph{ex-ante} privacy guarantees. Alternatively, we can measure the privacy loss after execution, leading to so-called \emph{ex-post} privacy guarantees. 

\begin{definition}[Ex-Post DP \citep{Ligett2017-accfirst}]
Consider a function $\epsilon_{\text{p}}: \cal{O} \to \R_+ \cup \{\infty\}$.
A randomized mechanism $M : \mathcal{X} \to \cal{O}$ satisfies $\epsilon_{\text{p}}$-ex-post-DP if for any possible output $o \in \cal{O}$ and any pair of neighboring datasets we have $\mathcal{L}_{M, D, D'}(o) \leq \epsilon_{\text{p}}(o)$.
\end{definition}

Since Above Threshold outputs the (approximate) halting time, we can exploit the difference between when the ex-post and ex-ante privacy guarantee. In \cref{fig:at-heatmap}, we observe when the ex-post analysis is most beneficial, and when it is most likely to occur. As the public threshold increases, so does the separation between the ex-ante analysis and the ex-post analysis. With Above Threshold, we also need to consider how the mechanism might perform against a worst-case dataset maximizing the stopping time. Since queries are non-negative and bounded, this occurs when $q_1(D) = \cdots = q_t(D) = 0$. We then compute the median stopping time (dashed blue line) and the 80th percentile (solid red line) for the worst case dataset. For bounded range between zero and one, we see the greatest effect when the mechanism halts early and $\rho$ is calibrated to be closer to one.

When the queries have bounded range, Gaussian Above Threshold (\cref{alg:Gaussian_at}) admits the following ex-post privacy bound.
\begin{restatable}[Pure Ex-post Gaussian Above Threshold]{theorem}{expostAT}
\label{thm_expost_at_f}
Let
\begin{align*}
    \psi_{\xi}(x) &= \Phi \left ( \left(x + \rho - (b + a) + \xi \right ) / \sigma_Z \right ) \enspace,
    \text{and, } \\
    \beta_{\xi}(x) &= \Phi\left ( \left ( x - \rho + a + \xi \right ) / \sigma_Z \right ) \enspace.
\end{align*}
Given a stream $q_1, q_2, \ldots : \mathcal{X} \to [a,b]$ with global sensitivity $\Delta$, the Gaussian Above Threshold mechanism (Algorithm~\ref{alg:Gaussian_at}), halting at time step $t$ with $o=\{\bot^{t-1} \top \}$, satisfies $\epsilon_{\text{p}}$-ex-post-DP with
\begin{align*}
    \epsilon_{\text{p}}(o)  
        &= 
    \frac{
\mathbb{E}_{x \sim \mathcal{N}(0, 1)} \left [
    \psi_{\Delta}(\sigma_X x)^{(t - 1)}
    \cdot 
    \beta_{\Delta}(\sigma_X x)
    \right ]
        }
        {
\mathbb{E}_{x \sim \mathcal{N}(0, 1)} \left [
    \psi_{0}(\sigma_X x)^{(t - 1)}
    \cdot 
    \beta_{0}(\sigma_X x)
    \right ]
        }
\enspace.
\end{align*}
\end{restatable}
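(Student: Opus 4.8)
The plan is to compute the privacy loss $\mathcal{L}_{M,D,D'}(o)$ for the fixed output $o=\{\bot^{t-1}\top\}$ exactly, and then maximize it over all neighboring pairs subject to the range and sensitivity constraints; the resulting supremum is the ex-post bound $\epsilon_{\text{p}}(o)$. First I would condition on the single draw of the threshold noise $X\sim\mathcal{N}(0,\sigma_X^2)$, which is shared across all steps. Given $X=x$ the events at distinct time steps depend on independent query noises, so the trajectory probability factors: each of the first $t-1$ steps contributes a ``below threshold'' factor $\Pr[q_s(D)+Z_s<\rho+x]=\Phi((\rho+x-q_s(D))/\sigma_Z)$, and the halting step contributes the ``above threshold'' factor $\Pr[q_t(D)+Z_t\geq\rho+x]=\Phi((q_t(D)-\rho-x)/\sigma_Z)$. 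Integrating over $X$ gives
\[
\Pr[M(D)=o]=\mathbb{E}_{x\sim\mathcal{N}(0,\sigma_X^2)}\Bigl[\textstyle\prod_{s<t}\Phi\bigl(\tfrac{\rho+x-q_s(D)}{\sigma_Z}\bigr)\cdot\Phi\bigl(\tfrac{q_t(D)-\rho-x}{\sigma_Z}\bigr)\Bigr],
\]
and likewise for $D'$; the target expression is the supremum of the ratio of these two expectations over neighbors.

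The heart of the argument is identifying the worst-case query values. Since the queries are arbitrary bounded, $\Delta$-sensitive functions, a worst-case neighboring pair may set $q_s(D),q_s(D')$ to any values in $[a,b]$ with $|q_s(D)-q_s(D')|\le\Delta$, independently for each $s$ (the conditioning on the $\bot^{t-1}$ prefix fixes which queries are realized even in the adaptive case). The subtlety is that the objective is a ratio of two integrals over $x$, which does \emph{not} factor into a product of per-step ratios, so one cannot optimize each factor in isolation. Mirroring the worst case already announced for Report Noisy Max, I expect the maximizer to sit at the corners of the feasible box: the $t-1$ below-threshold queries pushed to the top of the range, $q_s(D)=b-\Delta$ and $q_s(D')=b$, and the halting query pushed to the bottom, $q_t(D)=a+\Delta$ and $q_t(D')=a$. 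This is a tail phenomenon — under these assignments $o$ is unlikely under both datasets, but the likelihood ratio is most extreme there — which is exactly why a pure bound requires the range to be finite.

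The main obstacle is proving this worst case rigorously, i.e.\ showing the ratio is monotone in each query coordinate so that the optimum is attained at the box corners. For a single below-threshold coordinate $u=q_s(D)$, with its neighbor held at the sensitivity boundary $u+\Delta$, I would differentiate the log-ratio in $u$ and reduce the claim to a comparison of two inverse-Mills-ratio-type quantities $\mathbb{E}[A(x)\,\phi(\cdot)]/\mathbb{E}[A(x)\,\Phi(\cdot)]$, where $A(x)$ collects the remaining factors. Monotonicity of the Gaussian hazard rate, together with the fact that the $D$ and $D'$ integrands are ordered by a likelihood-ratio (stochastic-dominance) relation in $x$, should force the derivative to have a constant sign and drive $u$ to its extreme; the analogous computation handles the halting coordinate. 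I expect verifying the required dominance and sign-definiteness \emph{uniformly in the shared noise} $x$ to be the most delicate step.

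Finally, I would substitute the worst-case values into the two expectations, rescale $x=\sigma_X z$ with $z\sim\mathcal{N}(0,1)$ and use the symmetry of the normal to align signs. The below-threshold factors then collapse to $\psi_\Delta$ in the numerator and $\psi_0$ in the denominator, each raised to the power $t-1$, and the halting factors to $\beta_\Delta$ and $\beta_0$, yielding exactly the stated ratio.
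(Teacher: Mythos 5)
Your setup is exactly the paper's: condition on the shared threshold noise, factor the trajectory probability into a product of Gaussian CDFs, apply $|q_s(D)-q_s(D')|\le\Delta$ with the sign reversed at the halting step ($q_s-q_s'=-\Delta$ for $s<t$, $q_t-q_t'=+\Delta$), and reduce to the supremum of a ratio of expectations over the box $y_s\in[a+\Delta,b]$ for $s<t$ and $y_t\in[a,b-\Delta]$, with the claimed maximizer at the corner $y_1=\cdots=y_{t-1}=b$, $y_t=a$. This matches the paper's \Cref{lem:svt:sensitivity} and the final substitution step.

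The gap is in the one step you yourself flag as delicate: proving coordinatewise monotonicity of the ratio so that the supremum sits at that corner. Your plan---differentiate the log-ratio and compare two inverse-Mills-type quantities of the form $\mathbb{E}[A\,\phi]/\mathbb{E}[A\,\Phi]$ using hazard-rate monotonicity plus a likelihood-ratio ordering---runs into a concrete obstruction: the two quantities are averages of the (pointwise ordered) Mills ratios under \emph{different} tilted measures, one built from the $\Delta$-shifted factors and one from the unshifted ones, and the likelihood ratio between those measures is a product of terms $\Phi(v+\delta)/\Phi(v)$ some of which increase and some of which decrease in the shared noise $x$ (the $t-1$ below-threshold factors move one way in $x$, the halting factor the other). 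Hence the two measures are not stochastically ordered in the direction you need, and the pointwise hazard-rate inequality does not transfer to the expectations by dominance. The paper sidesteps this with l'H\^{o}pital's Monotone Rule (\Cref{thm_hospital_monotone}): since numerator and denominator both vanish as the coordinate tends to $+\infty$, it suffices to show that $f'/g'$ (rather than $f'/f$ versus $g'/g$) is monotone; differentiating brings down a Gaussian density, the product of the two Gaussian densities collapses to $e^{z(x+t)}$ times a fixed function of $z$, and $f'/g'$ becomes a ratio of moment generating functions $\mathbb{E}_p[e^{(x+t)Z}]/\mathbb{E}_p[e^{xZ}]$ of a single fixed distribution, which is monotone because the cumulant generating function is convex (\Cref{lem_mgf_increasing}). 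Without that reduction, or a genuine correlation inequality replacing it, your argument does not close; everything before and after that step is sound.
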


Note that this formulation is very similar to the ratio of expectations in our pure DP analysis of Report Noisy Max (\cref{thm_expost_gnmax}). The proof, deferred to the supplemental, follows from Pure DP Gaussian Report Noisy Max, where we find that the point where the ratio is maximized is the same in all but the final time step.  In the last step, the query bound is negated. Hence, the ratio is largest when $q_t = a$. A final remark is that the mechanism's greatest privacy loss occurs when each prior step would have been more optimal than the step at which it halts.

\subsection{Fully-Adaptive Composition with Ex-Post Privacy Guarantees}\label{sec:fsrc}

Above Threshold stops the first time the threshold is exceeded. The Sparse Vector Technique (SVT) applies a sequence of Above Threshold mechanisms to find a set of queries that (approximately) exceed the pre-defined threshold. In the case of Laplace noise, the privacy analysis of SVT can be performed either directly (if the noise applied to the threshold is not refreshed after each Above Threshold terminates) or via composition \citep{Dwork2014,Lyu2016-ox}. Something similar holds for SVT with Gaussian noise, although in this case the analysis without noise resampling only applies to a range of parameters \citep{zhu2020improving}. Here we present a simple technique for fully adaptive composition of mechanisms that have both pDP and ex-post-DP guarantees, which can be combined with our analysis of Gaussian Above Threshold to yield a fully adaptive SVT with Gaussian noise algorithm without additional hyper-parameters like the maximum number of queries per Above Threshold or the maximum number of invocations of the Above Threshold mechanism. 

Our method (\cref{alg:fsrc_meta}) sequentially composes a stream of adaptive mechanisms and applies a stopping time rule that limits over-spending a predefined privacy budget. The privacy expenditure of each mechanism is tracked based on their output, using the ex-post privacy guarantee. The stopping rule uses the probabilistic DP guarantee to halt when there is a high enough probability of exceeding the privacy budget.
\vspace{-1\baselineskip}

\begin{center}
\begin{figure*}
    \centering
    \includegraphics[width=1\linewidth]{./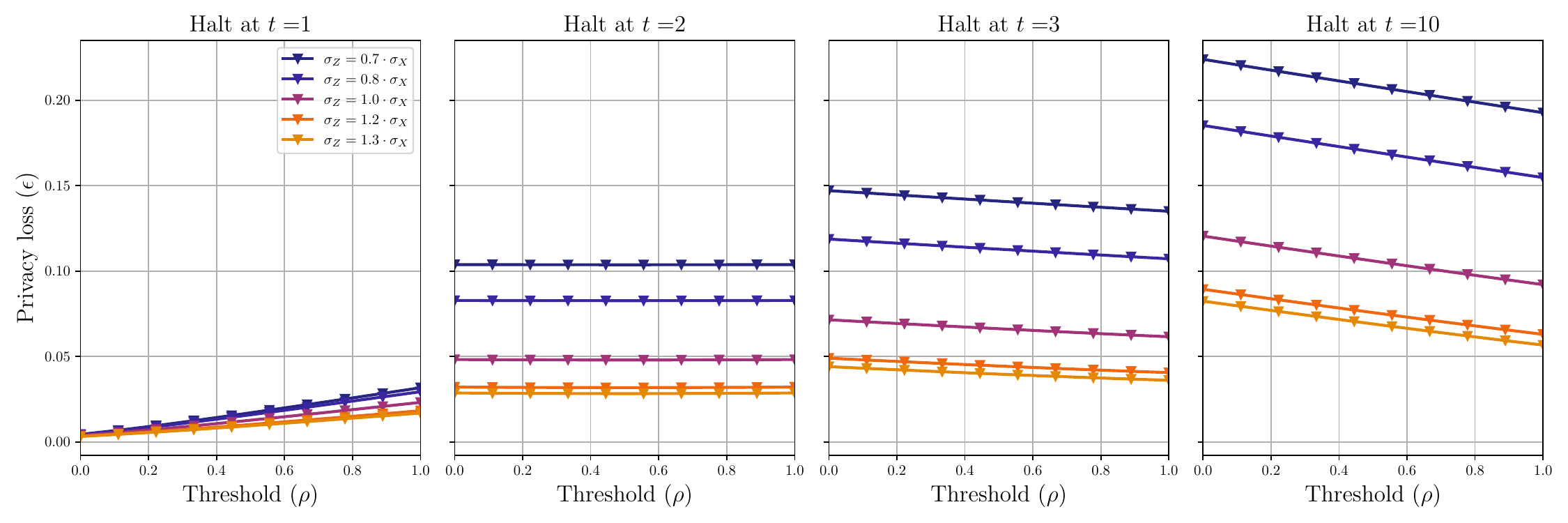}
    \caption{
    Ex-Post Above Threshold Privacy Loss ($\Delta = 0.001, \sigma_X = 0.15$). The ex-post privacy loss also changes as a function of the public threshold $\rho$. Note that if the mechanism halts after two timesteps, the minimum is observed when $\rho=0.5$. As $t$ increases, the privacy loss decreases as $\rho \to 1$.
    }
    \label{fig:at-threshold_privacy}
\end{figure*}
\end{center}
\begin{algorithm}[H]
\SetAlgoLined
\SetKwInOut{Input}{input}
\Input{dataset $D$; privacy budget $\epsilon$; a stream of adaptive mechanisms $M_1, M_2, \ldots$; a stream of values $\epsilon_{\max,1}, \epsilon_{\max, 2}, \ldots$; and stream of functions $\epsilon_{\text{p},1}, \epsilon_{\text{p},2}, \ldots$}
\For{$t = 1, 2, \ldots$}{
    \uIf{$\sum_{i=1}^{t-1} \epsilon_i + \epsilon_{\max,t} \geq \epsilon$}{
        HALT
    }
    \Else{
        $o_t = M_t(D; o_{1:t-1})$
        
        $\eps_t = \epsilon_{\text{p},t}(o_t)$
        
        Release $o_t$
    }
}
\caption{Filtered Composition, Ex-Post Privacy}
\label{alg:fsrc_meta}
\end{algorithm}
    
\begin{restatable}{theorem}{fsrcEpsDP}
\label{thm:fsrc_algorithm_dp}

Suppose the mechanisms $M_1, M_2, \ldots$ provided to 
\Cref{alg:fsrc_meta} are such that $M_t$ is $(\epsilon_{\max,t}, \delta)$-pDP and $\epsilon_{\text{p},t}$-ex-post-DP for all $t$.
Then \Cref{alg:fsrc_meta} satisfies $(\epsilon, \delta)$-DP.
\end{restatable}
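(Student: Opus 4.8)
The plan is to establish that the composed mechanism $\mathcal A$ realized by \Cref{alg:fsrc_meta} is $(\epsilon,\delta)$-pDP for every pair $D\simeq D'$, since by \Cref{lem:pDP_dp} this implies the claimed $(\epsilon,\delta)$-DP. Fixing $D\simeq D'$, the output is the released transcript $o=o_{1:T}$, where $T$ is the data-dependent step at which the budget check first fires. First I would write the privacy loss of the whole transcript and decompose it by the chain rule for adaptively composed mechanisms. The decision to halt at any step is a deterministic function of the previously released outputs $o_{1:t-1}$ (through the partial sums $\sum_{i<t}\epsilon_i$), so the halting carries no additional privacy loss, and
\[
  \mathcal L_{\mathcal A, D, D'}(o_{1:T}) \;=\; \sum_{t=1}^{T} \mathcal L_{M_t, D, D'}(o_t \mid o_{1:t-1}).
\]

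Next I would control each summand with the two hypotheses. The ex-post guarantee gives the \emph{sure} bound $\mathcal L_{M_t,D,D'}(o_t\mid o_{1:t-1}) \le \epsilon_{\text{p},t}(o_t)=\epsilon_t$ for every realization, so the accumulated loss is at most $\sum_{t\le T}\epsilon_t$. The role of the filter's pre-check is that $M_t$ is invoked only when $\sum_{i<t}\epsilon_i + \epsilon_{\max,t} < \epsilon$; applied at the terminal step $T$ this yields $\sum_{i<T}\epsilon_i < \epsilon - \epsilon_{\max,T}$. Hence on the event $\{\mathcal L_{M_T,D,D'}(o_T\mid o_{1:T-1}) \le \epsilon_{\max,T}\}$ the total loss is strictly below $\epsilon$: the worst-case reservation $\epsilon_{\max,t}$ absorbs the single step whose realized cost is not yet observed when the budget is tested. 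Combining these, the event that the overall privacy loss exceeds $\epsilon$ is contained in $\{\mathcal L_{M_T,D,D'}(o_T)>\epsilon_{\max,T}\}$, so it remains to bound the probability of that terminal event by $\delta$.

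This last step is where I expect the real difficulty to lie. Writing the bad event as the disjoint union over $k$ of $\{T=k,\ \mathcal L_{M_k,D,D'}(o_k)>\epsilon_{\max,k}\}$, the naive approach conditions on the past $o_{1:k-1}$ and applies $(\epsilon_{\max,k},\delta)$-pDP of $M_k$; but this contributes $\delta$ at each step and sums to $\delta\,\mathbb E[T]$, which is far too large. The obstacle is precisely that $T$ is chosen adaptively from the released outputs, and an atypically large $\epsilon_k$ simultaneously signals a large privacy loss and triggers the halt, so the stopping event and the overshoot event are positively correlated rather than independent. To recover a single $\delta$ I expect to need a martingale / change-of-measure argument in the style of privacy filters that charges the failure probability only to the \emph{terminal} mechanism: the structural fact to exploit is that the ex-post bounds on steps $1,\dots,T-1$ hold with probability one and thus spend no failure budget, so only the last, budget-exceeding mechanism can push the loss past $\epsilon$. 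Establishing $\Pr_{o\sim\mathcal A(D)}[\mathcal L_{M_T,D,D'}(o_T)>\epsilon_{\max,T}]\le\delta$ rigorously for an adaptively chosen $T$ is the crux; with it in hand, the decomposition above closes the pDP bound, and hence the $(\epsilon,\delta)$-DP guarantee, immediately.
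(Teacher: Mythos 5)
Your setup coincides exactly with the paper's proof: reduce $(\epsilon,\delta)$-DP to $(\epsilon,\delta)$-pDP, decompose the transcript's privacy loss into per-step losses plus a halting term that vanishes because the stopping rule is a deterministic function of the released outputs, bound the losses of steps $1,\dots,T-1$ surely by $\epsilon_{\text{p},i}(o_i)$, and use the pre-check $\sum_{i<T}\epsilon_i+\epsilon_{\max,T}<\epsilon$ to contain the failure event inside $\{\mathcal{L}_{M_T,D,D'}(o_T)>\epsilon_{\max,T}\}$. All of that is correct and is precisely the paper's argument.

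The gap is that you stop there. You declare the bound $\Pr[\mathcal{L}_{M_T,D,D'}(o_T)>\epsilon_{\max,T}]\le\delta$ to be the crux and say you ``expect to need'' a martingale or change-of-measure argument, but you never supply one; a proof whose key inequality is deferred to an unspecified future argument is not a proof. For the record, the paper does not use any such machinery: it closes this step in one line by invoking the $(\epsilon_{\max,T},\delta)$-pDP guarantee of the terminal mechanism $M_T$ conditioned on the released prefix $o_{1:T-1}$, charging the single $\delta$ to that step --- exactly the structural fact you correctly identify, namely that the earlier steps spend no failure probability because their ex-post bounds hold surely. Your worry about the adaptivity of $T$ --- that $\{T=k\}$ is not determined by $o_{1:k-1}$ alone, since the halting check after step $k$ depends on $\epsilon_k=\epsilon_{\text{p},k}(o_k)$, so that decomposing the terminal overshoot event over the possible values of $T$ and applying the per-step pDP guarantee yields something like $\delta\,\mathbb{E}[T]$ rather than $\delta$ --- is a legitimate subtlety, and it is precisely the point that the paper's one-line justification passes over. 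So you have reproduced the paper's skeleton and put your finger on its thinnest step, but you have not closed it; to turn this into a complete proof you must either carry out the stopping-time argument you sketch or justify, as the paper implicitly does, why conditioning on the executed prefix lets the $\delta$ be spent only once at the terminal step.
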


In \cref{fig:at-threshold_privacy} we plot  the ex-post privacy loss bound as a function of the public threshold parameter, $\rho$. Given a stream of queries $i=1, 2, \ldots, $ bounded by $a \leq q_i \leq b$, the mechanism will report less privacy loss as $\rho \to b$. 

\subsection{Numerical Computation of  Bounds}\label{sec:numerics}

We evaluate two methods for producing numerical estimates of the bounds: Monte Carlo and numerical integration.
Monte Carlo density estimation is a natural starting point for computing bounds for \cref{thm_expost_gnmax} and \cref{thm_expost_at_f}. However, Monte Carlo methods require a tremendous amount of samples to yield accurate results. Note that in both cases, the CDF is taken to an exponential power in the numerator and the denominator, causing numerical instabilities. To improve runtime performance and stability, we use a scientific library that can compute integrals with high precision. The Python mpmath \citep{mpmath} library is able to compute each density and the outer integral with arbitrary degree of precision.

\begin{figure}[H]
    \centering
    \includegraphics[width=1\linewidth]{./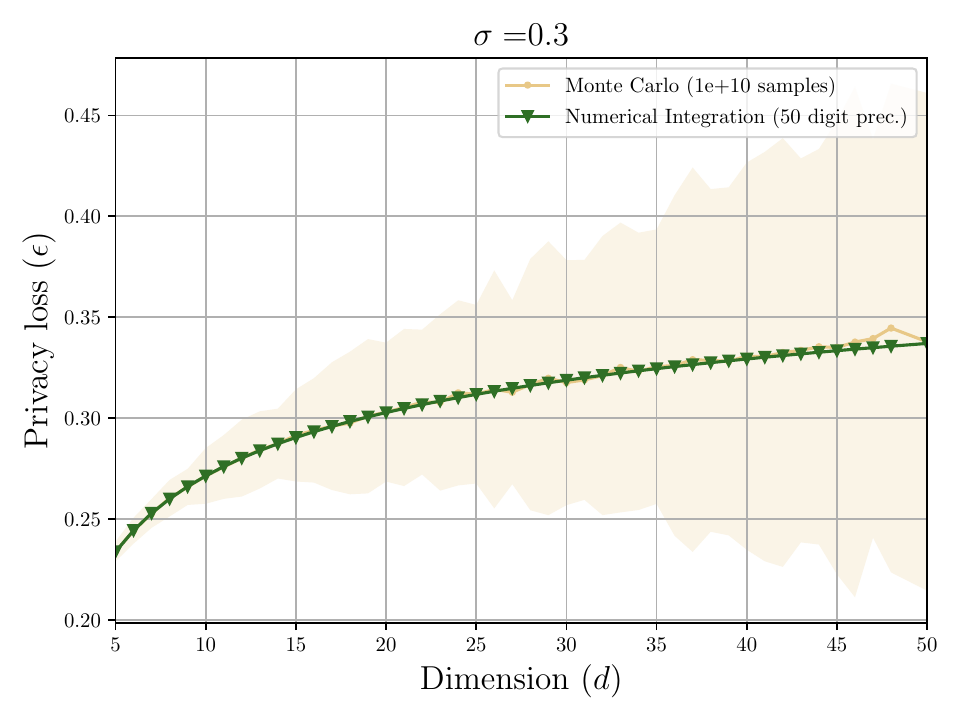}
    \caption{
    Gaussian Report Noisy Max for $\Delta=0.01$.
    Numerical integration (green) compared to Monte Carlo estimate (beige) with 10B samples. Shaded region is the standard deviation over 100 trials. Numerical integration methods are deterministic; error bars only apply to Monte Carlo estimates, which are known to converge to the true estimate with infinite samples.\looseness=-1
    }
    \label{fig:gnmax-num-estimate}
\end{figure}

As shown in \cref{fig:gnmax-num-estimate}, for low sensitivity queries $\Delta$ and $\sigma = 0.3$, the variance between trials becomes unwieldy, even when we rely on common open source libraries that can precisely estimate the CDF of a univariate Gaussian.

\begin{figure*}
    \centering
    \includegraphics[width=1\linewidth]{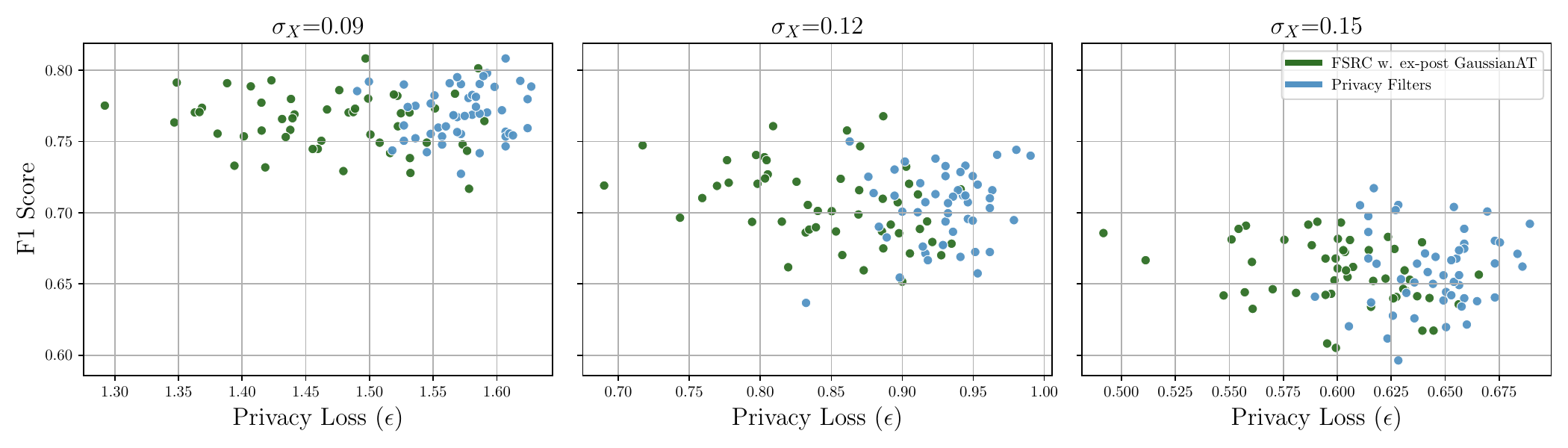}
    \caption{
    Scatter plot indicating the accuracy for UCI Bikes with $\rho = 0.575$. and final privacy spend over a range of noise multipliers. Final privacy loss ($\epsilon$) is reported for FSRC (green). Threshold noise, $\sigma_X$ in the range of $[0.09, 0.15]$. A clear separation in privacy accounting occurs over a range of noise multipliers.}
    \label{fig:fsrc_uci_scatter}
\end{figure*}

\section{EMPIRICAL RESULTS}
We benchmark our SVT-like method (FSRC and Gaussian Above Threshold) on mobility and energy datasets. Additional experiments with the Pure DP Gaussian Report Noisy Max bound are included in the appendix. Experiments were done on a Apple M1 processor (32 GB), except for the Monte Carlo numerical estimation, which was done on a NVIDIA V100 GPU with 32 GB VRAM.

\textbf{The UCI Bikes Dataset} captures the utilization of shared bikes in a geographic area over the course of a year \citep{ucibikesdataset}.
Since we do not know the upper bound on registered customers, we take the maximum (6,946 users) and assume that this is a public value. Note that our analysis is still worst-case, in that a user is assumed to be contributing to each daily (normalized) count query. We set Above Threshold to HALT when bike sharing exceeds a threshold $\rho \in [0, 1]$. In \cref{fig:fsrc_uci_scatter} we plot a range of calibrations, and in the supplemental we show the privacy spend over time. \textbf{The LCL London Energy Dataset} \citep{lcl_smartmeter_london_energy_data}, consists of energy usage for $N= 5,564$ customers over $d = 829$ days. The larger number of queries \emph{increases} the privacy cost; however this is balanced by a \emph{decrease} in individual contribution to each query due to each query having $\Delta = 1/N$. In \cref{fig:fsrc_lcl_scatter} we plot a range of calibrations. As the threshold decreases, we witness more queries released and therefore a greater privacy spend. 

\begin{figure*}
    \centering
    \includegraphics[width=1.0\linewidth]{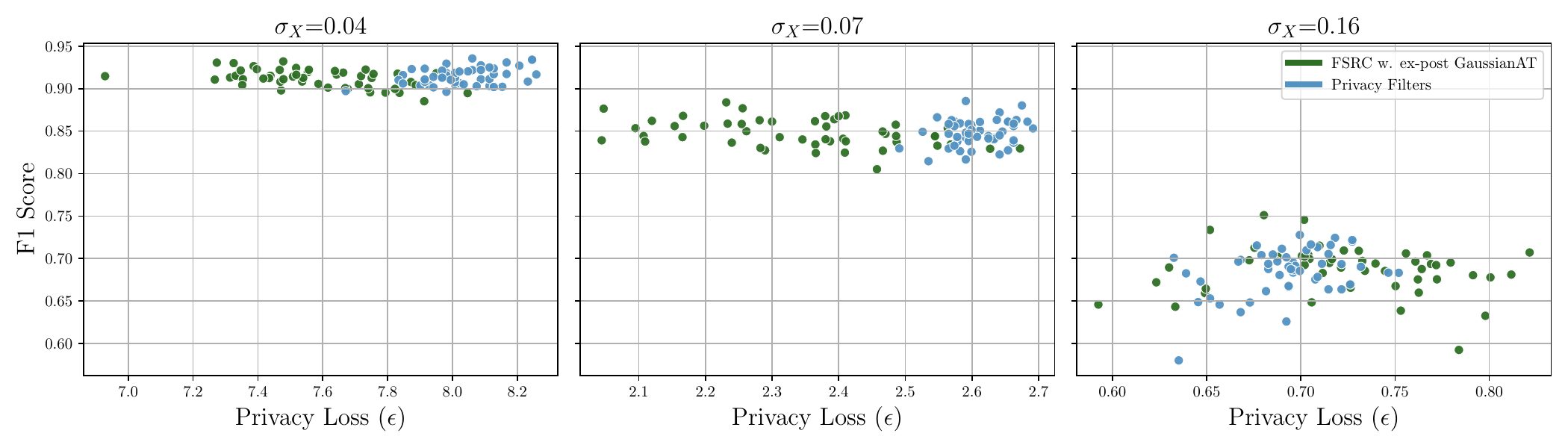}
    \caption{
    Scatter plot indicating the accuracy and final privacy spend over a range of noise multipliers for $\rho = 0.33$ with the LCL London Energy dataset. Privacy loss ($\epsilon$) is reported for FSRC (green). Threshold noise, $\sigma_X$, was evaluated in the range of $[0.04, 0.16]$. Our accounting method provides benefits when $\sigma_X = 0.04$.}
    \label{fig:fsrc_lcl_scatter}
\end{figure*}

\subsection{Online Selection with Filtered Self-Reporting Composition}
Answering sequential questions in an interactive setting typically requires the up-front selection of privacy parameters. The most common method to answer a large number of queries is the Sparse Vector Technique. In this setting, Above Threshold serves as a privacy primitive in more complex algorithms \citep{Hardt2010-la}. First, the curator decides how many times they expect to release a query in order to allocate the privacy budget. The budget is further split across two noise adding mechanisms. Noise is added to each query, as well as a public threshold parameter, which centers whether a point is flagged as a ``$\top$" (interesting) or a ``$\bot$" (not interesting). One solution---and our baseline for consideration---restricts a curator to fully-adaptive composition using a privacy filter \citep{Rogers2023-uv} and a novel result by \citet{zhu2020improving} which places no restriction on the number of queries.  We are primarily concerned with satisfying the following two requirements: (1) the analyst should be able to modify queries after Above Threshold halts, and (2) they should be able to guarantee that an up-front privacy budget is respected. Therefore, we combine FSRC with the privacy analysis in \cref{thm_expost_at_f}. To calibrate $\epsilon_{\text{max}}$, we take an RDP bound of Gaussian Above Threshold (\cref{lem_Gaussian_at}) with $\delta = 1/N$.

\paragraph{Experiment Parameters.}
In each case, the datasets have a temporal axis, meaning that when the mechanism halts, we restart at the  $t+1$' timestep  and continue accumulating privacy spend. To evaluate FSRC, we compare \cref{alg:fsrc_meta} using Gaussian Above Threshold (\cref{alg:Gaussian_at}),  to a Privacy Filter \citep{Whitehouse2022-se} with sequential composition of the same mechanism. In FSRC, we apply \cref{thm_expost_at_f} in our privacy accounting. For the baseline, we compute an RDP bound using \citet{zhu2020improving}. For each pair of noise multipliers and thresholds, we plot each result in a scatter plot, and the privacy spend over 50 runs. We use AutoDP \citep{Wang_autodp} with support for global sensitivity calibration to apply their bound with $\Delta < 1$. As in \citet{zhu2020improving}, we fix $\sigma_Z = \sqrt{3} \sigma_X$. We ran all our experiments over 50 runs with a range of values for $\sigma_X$. Importantly, the privacy spend using a DP filter cannot be disclosed without leaking privacy, and must therefore be replaced with a privacy odometer if the data curator wishes to share the spent budget \citep{rogers2016privacy}.
\textbf{Utility.}
We compute the F1 Score to measure utility for both algorithms. Since each algorithm outputs a binary vector, we can measure how many times the mechanism matches with a ground truth algorithm where no noise is added to the queries or the threshold.

\section{RELATED WORK}
Our work spans four areas of privacy research, (1) accounting methods for Gaussian mechanisms, (2) maximizing the number of interactive, user-level queries, (3) privacy filters and fully adaptive composition, and (4) ex-post privacy analysis.

\textbf{The Gaussian Mechanism} adds calibrated noise to the output of a query. In the learning setting, a number of privacy accounting techniques exist \citep{Abadi2016-bm}. However such approaches do not map directly to query selection.
In the online setting, ex-post DP accounting already exists for the Laplace Mechanism \citep{Ligett2017-accfirst}; however, many successful deployments of DP rely on Gaussian mechanisms. This is likely due to the greater noise concentration around the mean and the thinner tails exhibited by Gaussian mechanisms \citep{Dwork2014}. 

\textbf{The Sparse Vector Technique (SVT)}
  \citep{Dwork2009-od} is a foundational differentially private algorithm. By splitting the privacy budget between the cost of returning a binary vector and the query of interest, utility is increased. 
  
A Gaussian version offers better utility over the Laplace mechanism and can, surprisingly, support a potentially infinite number of queries \citep{zhu2020improving}. 
  
  \citet{Hartmann2022-bi} introduce \textit{Output DP} as a means of analyzing SVT and the Propose-Test-Release with Laplace noise. Their work generalizes results from \citet{Ligett2017-accfirst} and they show how basic composition bounds can (for few queries) offer better utility over advanced composition results. 
  
  \textbf{Ex-Post Privacy.} \citet{Ligett2017-accfirst} defined ex-post privacy in terms of $(\eps, 0)$-DP. In common with this work, they studied SVT, but with a focus on the Laplace Mechanism. \citet{redberg2021privately} consider ex-post privacy with Gaussian mechanisms for data-dependent privacy parameters with the Gaussian mechanism over real-valued queries. 
  
  \textbf{Privacy Filters}, first proposed by \citet{rogers2016privacy}, are a key ingredient in allowing adaptive composition of privacy-preserving mechanisms as well as the privacy spend. \citet{feldman2021individual}, and \citet{Lecuyer2021-vx} extended these results to Rényi DP, with the caveat that the higher order parameters needed to be pre-defined. Recently, \citet{Whitehouse2022-se} tightened these results and \citet{Rogers2023-uv} then applied ex-post privacy to probabilistic DP mechanisms. We consider their efforts to be closest to ours; however, they do not consider query online selection.

\section{CONCLUSION}
We provided pure-DP bounds on Gaussian selection mechanisms with bounded queries. Additionally, we developed new composition tools for ex-ante and ex-post privacy analysis. 
We demonstrated increased query accuracy for an equivalent privacy budget in energy and mobility datasets in the online setting. 

We consider ex-post privacy analysis a promising method tightening privacy accounting in online algorithms, and our composition result, FSRC, could be applied to other sequential algorithms.

Accounting for user-level privacy over long time horizons is necessary in a number of sequential and interactive decision-making tasks. In particular, we foresee direct benefit in applying these methods to model selection. 
\subsection*{Acknowledgements}
J.L.\ is supported by the Google DeepMind Graduate Fund and the Fonds de Recherche du Québec. We wish to thank Thomas Steinke for his comments on an early draft. We also thank Guillaume Rabusseau, Jose Gallego-Posada, Maxime Wabartha and Vincent Luczkow for many fruitful discussions. Finally, we thank Iosif Pinelis for bringing l'H\^{o}pital's Monotone Rule to our attention. 

\bibliography{main}

\newpage
\appendix
%

\onecolumn

\section{PROOFS FROM SECTION 2}
Our main contribution relies on a number of proof techniques which are extended from the simpler Gaussian Report Noisy Max setting.

\subsection{Pure DP Gaussian Report Noisy Max}
\begin{restatable}[Pure DP for Gaussian Report Noisy Max]{theorem}{dpGNMAX}
\label{thm_expost_gnmax}

Let $M$ be the Gaussian Report Noisy Max mechanism with standard deviation $\sigma$ applied to $d > 1$ bounded queries $q_1, \ldots, q_d: \mathcal{X} \to [a, b]$, each with sensitivity bounded by $\Delta$. Let $c = b - a$. Then $M$ satisfies $\epsilon$-DP with
\begin{align}
\epsilon
&=
\frac{
    \mathbb{E}_{z \sim \mathcal{N}(0, 1)} \left [
        \Phi \left(z - \frac{c - 2 \Delta}{\sigma}\right)^{d-1}
        \right ]
            }
            {
    \mathbb{E}_{z \sim \mathcal{N}(0, 1)} \left [
        \Phi \left (
        z - \frac{
        c
        }{
        \sigma
        }
        \right )^{d-1}
        \right ]
            }
\enspace.
\end{align}
\end{restatable}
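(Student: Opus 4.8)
The plan is to use the discrete-output characterization of pure DP from \Cref{def:approxdp}: since the output space is $[d]$, it suffices to upper bound the likelihood ratio $\Pr[M(D)=i]/\Pr[M(D')=i]$ uniformly over indices $i \in [d]$ and neighboring pairs $D \simeq D'$, and to show that its supremum is exactly the displayed ratio of Gaussian expectations. First I would write the selection probability in closed form by conditioning on the noisy value of the selected coordinate. Writing $v_j = q_j(D)$ and integrating the density of $v_i + Z_i$ against the event that every other noisy query is smaller gives, after the substitution $z = (t - v_i)/\sigma$,
\begin{align*}
\Pr[M(D) = i] = \mathbb{E}_{z \sim \mathcal{N}(0,1)}\Bigl[\textstyle\prod_{j \neq i}\Phi\bigl(z + (v_i - v_j)/\sigma\bigr)\Bigr].
\end{align*}
This expresses the probability as a monotone functional of the query gaps $v_i - v_j$: increasing in $v_i$ and decreasing in each $v_j$ with $j \neq i$.

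By the permutation symmetry of the mechanism (the constraints $v_j \in [a,b]$ and $|v_j - v'_j| \le \Delta$ are symmetric in the coordinates), the supremum of the likelihood ratio over datasets is the same for each output index, so it suffices to fix $i = d$. The next step is to identify the worst-case query values. Since $\Pr[M(D)=d]$ is increasing in $v_d$ and decreasing in each $v_j$ ($j\neq d$), while $\Pr[M(D')=d]$ is increasing in $v'_d$ and decreasing in each $v'_j$, making the numerator large and the denominator small pushes $v_d$ up, $v'_d$ down, each $v_j$ down, and each $v'_j$ up. I would then argue that the sensitivity constraint is saturated in the favorable direction for every coordinate, so that $v_d = v'_d + \Delta$ and $v_j = v'_j - \Delta$. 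This leaves only the common base level of the $d-1$ ``high'' coordinates and of the ``low'' coordinate as free parameters, which by symmetry I would take equal across the $d-1$ high coordinates, reducing the problem to the one-parameter family $R(s) = G(s + 2\Delta)/G(s)$, where $G(u) = \mathbb{E}_{z \sim \mathcal{N}(0,1)}[\Phi(z + u/\sigma)^{d-1}]$ and $s = v'_d - v'_j$ ranges over $[-c,\, c - 2\Delta]$.

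The main obstacle is showing that this one-parameter ratio is maximized at the left endpoint $s = -c$, i.e.\ that $R$ is monotonically decreasing; equivalently, that $G$ is log-concave. This is exactly where l'H\^{o}pital's monotone rule enters: setting $f(s) = G(s+2\Delta)$ and $g(s) = G(s)$, both vanish as $s \to -\infty$, so $f/g$ inherits the monotonicity of $f'/g'$, and I would reduce the claim to the monotonicity of $G'(s+2\Delta)/G'(s)$, where $G'(u) = \tfrac{d-1}{\sigma}\,\mathbb{E}_{z \sim \mathcal{N}(0,1)}[\Phi(z + u/\sigma)^{d-2}\phi(z + u/\sigma)]$. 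Establishing this monotone-likelihood-ratio property of the smooth functional $G$ is the technical heart of the argument, and it also subsumes the earlier reduction to equal base levels, which can alternatively be obtained by a coordinatewise version of the same monotonicity. (As a sanity check, the same log-concavity of $G$ follows from Pr\'ekopa's theorem, since $\phi(z)\,\Phi(z+u/\sigma)^{d-1}$ is jointly log-concave in $(z,u)$; but the l'H\^{o}pital rule delivers the monotonicity of $R$ directly.)

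Finally, with the worst case pinned to $v'_d = a$, $v'_j = b$ and $v_d = a + \Delta$, $v_j = b - \Delta$, I would substitute back into the closed form. Using $v_d - v_j = -(c - 2\Delta)$ and $v'_d - v'_j = -c$ yields numerator $\mathbb{E}_{z \sim \mathcal{N}(0,1)}[\Phi(z - (c-2\Delta)/\sigma)^{d-1}]$ and denominator $\mathbb{E}_{z \sim \mathcal{N}(0,1)}[\Phi(z - c/\sigma)^{d-1}]$, which is precisely the stated quantity, establishing the pure-DP guarantee. The step I expect to be hardest is the monotonicity of $R$ (equivalently, the log-concavity of $G$): the coordinatewise monotonicities and the closed-form substitution are routine, but controlling how the ratio of Gaussian expectations moves as the base level shifts requires the l'H\^{o}pital monotone rule rather than elementary calculus.
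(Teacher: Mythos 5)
Your skeleton matches the paper's proof: the same closed form for $\Pr[M(D)=i]$, the same saturation of the sensitivity constraints ($q_d-q_d'=\Delta$, $q_j-q_j'=-\Delta$ for $j\neq d$), the same worst-case configuration, and the same appeal to l'H\^{o}pital's monotone rule. But there is a genuine gap at what you yourself call the technical heart. First, in your primary route you reduce the claim to the monotonicity of $G'(s+2\Delta)/G'(s)$ and stop there; the paper's proof supplies the missing idea, which is to absorb the $2\Delta$ shift into the mean of the outer Gaussian (so the numerator becomes an expectation under $\mathcal{N}(2\Delta/\sigma,1)$ of the \emph{same} product of $\Phi$'s as the denominator), after which $f'/g'$ is, up to constants, a ratio $\mathbb{E}_p[e^{(x+t)Z}]/\mathbb{E}_p[e^{xZ}]$ of moment generating functions of a single tilted density $p(z)\propto e^{-z^2}\prod_i\Phi(z-c_i)$, and monotonicity follows from convexity of the cumulant generating function (\cref{lem_mgf_increasing}). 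Without some such device, the monotone-likelihood-ratio property you need is not established.

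Second, and more importantly, your reduction to the one-parameter family $R(s)=G(s+2\Delta)/G(s)$ with all $d-1$ high coordinates equal is not justified by symmetry: a symmetric function on a box need not attain its supremum at a symmetric point. What is actually needed is monotonicity of the ratio in each $y_j$ separately while the other coordinates are held at \emph{arbitrary, possibly unequal} values, and in that setting the ratio is not of the form $G(x+2\Delta)/G(x)$ for any univariate $G$, because the numerator's remaining factors are $\Phi(z-(y_i-2\Delta)/\sigma)$ while the denominator's are $\Phi(z-y_i/\sigma)$ --- every factor differs, not just the one being varied. Your Pr\'ekopa aside does correctly prove log-concavity of the diagonal $G$ and hence monotonicity of $R$ along the diagonal, but joint log-concavity of the multivariate analogue controls only $\langle\nabla\log G(y-2\Delta\mathbf{1})-\nabla\log G(y),\mathbf{1}\rangle$, not the sign of each partial derivative, so it does not deliver the coordinatewise statement either. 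The paper's \cref{lem:gnmax:monotone} is built precisely to handle this: the other factors are kept as arbitrary constants $c_i$, the shift is moved into the Gaussian mean, and the tilted-MGF argument then gives coordinatewise monotonicity, from which all $y_i$ can be pushed to $b-a$ one at a time.
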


To prove our Pure DP Gaussian Report Noisy Max bound, we first analyze the effect of the sensitivity in the privacy loss of the Gaussian Report Noisy Max mechanism instantiated with uniformly bounded queries.

\begin{lemma}\label{lem:gnmax:sensitivity}
Let $M$ be the Gaussian Report Noisy Max mechanism with standard deviation $\sigma$ applied to $d > 1$ bounded queries $q_1, \ldots, q_d: \mathcal{X} \to [a, b]$, each with sensitivity bounded by $\Delta$. Then we have
\begin{align}
    \sup_{D \simeq D'} \sup_{o \in [d]}
    \frac{\Pr[M(D) = o]}{\Pr[M(D') = o]}
    &\leq
    \sup_{y_1, \ldots, y_{d-1} \in [a-b+2\Delta, b-a]}
    \frac{\EE{z \sim \cN(0,1)}{\prod_{i=1}^{d-1} \Phi\left(z - \frac{y_i - 2 \Delta}{\sigma}\right)}}{\EE{z \sim \cN(0,1)}{\prod_{i=1}^{d-1} \Phi\left(z - \frac{y_i}{\sigma}\right)}}
    \enspace.\label{eqn:gnmax:sensitivity}
\end{align}
\end{lemma}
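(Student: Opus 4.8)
The plan is to collapse the two suprema on the left-hand side into a single box optimisation matching the right-hand side, using an exact formula for the selection probabilities and then two relaxations that together produce exactly the stated range. First, since the $d$ queries enter the mechanism symmetrically, relabelling shows that the supremum over outputs $o \in [d]$ is attained by fixing $o = d$, so I work with this index throughout. Next I compute the probability in closed form: conditioning on the noise $Z_d \sim \mathcal{N}(0,\sigma^2)$ attached to query $d$, writing $Z_d = \sigma z$, and using that the $Z_i$ are i.i.d.\ and that ties occur with probability zero, I integrate out each $Z_i$ to obtain
\[
\Pr[M(D)=d] = \mathbb{E}_{z\sim\mathcal{N}(0,1)}\left[\prod_{i\neq d}\Phi\left(z + \frac{q_d(D)-q_i(D)}{\sigma}\right)\right],
\]
and likewise for $D'$. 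This identity is the core of the argument; everything afterwards is bounding the resulting ratio.

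The main work is choosing the parametrisation so that the range $[a-b+2\Delta,\,b-a]$ emerges exactly. I set $y_i := q_i(D')-q_d(D')$, so the denominator equals $\mathbb{E}_z[\prod_{i\neq d}\Phi(z - y_i/\sigma)]$ with $y_i \in [a-b,\,b-a]$ from boundedness alone. For the numerator exponent I invoke \emph{both} constraints simultaneously: sensitivity gives $q_d(D)-q_i(D) \le (q_d(D')-q_i(D'))+2\Delta = -y_i+2\Delta$, while boundedness gives $q_d(D)-q_i(D) \le b-a$. Combining these yields $q_d(D)-q_i(D) \le -\bar y_i + 2\Delta$, where $\bar y_i := \max(y_i,\, a-b+2\Delta)$, and by construction $\bar y_i \in [a-b+2\Delta,\,b-a]$.

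The hard part is precisely this lower endpoint: a naive relaxation using only the sensitivity bound yields the looser range $[a-b,\,b-a]$, and it is the boundedness cap built into $\bar y_i$ that tightens it to $a-b+2\Delta$ and makes the lemma's statement correct rather than merely an overestimate. Once the capping is in place, I push the bounds through the expectations using monotonicity of $\Phi$: each numerator factor is at most $\Phi(z-(\bar y_i-2\Delta)/\sigma)$, and since $\bar y_i \ge y_i$ each denominator factor satisfies $\Phi(z-y_i/\sigma)\ge \Phi(z-\bar y_i/\sigma)$; as all factors are nonnegative, this bounds the numerator expectation from above and the denominator expectation from below. The ratio is therefore at most the target functional evaluated at $\bar{\mathbf y}\in[a-b+2\Delta,\,b-a]^{d-1}$, hence at most its supremum over that box, which is exactly the right-hand side. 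Taking the supremum over neighbouring $D\simeq D'$ (absorbed into the supremum over feasible configurations) and over $o$ (handled by symmetry) completes the proof.
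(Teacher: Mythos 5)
Your proof is correct and follows essentially the same route as the paper's: fix $o=d$ by symmetry, condition on $Z_d=\sigma z$ to get the product-of-$\Phi$ representation, apply the sensitivity bound to the numerator, and parametrise by $y_i = q_i(D')-q_d(D')$. The one place you genuinely improve on the paper is the justification of the lower endpoint $a-b+2\Delta$: the paper argues somewhat informally that tightness of the sensitivity bound forces $q_i'\in[a+\Delta,b]$ and $q_d'\in[a,b-\Delta]$, whereas your capping $\bar y_i=\max(y_i,\,a-b+2\Delta)$ combined with monotonicity of $\Phi$ handles arbitrary neighbouring configurations (including those with $y_i<a-b+2\Delta$) explicitly and rigorously.
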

\begin{proof}
Fix a pair of neighbouring datasets $D$ and $D'$. With a slight abuse of notation we let $q_i = q_i(D)$ and $q_i' = q_i(D')$ for all $i \in [d]$. Recall that $M(D) = \argmax_{i \in [d]} q_i + Z_i$ with $Z_i \sim \cN(0,\sigma^2)$, and $|q_i - q_i'| \leq \Delta$ for all $i$. Without loss of generality fix the output $o = d$. Then we have:
\begin{align}
    \Pr[M(D) = d]
    &=
    \Pr[\wedge_{i=1}^{d-1} q_d + Z_d > q_i + Z_i]
    \\
    &=
    \Pr[\wedge_{i=1}^{d-1} Z_i < Z_d + q_d - q_i]
    \\
    &=
    \Pr[\wedge_{i=1}^{d-1} Z_i < Z_d + q_d' - q_i' + (q_d - q_d') - (q_i - q_i')]
    \\
    &\leq
    \Pr[\wedge_{i=1}^{d-1} Z_i < Z_d + q_d' - q_i' + 2 \Delta]
    \\
    &=
    \EE{z \sim \cN(0,\sigma^2)}{\Pr[\wedge_{i=1}^{d-1} Z_i < z + q_d' - q_i' + 2 \Delta]}
    \\
    &=
    \EE{z \sim \cN(0,\sigma^2)}{\prod_{i=1}^{d-1} \Pr[Z_i < z + q_d' - q_i' + 2 \Delta]}
    \\
    &=
    \EE{z \sim \cN(0,\sigma^2)}{\prod_{i=1}^{d-1} \Phi\left(\frac{z + q_d' - q_i' + 2 \Delta}{\sigma}\right)}
    \\
    &=
    \EE{z \sim \cN(0,1)}{\prod_{i=1}^{d-1} \Phi\left(z + \frac{q_d' - q_i' + 2 \Delta}{\sigma}\right)}
    \enspace.
\end{align}
Therefore, writing $y_i = q_i' - q_d'$, we get
\begin{align}
    \frac{\Pr[M(D) = o]}{\Pr[M(D') = o]}
    &\leq
    \frac{\EE{z \sim \cN(0,1)}{\prod_{i=1}^{d-1} \Phi\left(z - \frac{y_i - 2 \Delta}{\sigma}\right)}}{\EE{z \sim \cN(0,1)}{\prod_{i=1}^{d-1} \Phi\left(z - \frac{y_i}{\sigma}\right)}} \enspace.
\end{align}

Note that a priori we have $q_i, q_i' \in [a,b]$ for all $i \in [d]$. However, for the above inequality to hold we set $q_i - q_i' = - \Delta$ for $i < d$ and $q_d - q_d' = \Delta$. These imply $q_i' = q_i + \Delta \in [a + \Delta, b]$ and $q_d' = q_d' - \Delta \in [a, b-\Delta]$, so $y_i = q_i' - q_d' \in [a - b + 2\Delta, b - a]$ for $i \in [d-1]$.
\end{proof}

Given the result above, all that remains is to identify where the supremum over the $y_i$ is attained in \Cref{eqn:gnmax:sensitivity}. To that end we will show that the ratio is non-decreasing in each of the $y_i$. The following H\^{o}pital-like monotonicity rule and property of ratios of moment generation functions will be useful.

\begin{lemma}[\citep{pinelis2001monotone, Anderson1993-wh}]
\label{thm_hospital_monotone}
Let $-\infty \leq a < b \leq \infty$ and let $f, g : [a, b] \to \mathbb{R}$ be continuous differentiable functions on $(a, b)$, with $f(a) = g(a) = 0$ or $f(b) = g(b) = 0$, and $g'(x) \neq 0$ for $x \in (a, b)$. If $f' / g'$ is non-decreasing in $(a,b)$, then so is $f / g$.
\end{lemma}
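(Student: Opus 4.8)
The plan is to reduce everything to Cauchy's Mean Value Theorem together with a sign analysis of the derivative of $f/g$. First I would observe that $g'$ is continuous and never zero on $(a,b)$, hence has constant sign there; by replacing the pair $(f,g)$ with $(-f,-g)$ if necessary — an operation that changes neither $f/g$ nor $f'/g'$ — I may assume without loss of generality that $g' > 0$ on $(a,b)$. Then $g$ is strictly increasing, and combined with the boundary condition $g(a) = 0$ (resp.\ $g(b) = 0$) this forces $g(x) \neq 0$ for every interior $x$, so that $f/g$ is well-defined and differentiable on $(a,b)$.

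Next I would compute the derivative and factor it so that its sign becomes transparent:
\[
\left(\frac{f}{g}\right)'(x) = \frac{f'(x) g(x) - f(x) g'(x)}{g(x)^2} = \frac{g'(x)}{g(x)}\left(\frac{f'(x)}{g'(x)} - \frac{f(x)}{g(x)}\right).
\]
Thus the sign of $(f/g)'(x)$ is the product of the sign of $g'(x)/g(x)$ and the sign of $\frac{f'(x)}{g'(x)} - \frac{f(x)}{g(x)}$, and the whole problem reduces to comparing the derivative ratio $f'(x)/g'(x)$ with the function ratio $f(x)/g(x)$.

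The crux is to control $f(x)/g(x)$ via Cauchy's Mean Value Theorem. In the case $f(a) = g(a) = 0$, for each $x \in (a,b)$ there is a point $\xi \in (a,x)$ with $\frac{f(x)}{g(x)} = \frac{f(x) - f(a)}{g(x) - g(a)} = \frac{f'(\xi)}{g'(\xi)}$; since $\xi < x$ and $f'/g'$ is non-decreasing, this gives $\frac{f(x)}{g(x)} \leq \frac{f'(x)}{g'(x)}$, so the parenthesized factor is $\geq 0$, while the prefactor $g'(x)/g(x)$ is positive because $g(x) > 0$ here. In the case $f(b) = g(b) = 0$, the same argument produces $\xi \in (x,b)$ with $\xi > x$, yielding the reversed comparison $\frac{f(x)}{g(x)} \geq \frac{f'(x)}{g'(x)}$; now the parenthesized factor is $\leq 0$, but $g(x) < 0$ makes the prefactor negative, so the product is again $\geq 0$. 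Either way $(f/g)' \geq 0$ throughout $(a,b)$, so $f/g$ is non-decreasing.

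I expect the main obstacle to be purely bookkeeping rather than analytic: keeping the sign tracking consistent across the two boundary conditions and the two possible signs of $g'$, and verifying at the outset that $g$ does not vanish in the interior so that the quotient and its derivative are legitimate. Once Cauchy's Mean Value Theorem supplies an intermediate point $\xi$ on the correct side of $x$, the assumed monotonicity of $f'/g'$ delivers the needed comparison immediately, and no deeper estimate is required.
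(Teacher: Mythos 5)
The paper does not prove this lemma at all: it is imported verbatim from the literature (Pinelis; Anderson--Vamanamurthy--Vuorinen), so there is no in-paper argument to compare against. Your proof is the standard one for l'H\^{o}pital's Monotone Rule and is essentially correct: the sign normalization of $g'$ (which holds by Darboux even if $g'$ is not assumed continuous), the factorization $\left(\tfrac{f}{g}\right)' = \tfrac{g'}{g}\left(\tfrac{f'}{g'} - \tfrac{f}{g}\right)$, and the Cauchy Mean Value Theorem comparison on the correct side of $x$ are exactly the ingredients of the classical argument. One small point deserves attention because it is the case the paper actually uses: the lemma allows $a = -\infty$ or $b = +\infty$, and in the application (Lemma on monotonicity of $F = f/g$) the relevant boundary condition is $f(b) = g(b) = 0$ with $b = +\infty$, where Cauchy's MVT cannot be applied directly on $[x,b]$. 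The fix is routine --- apply the MVT on $[x, b']$ for finite $b' < b$ to get $\frac{f(x)-f(b')}{g(x)-g(b')} = \frac{f'(\xi)}{g'(\xi)} \geq \frac{f'(x)}{g'(x)}$, then let $b' \to b$ and use $f(b'), g(b') \to 0$ together with $g(x) \neq 0$ to recover $\frac{f(x)}{g(x)} \geq \frac{f'(x)}{g'(x)}$ --- but you should state it, since otherwise the argument as written only covers finite endpoints.
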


We say that a random variable $Z$ has \emph{tails majorized by an exponential decay} if the cumulative distribution function $F_Z$ of $Z$ is such that there exist positive constants $c_1, c_2$ satisfying $F_Z(z) = O(e^{c_1 z})$ for $z \to -\infty$ and $1-F_Z(z) = O(e^{-c_2 z})$ for $z \to \infty$. This is a well-known sufficient condition for the cumulant generating function $K_Z(s) = \log \E{e^{sZ}}$ to exist.

\begin{lemma}
\label{lem_mgf_increasing}
Suppose $Z$ is a random variable with tails majorized by an exponential decay. Then for any $t > 0$ the function
\begin{align}
    R(s) &= \frac{
        \mathbb{E} \left [ e^{(t + s)Z} \right]
    }{
        \mathbb{E} \left [ e^{sZ} \right]
    } \enspace,
\end{align}
is non-decreasing for all $s \in \mathbb{R}$.
\end{lemma}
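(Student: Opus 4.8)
The plan is to reduce the claim to the log-convexity of the moment generating function. Write $M(s) = \mathbb{E}[e^{sZ}]$, which is finite, strictly positive, and (on the interior of its domain of finiteness) smooth, thanks to the assumption that $Z$ has tails majorized by an exponential decay. Then $R(s) = M(s+t)/M(s)$, and since $\log$ is increasing, $R$ is non-decreasing if and only if $\log R$ is non-decreasing. Setting $K(s) = \log M(s)$ for the cumulant generating function, I have $\log R(s) = K(s+t) - K(s)$, so it suffices to show that the fixed-width increment $s \mapsto K(s+t) - K(s)$ is non-decreasing.

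The crux is that $K$ is convex. I would establish this via H\"older's inequality: for any $s_0, s_1$ and $\lambda \in [0,1]$,
\[
M(\lambda s_0 + (1-\lambda)s_1) = \mathbb{E}\!\left[(e^{s_0 Z})^{\lambda}(e^{s_1 Z})^{1-\lambda}\right] \le M(s_0)^{\lambda} M(s_1)^{1-\lambda},
\]
and taking logarithms gives convexity of $K$. Equivalently, one can differentiate twice under the expectation -- justified on the interior of the convergence interval by the exponential tail bound via dominated convergence -- to obtain $K''(s) = \mathbb{E}_s[Z^2] - \mathbb{E}_s[Z]^2 = \mathrm{Var}_s(Z) \ge 0$, where $\mathbb{E}_s$ denotes expectation under the exponentially tilted law $dP_s \propto e^{sz}\,dP$.

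Given convexity of $K$, the fixed-width increment is non-decreasing: for $s \le s'$ and $t > 0$, writing $K(s+t) - K(s) = \int_s^{s+t} K'(u)\,du$ and using that $K'$ is non-decreasing (a consequence of convexity), the shifted integrand dominates the original pointwise, so $K(s'+t) - K(s') \ge K(s+t) - K(s)$; alternatively this follows directly from the three-chords property of convex functions without invoking differentiability. Hence $\log R$, and therefore $R$, is non-decreasing, which is the claim.

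The main obstacle is regularity rather than the core inequality: I must verify that the exponential-tail hypothesis genuinely delivers finiteness and smoothness of $M$ on the relevant set of $s$, and, if I take the $K''$ route, justify differentiating under the integral sign. This is precisely why I would prefer the H\"older argument, which sidesteps the differentiation entirely. The remaining subtlety is the domain, since the statement quantifies over all $s \in \mathbb{R}$; this is unproblematic when $Z$ is essentially bounded, as in the intended application, where $M$ is finite on all of $\mathbb{R}$.
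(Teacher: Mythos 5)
Your proof is correct and follows essentially the same route as the paper's: both arguments reduce the monotonicity of $R(s)$ to the convexity of the cumulant generating function $K$ (the paper differentiates $R$ directly and observes that $R'(s)\ge 0$ is equivalent to $K'(s+t)\ge K'(s)$, while you take logarithms and use that the fixed-width increment $K(s+t)-K(s)$ of a convex function is non-decreasing). Your H\"older-inequality derivation of the convexity of $K$ is a nice self-contained addition --- the paper simply cites convexity and smoothness of the cumulant generating function as known facts --- but the core idea is identical.
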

\begin{proof}
Let $\xi(s) = \E{e^{s Z}}$ be the moment generating function of $Z$. Taking the derivative of $R$ we get:
\begin{align}
    R'(s) = \frac{
        \xi'(t + s) \xi(s) - \xi(t+s) \xi'(s)
    }{
        \xi(s)^{2}
    } \enspace.
\end{align}
Thus, $R'(s) \geq 0$ if and only if $\xi'(t + s) \xi(s) - \xi(t+s) \xi'(s) \geq 0$, which is equivalent to
\begin{align}
    \frac{\partial}{\partial s} \log \xi(t+s) = \frac{\xi'(t+s)}{\xi(t+s)} \geq \frac{\xi'(s)}{\xi(s)} = \frac{\partial}{\partial s} \log \xi(s) \enspace.
\end{align}
Therefore $R$ is non-decreasing if the derivative of the cumulant generating function $K_Z'(s)$ (which exists by assumption) is non-decreasing. But note that when $K_Z(s)$ exists it is known to be convex and infinitely differentiable, and therefore its derivative is non-decreasing.
\end{proof}

Note that by symmetry of the expression of interest in the $y_i$ it suffices to establish monotonicity with respect to a single variable. Thus we define the following functions:
\begin{align}
    f(x) &= \EE{z \sim \cN(t,1)}{\Phi\left(z - x \right) \prod_{i=1}^{k} \Phi\left(z - c_i \right)} \enspace, \\
    g(x) &= \EE{z \sim \cN(0,1)}{\Phi\left(z - x \right) \prod_{i=1}^{k} \Phi\left(z - c_i \right)} \enspace,
\end{align}
where $t > 0$ and $c_1, \ldots, c_{k} \in \R$ are constants.

\begin{lemma}\label{lem:gnmax:monotone}
The function $F(x) = f(x) / g(x)$ is non-decreasing for all $x \in \R$.
\end{lemma}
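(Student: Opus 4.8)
The plan is to invoke the H\^{o}pital-type monotone rule (\Cref{thm_hospital_monotone}) on the interval $(-\infty,\infty)$, which reduces the claim to showing that the ratio of derivatives $f'/g'$ is non-decreasing, and then to recognize $f'/g'$ as a moment-generating-function ratio to which \Cref{lem_mgf_increasing} applies.

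First I would differentiate. Writing $\phi = \Phi'$ for the standard Gaussian density and using $\partial_x \Phi(z-x) = -\phi(z-x)$ together with dominated convergence,
$$f'(x) = -\int_{\mathbb{R}} \phi(z-t)\,\phi(z-x)\prod_{i=1}^{k}\Phi(z-c_i)\,dz,$$
and similarly for $g'$ with $\phi(z-t)$ replaced by $\phi(z)$. Since $\phi(z-x)\prod_i\Phi(z-c_i)$ is strictly positive on a set of positive measure, $g'(x) < 0$ for every $x$, so $g'$ never vanishes. Moreover $f(x), g(x) \to 0$ as $x \to +\infty$ by dominated convergence (each $\Phi(z-x)\to 0$ pointwise), which supplies the endpoint condition $f(+\infty) = g(+\infty) = 0$ required by \Cref{thm_hospital_monotone}.

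The heart of the argument is the monotonicity of $f'/g'$. Using the Gaussian identity $\phi(z-t) = \phi(z)\,e^{tz - t^2/2}$, the numerator of $f'/g'$ equals $e^{-t^2/2}\int e^{tz}\phi(z)\phi(z-x)\prod_i\Phi(z-c_i)\,dz$. I then introduce the reference measure $\nu$ whose Lebesgue density is proportional to $e^{-z^2}\prod_{i=1}^{k}\Phi(z-c_i)$, and observe by completing the square that $\phi(z)\phi(z-x) \propto e^{-(z-x/2)^2} \propto e^{-z^2}e^{xz}$ up to an $x$-dependent constant that cancels in the ratio; hence the weight $\phi(z)\phi(z-x)\prod_i\Phi(z-c_i)$ is exactly the exponential tilt of $\nu$ by $e^{xz}$. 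Consequently, for $W \sim \nu$,
$$\frac{f'(x)}{g'(x)} = e^{-t^2/2}\,\frac{\mathbb{E}_\nu\!\left[e^{(t+x)W}\right]}{\mathbb{E}_\nu\!\left[e^{xW}\right]},$$
which is precisely the ratio $R(x)$ of \Cref{lem_mgf_increasing} with $s = x$. Because $\Phi \le 1$, the density of $\nu$ is dominated by $e^{-z^2}$, so $W$ has tails majorized by an exponential decay and its cumulant generating function exists everywhere; since $t > 0$, \Cref{lem_mgf_increasing} gives that $R$, and therefore $f'/g'$, is non-decreasing in $x$.

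With $f'/g'$ non-decreasing, $g'$ nowhere zero, and $f(+\infty) = g(+\infty) = 0$, \Cref{thm_hospital_monotone} yields that $F = f/g$ is non-decreasing on all of $\mathbb{R}$, as claimed. I expect the main obstacle to be the measure-change bookkeeping in the third step: one must combine the two Gaussian factors $\phi(z)\phi(z-x)$, check that every $x$-dependent normalizing constant cancels between numerator and denominator, and confirm that the resulting tilt matches the exact moment-generating-function ratio of \Cref{lem_mgf_increasing} so that its hypotheses (the sign $t > 0$ and the exponential tail condition) are genuinely satisfied.
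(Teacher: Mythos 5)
Your proposal matches the paper's proof essentially step for step: the same H\^{o}pital-type rule with the endpoint condition at $+\infty$, the same trick of combining the two Gaussian factors into an exponential tilt of the reference density proportional to $e^{-z^2}\prod_{i=1}^{k}\Phi(z-c_i)$, and the same application of \Cref{lem_mgf_increasing} to $f'/g'$ with the $x$-dependent constant cancelling in the ratio. The only addition is your explicit verification that $g'$ never vanishes and that the tail condition holds, hypotheses the paper leaves implicit.
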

\begin{proof}
We are going to apply \Cref{thm_hospital_monotone}. First note that since $\lim_{x \to \infty} \Phi(z - x) = 0$ we have $\lim_{x \to \infty} f(x) = \lim_{x \to \infty} g(x) = 0$. Next we observe that, writing $\phi(u) = e^{-\frac{u^2}{2}} / \sqrt{2 \pi}$ for the density of a standard Gaussian random variable, we have
\begin{align}
    \frac{\partial}{\partial x} \Phi(z - x)
    &=
    \frac{\partial}{\partial x} \int_{-\infty}^{z-x} \phi(u) du
    =
    - \phi(z - x) \enspace.
\end{align}
Thus, when computing $f'(x)$ we will obtain a term of the form $\phi(z - t) \phi(z - x)$, which can be simplified to
\begin{align}
    \phi(z - t) \phi(z - x)
    &=
    \frac{1}{2\pi} e^{-\frac{(z-t)^2}{2}} e^{-\frac{(z-x)^2}{2}}
    =
    \frac{1}{2\pi} e^{-z^2} e^{z (x+t)} e^{-\frac{t^2 + x^2}{2}} \enspace.
\end{align}
From this we can now show that $f'(x)$ is proportional to the moment generating function of a random variable $Z$ with density $p(z) = e^{-z^2} \prod_{i=1}^{k} \Phi\left(z - c_i \right) / (2 \pi N)$ where $N$ is a normalizing constant (independent of $x$ and $t$):
\begin{align}
    f'(x) &= - \EE{z \sim \cN(t,1)}{\phi\left(z - x \right) \prod_{i=1}^{k} \Phi\left(z - c_i \right)}
    \\
    &=
    - \int_{-\infty}^{\infty} \phi(z-t) \phi(z-x) \prod_{i=1}^{k} \Phi\left(z - c_i \right) dz
    \\
    &=
    - e^{-\frac{t^2 + x^2}{2}} \cdot N \cdot \int_{-\infty}^{\infty} e^{z (x+t)} p(z) dz
    \\
    &=
    - e^{-\frac{t^2 + x^2}{2}} \cdot N \cdot \EE{Z \sim p}{e^{(x+t) Z}} \enspace.
\end{align}
A similar derivation also yields the following expression for $g'(x)$:
\begin{align}
    g'(x) &= - e^{-\frac{x^2}{2}} \cdot N \cdot \EE{Z \sim p}{e^{x Z}} \enspace.
\end{align}
Putting these two derivations together we obtain the following expression for the test function in \Cref{thm_hospital_monotone}:
\begin{align}
    H(x) := \frac{f'(x)}{g'(x)} = \frac{e^{-\frac{t^2}{2}} \EE{Z \sim p}{e^{(x+t) Z}}}{\EE{Z \sim p}{e^{x Z}}} \enspace.
\end{align}
Noting that the distribution with density $p(z)$ satisfies the condition of \Cref{lem_mgf_increasing} we see that $H$ is non-decreasing and therefore $F$ is non-decreasing.
\end{proof}

With all these ingredients in place we see that \Cref{thm_expost_gnmax} follows by using \Cref{lem:gnmax:monotone} to show that the supremum for each $y_i$ in \Cref{lem:gnmax:sensitivity} is attained at $y_i = b - a$.

\subsection{Pure DP Above Threshold}

\begin{algorithm}[H]
    \SetAlgoLined
    \SetKwInOut{Input}{input}
    \Input{dataset $D$; noise parameters $\sigma_X, \sigma_Z$; a stream of queries $q_1, q_2, \ldots$; threshold $\rho$.}
    
    $\hat{\rho} = \rho + \mathcal{N}(0, \sigma_X^2)$
    
    \For{$t = 1, 2, \ldots$}{
        
        $\hat{q_t} = q_t(D) + \mathcal{N}(0, \sigma_Z^2)$
        
        \uIf{$\hat{q_t} \geq \hat{\rho}$}{

            Output $x_t = \top$ and HALT
        }
        \Else{
        
            Output $x_t = \bot$
            
        }
    }
    \caption{Gaussian Above Threshold \citep{zhu2020improving}}
    \label{alg:Gaussian_at}
\end{algorithm}

\begin{restatable}[Pure Ex-post Gaussian Above Threshold]{theorem}{expostAT}

\label{thm_expost_at_f}
Given a stream $q_1, q_2, \ldots : \mathcal{X} \to [a,b]$ with global sensitivity $\Delta$, the Gaussian Above Threshold mechanism (Algorithm~\ref{alg:Gaussian_at}) satisfies $\epsilon_{\text{post}}$-ex-post-DP with

\begin{align}
    \epsilon_{\text{post}}(\{\bot^{t-1} \top \})  
        &= 
    \frac{
\mathbb{E}_{x \sim \mathcal{N}(0, 1)} \left [
    \Phi \left (\frac{\sigma_X x + \rho - (b - a) + \Delta}{\sigma_Z} \right )^{(t - 1)}
    \Phi \left ( \frac{- \sigma_X x - \rho + a + \Delta }{\sigma_Z} \right )
    \right ]
        }
        {
\mathbb{E}_{x \sim \mathcal{N}(0, 1)} \left [
    \Phi \left (\frac{\sigma_X x + \rho - (b - a)}{\sigma_Z} \right )^{(t - 1)}
    \Phi \left ( \frac{- \sigma_X x - \rho + a }{\sigma_Z} \right )
    \right ]
        }
\enspace.
\end{align}

\end{restatable}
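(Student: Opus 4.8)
The plan is to mirror the three-step structure used for \Cref{thm_expost_gnmax}, with the resampled threshold noise $\hat\rho = \rho + X$, $X\sim\mathcal{N}(0,\sigma_X^2)$, playing the role that the winner's noise played in Report Noisy Max. First I would compute the probability of the fixed output $o=\{\bot^{t-1}\top\}$ exactly. Conditioning on $X$, the query noises are independent, so each of the first $t-1$ steps contributes $\Pr[q_i(D)+Z_i<\hat\rho]=\Phi((\rho+X-q_i(D))/\sigma_Z)$ and the halting step contributes $\Pr[q_t(D)+Z_t\ge\hat\rho]=\Phi((q_t(D)-\rho-X)/\sigma_Z)$. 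Integrating over $X$ and rescaling $X=\sigma_X x$ gives
\begin{align*}
\Pr[M(D)=o] = \mathbb{E}_{x\sim\mathcal{N}(0,1)}\!\left[\,\prod_{i=1}^{t-1}\Phi\!\left(\frac{\rho+\sigma_X x - q_i(D)}{\sigma_Z}\right)\cdot \Phi\!\left(\frac{q_t(D)-\rho-\sigma_X x}{\sigma_Z}\right)\right],
\end{align*}
and likewise for $D'$. Since the output encodes the stopping time $t$, this quantity is fully determined, and the privacy loss of \Cref{def:privacy_loss} is the logarithm of the ratio of these two expectations; it therefore suffices to bound that ratio uniformly over neighbouring query configurations. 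Note this avoids the running-time term of \Cref{thm:zw2020_thm8} entirely, which is what makes the ex-post bound clean.

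Second, I would reduce to the $D'$-query values plus fixed shifts, exactly as in \Cref{lem:gnmax:sensitivity}. Because $\Phi$ is increasing and $|q_i(D)-q_i(D')|\le\Delta$, each $\bot$-factor of $\Pr[M(D)=o]$ is bounded above by replacing $q_i(D)$ with $q_i(D')-\Delta$, while the single $\top$-factor is bounded above by replacing $q_t(D)$ with $q_t(D')+\Delta$; both substitutions \emph{raise} the corresponding factor. Writing $p_i=q_i(D')$, this produces an upper bound on the ratio whose numerator factors each carry a $+\Delta$ shift and whose denominator factors carry none, with $p_i\in[a+\Delta,b]$ for $i<t$ and $p_t\in[a,b-\Delta]$.

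Third, I would locate the supremum over the $p_i$ by a monotonicity argument parallel to \Cref{lem:gnmax:monotone}. By symmetry among the $t-1$ bottom steps it suffices to treat a single $p_i$ (and $p_t$) while holding the others fixed. For a $\bot$-variable the dependence has the Report-Noisy-Max shape: after differentiating, the test ratio $f'/g'$ of \Cref{thm_hospital_monotone} collapses—via a mean-shifting change of variables that absorbs $\Delta$ into the Gaussian and then \Cref{lem_mgf_increasing}—to a non-decreasing moment-generating-function ratio, so the overall ratio is non-decreasing in $p_i$ and the worst case is $p_i=b$ (i.e.\ $q_i(D')=b$, $q_i(D)=b-\Delta$). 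For the $\top$-variable the situation is \emph{reflected}: the halting factor is increasing in $q_t$ rather than decreasing, so the same computation (after a reflection $q_t\mapsto -q_t$ exploiting the symmetry of the Gaussian in $x$) shows the ratio is non-increasing in $p_t$, with worst case $p_t=a$ (i.e.\ $q_t(D')=a$, $q_t(D)=a+\Delta$). This is precisely the ``negated'' final step described after the theorem, and it matches the intuition that privacy loss is maximal when every prior step sat near the threshold yet the halting step is at the query floor. Substituting these corner values, the $t-1$ identical bottom factors collapse into the $(t-1)$-th power and the stated closed form drops out (up to the range normalization implicit in the displayed constants).

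The hard part is the third step, and specifically the fact that the reflected $\top$-factor breaks the clean symmetry that made Report Noisy Max routine. In \Cref{lem:gnmax:monotone} every factor was of the same $\bot$-type form, so the change of variables that turns the H\^{o}pital test function into an \emph{exact} moment-generating-function ratio left the remaining product invariant and \Cref{lem_mgf_increasing} applied verbatim. Here the shift that aligns the differentiated $\bot$-factor with its denominator counterpart acts in the opposite direction on the $\top$-factor, so one cannot simultaneously reduce both to the exact MGF-ratio form; the $\top$-factor survives as an extra, asymmetric weight. The crux is therefore to verify that the test ratio remains monotone despite this reflected factor—e.g.\ by carrying the $\top$-factor through as a fixed log-concave weight in the cumulant-generating-function convexity argument underlying \Cref{lem_mgf_increasing}, and separately checking that the vanishing boundary conditions required by \Cref{thm_hospital_monotone} still hold in the limit ($s\to\infty$ for the $\bot$-variable, $p_t\to-\infty$ for the reflected $\top$-variable).
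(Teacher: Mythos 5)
Your proposal follows essentially the same route as the paper's proof: condition on the threshold noise to get the exact product-of-CDFs expression for $\Pr[M(D)=\{\bot^{t-1}\top\}]$ (the paper's Lemma~\ref{lem:svt:sensitivity}), apply the $\pm\Delta$ shifts to reduce to the $D'$-values with $y_i\in[a+\Delta,b]$ for $i<t$ and $y_t\in[a,b-\Delta]$, and then use the l'H\^{o}pital-type monotonicity argument of Lemma~\ref{lem:gnmax:monotone} to place the supremum at $y_1=\cdots=y_{t-1}=b$, $y_t=a$. The ``crux'' you flag --- that the reflected $\top$-factor prevents the uniform mean-shift from reproducing the exact MGF-ratio form verbatim --- is precisely the step the paper compresses into ``the same argument used in Lemma~\ref{lem:gnmax:monotone}'', so your treatment is, if anything, more explicit about what remains to be checked there.
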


Overall, our proof follows a similar strategy to the proof in the previous section.

\begin{lemma}\label{lem:svt:sensitivity}
Let $M$ be the Gaussian Above Threshold mechanism, with public threshold $\rho \geq 0$, threshold noise of standard deviation $\sigma_X$, and query noise of standard deviation $\sigma_Z$ applied to a stream of bounded queries $q_1, q_2, \ldots : \mathcal{X} \to [a,b]$. Then for any stopping time $t \geq 1$ we have
    \begin{align}
    \sup_{D \simeq D'} \frac{\Pr[M(D) = t]}{\Pr[M(D') = t]}
    \leq
    \sup_{y_1, \ldots, y_{t-1} \in [a+\Delta,b], y_t \in [a, b-\Delta]}
    \frac{\EE{x \sim \cN(0, 1)}{\prod_{i=1}^{t-1} \Phi\left(\frac{\sigma_X x + \rho - y_i + \Delta}{\sigma_Z}\right) \cdot \Phi\left(\frac{- \sigma_X x - \rho + y_t + \Delta}{\sigma_Z}\right)}}{\EE{x \sim \cN(0, 1)}{\prod_{i=1}^{t-1} \Phi\left(\frac{\sigma_X x + \rho - y_i }{\sigma_Z}\right) \cdot \Phi\left(\frac{- \sigma_X x - \rho + y_t}{\sigma_Z}\right)}}
    \enspace.
    \end{align}
\end{lemma}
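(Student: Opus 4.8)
The plan is to mirror the structure of the Report Noisy Max argument in \Cref{lem:gnmax:sensitivity}, with the one essential difference being the asymmetric role played by the terminal ($\top$) step. Fix a pair of neighbouring datasets $D \simeq D'$ and abbreviate $q_i = q_i(D)$ and $q_i' = q_i(D')$, so that $|q_i - q_i'| \le \Delta$ for every $i$. The first step is to condition on the single threshold-noise draw, writing $\hat\rho = \rho + \sigma_X x$ with $x \sim \cN(0,1)$. Conditioned on $x$, the per-step query noises are independent, so the event $\{M(D) = t\} = \{\bot^{t-1}\top\}$ factorizes: each of the $t-1$ ``below threshold'' steps contributes $\Pr[Z_i < \rho + \sigma_X x - q_i] = \Phi((\sigma_X x + \rho - q_i)/\sigma_Z)$, while the terminal ``above threshold'' step contributes $\Pr[Z_t \ge \rho + \sigma_X x - q_t] = \Phi((q_t - \sigma_X x - \rho)/\sigma_Z)$. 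Taking the expectation over $x$ then expresses $\Pr[M(D)=t]$ (and likewise $\Pr[M(D')=t]$, with $q_i'$ in place of $q_i$) as a Gaussian expectation of a product of CDFs.

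Next I would apply the sensitivity bound pointwise in $x$, exploiting monotonicity of $\Phi$. For the $t-1$ below-threshold factors the integrand is decreasing in $q_i$, so I would push $q_i$ down to its extreme feasible value $q_i' - \Delta$, producing the factor $\Phi((\sigma_X x + \rho - q_i' + \Delta)/\sigma_Z)$. For the terminal factor the integrand is instead increasing in $q_t$, so the perturbation reverses direction: I would push $q_t$ up to $q_t' + \Delta$, producing $\Phi((-\sigma_X x - \rho + q_t' + \Delta)/\sigma_Z)$. Since each of these bounds holds for every value of $x$, it survives the outer expectation, and the resulting upper bound on $\Pr[M(D)=t]$ is written entirely in terms of the $q_i'$. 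Dividing by $\Pr[M(D')=t]$, which is already expressed through the $q_i'$, gives a ratio that depends on $D$ and $D'$ only through $y_i := q_i'$.

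Finally I would read off the feasible ranges of the $y_i$. The worst-case offsets used above are $q_i = q_i' - \Delta$ for $i < t$ and $q_t = q_t' + \Delta$; combining each with the a priori constraints $q_i, q_i' \in [a,b]$ forces $y_i \in [a+\Delta, b]$ for the below-threshold indices and $y_t \in [a, b-\Delta]$ for the terminal index. Taking the supremum over all neighbouring $D, D'$ therefore reduces to a supremum over the $y_i$ in exactly these intervals, which is the claimed bound.

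I expect the only genuinely delicate point to be the bookkeeping around the sign flip at the terminal step: unlike the symmetric Report Noisy Max case, the worst case here pushes the earlier queries toward $a$ (to stay below the threshold as long as possible) while pushing the halting query toward $b$ (so that it exceeds the threshold), and one must keep careful track of which neighbour each factor is attached to so that the final ratio is a clean function of the $q_i'$ alone. Everything else is a routine transcription of the Report Noisy Max computation, and the separate task of actually locating where the supremum over the $y_i$ is attained is deferred to the subsequent step, which reuses the Hôpital-type monotonicity argument of \Cref{lem:gnmax:monotone}.
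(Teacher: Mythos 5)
Your proposal is correct and follows essentially the same route as the paper's proof: condition on the threshold noise, factorize the stopping event into a product of Gaussian CDFs, bound each factor by shifting $q_i$ to $q_i'-\Delta$ for the below-threshold steps and $q_t$ to $q_t'+\Delta$ for the halting step, and intersect with $[a,b]$ to obtain the stated ranges for the $y_i$. The only cosmetic difference is that you apply the sensitivity bound pointwise in $x$ via monotonicity of $\Phi$, whereas the paper applies it as an event inclusion before conditioning; these are equivalent.
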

\begin{proof}
Fix a pair of datasets $D \simeq D'$. Like before, we let $q_t = q_t(D)$ and $q_t' = q_t(D')$; they satisfy $|q_t - q_t'| \leq \Delta$.
First of all we bound the probability that the mechanism on input $D$ stops at time $t$ as follows:
\begin{align}
    \Pr[M(D) = t] 
    &= 
    \Pr[\wedge_{i=1}^{t-1} \{ q_i + Z_i  < X + \rho \} \wedge \{ X + \rho < q_{t} + Z_t \}] \\
     &= 
    \Pr[\wedge_{i=1}^{t-1} \{ Z_i  < X + \rho - q_i \} \wedge \{ X + \rho - q_t < Z_t \}] \\
    &=
    \Pr[\wedge_{i=1}^{t-1} \{ Z_i  < X + \rho - q_i' - (q_i - q_i') \} \wedge \{ X + \rho - q'_t - (q_t - q_t') < Z_t \}] \\
    &\leq
    \Pr[\wedge_{i=1}^{t-1} \{ Z_i  < X + \rho - q_i' + \Delta \} \wedge \{ X + \rho - q'_t - \Delta < Z_t \}] \\
    &=
    \EE{x \sim \cN(0, \sigma_X^2)}{\Pr[\wedge_{i=1}^{t-1} \{ Z_i  < x + \rho - q_i' + \Delta \} \wedge \{ x + \rho - q'_t - \Delta < Z_t \}]}
    \\
    &=
    \EE{x \sim \cN(0, \sigma_X^2)}{\prod_{i=1}^{t-1} \Pr[Z_i  < x + \rho - q_i' + \Delta] \cdot \Pr[x + \rho - q'_t - \Delta < Z_t]} \\
    &=
    \EE{x \sim \cN(0, \sigma_X^2)}{\prod_{i=1}^{t-1} \Pr[Z_i  < x + \rho - q_i' + \Delta] \cdot \Pr[-Z_t < - x - \rho + q'_t + \Delta]} \\
    &=
    \EE{x \sim \cN(0, \sigma_X^2)}{\prod_{i=1}^{t-1} \Pr[Z_i  < x + \rho - q_i' + \Delta] \cdot \Pr[Z_t < - x - \rho + q'_t + \Delta]} \\
    &=
    \EE{x \sim \cN(0, \sigma_X^2)}{\prod_{i=1}^{t-1} \Phi\left(\frac{x + \rho - q_i' + \Delta}{\sigma_Z}\right) \cdot \Phi\left(\frac{- x - \rho + q'_t + \Delta}{\sigma_Z}\right)} \\
    &=
    \EE{x \sim \cN(0, 1)}{\prod_{i=1}^{t-1} \Phi\left(\frac{\sigma_X x + \rho - q_i' + \Delta}{\sigma_Z}\right) \cdot \Phi\left(\frac{- \sigma_X x - \rho + q'_t + \Delta}{\sigma_Z}\right)}
    \enspace.
\end{align}
A similar derivation also yields:
\begin{align}
    \Pr[M(D') = t]
    &=
    \EE{x \sim \cN(0, 1)}{\prod_{i=1}^{t-1} \Phi\left(\frac{\sigma_X x + \rho - q_i'}{\sigma_Z}\right) \cdot \Phi\left(\frac{- \sigma_X x - \rho + q'_t}{\sigma_Z}\right)}
    \enspace.
\end{align}
Thus, the ratio of probabilities of $M$ stopping at time $t$ on $D$ and $D'$ can be bounded as:
\begin{align}
    \frac{\Pr[M(D) = t]}{\Pr[M(D') = t]}
    \leq
    \frac{\EE{x \sim \cN(0, 1)}{\prod_{i=1}^{t-1} \Phi\left(\frac{\sigma_X x + \rho - q_i' + \Delta}{\sigma_Z}\right) \cdot \Phi\left(\frac{- \sigma_X x - \rho + q_t' + \Delta}{\sigma_Z}\right)}}{\EE{x \sim \cN(0, 1)}{\prod_{i=1}^{t-1} \Phi\left(\frac{\sigma_X x + \rho - q_i' }{\sigma_Z}\right) \cdot \Phi\left(\frac{- \sigma_X x - \rho + q_t'}{\sigma_Z}\right)}}
    \enspace.
\end{align}
Now note that in the bound we set $q_i - q_i' = -\Delta$ for $i < t$ and $q_t - q_t' = \Delta$. Since all the queries are bounded in $[a,b]$ we have that $y_i = q_i' = q_i + \Delta \in [a+\Delta, b]$ for $i < t$ and $y_t = q_t' = q_t - \Delta \in [a, b-\Delta]$. Taking the supremum over these ranges completes the proof.
\end{proof}

To conclude, we show that the supremum in \Cref{lem:svt:sensitivity} is attained at $y_1 = \cdots = y_{t-1} = b$ and $y_t = a$. This follows from observing that the ratio is increasing in $y_1, \ldots, y_{t-1}$ and decreasing in $y_t$, which follows from  the same argument used in Lemma~\ref{lem:gnmax:monotone}.

\newpage
\subsection{Filtered Self-Reporting Composition}
\begin{restatable}{theorem}{fsrcEpsDP}
\label{thm:fsrc_algorithm_dp}

Suppose the mechanisms $M_1, M_2, \ldots$ provided to 
\Cref{alg:fsrc_meta} are such $M_t$ is $(\epsilon_{\max,t}, \delta)$-pDP and $\epsilon_{\text{post},t}$-ex-post-DP for all $t$.
Then \Cref{alg:fsrc_meta} satisfies $(\epsilon, \delta)$-DP.
\end{restatable}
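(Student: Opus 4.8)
The plan is to prove the stronger statement that \Cref{alg:fsrc_meta}, viewed as a single composed mechanism $\mathcal{A}$, is $(\epsilon,\delta)$-pDP, from which $(\epsilon,\delta)$-DP follows because pDP implies DP. Write $T$ for the (random) number of outputs released and $o_{1:T}$ for the released transcript. The first step is to decompose the transcript's privacy loss into per-step contributions. The crucial observation is that whether the algorithm halts after releasing $o_{1:t}$ is a deterministic function of the released outputs alone (through $\epsilon_i = \epsilon_{\text{p},i}(o_i)$ and the fixed values $\epsilon_{\max,i},\epsilon$), hence identical under $D$ and $D'$; the halting indicators therefore cancel in the likelihood ratio, and by the chain rule for adaptively composed mechanisms
\begin{align*}
\mathcal{L}_{\mathcal{A},D,D'}(o_{1:T}) = \sum_{t=1}^{T} \mathcal{L}_{M_t,D,D'}(o_t \mid o_{1:t-1}) \enspace.
\end{align*}

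Next I would control every summand but the last using the ex-post guarantees. Since $M_t$ is $\epsilon_{\text{p},t}$-ex-post-DP, $\mathcal{L}_{M_t,D,D'}(o_t\mid o_{1:t-1}) \le \epsilon_{\text{p},t}(o_t) = \epsilon_t$ for all $t$, so $\sum_{t=1}^{T-1}\mathcal{L}_{M_t,D,D'}(o_t\mid o_{1:t-1}) \le \sum_{t=1}^{T-1}\epsilon_t$. Because $M_T$ was actually executed, the guard at step $T$ passed, i.e. $\sum_{t=1}^{T-1}\epsilon_t + \epsilon_{\max,T} < \epsilon$. Combining these yields
\begin{align*}
\mathcal{L}_{\mathcal{A},D,D'}(o_{1:T}) < \big(\epsilon - \epsilon_{\max,T}\big) + \mathcal{L}_{M_T,D,D'}(o_T\mid o_{1:T-1}) \enspace,
\end{align*}
so the bad event $\{\mathcal{L}_{\mathcal{A},D,D'}(o_{1:T}) > \epsilon\}$ is contained in $\{\mathcal{L}_{M_T,D,D'}(o_T\mid o_{1:T-1}) > \epsilon_{\max,T}\}$. (If the algorithm never halts, the guard holds at every step, so $\sum_t \epsilon_t \le \epsilon$ and the loss is bounded by this sum via ex-post DP, hence never exceeds $\epsilon$; thus only the terminating case matters.)

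It then remains to bound the probability that the terminal mechanism overspends its reserved budget by $\delta$. Here I would condition on a prefix $o_{1:t-1}$ together with the event of reaching step $t$, which is measurable with respect to $o_{1:t-1}$ and so makes $T$ a genuine stopping time; conditionally on this, $M_t(D;o_{1:t-1})$ is $(\epsilon_{\max,t},\delta)$-pDP, giving $\Pr_{o_t}\!\left[\mathcal{L}_{M_t,D,D'}(o_t\mid o_{1:t-1})>\epsilon_{\max,t}\right] \le \delta$. The main obstacle is that $T$ is data-dependent and may be correlated with the terminal privacy loss, so one cannot naively union-bound the failure events across $t$ (that would scale with the expected number of steps rather than giving a single $\delta$). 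The delicate point is to argue that only the one terminal ``gamble'' contributes, i.e.\ that summing the disjoint events $\{T=t\}\cap\{\mathcal{L}_{M_t,D,D'}(o_t\mid o_{1:t-1})>\epsilon_{\max,t}\}$ over $t$ still yields at most $\delta$. I expect this to be the crux of the proof and the step requiring the most care, since it is precisely where the interaction between the adaptive halting rule and the per-step pDP guarantee must be reconciled.
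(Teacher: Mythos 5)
Your proposal retraces the paper's proof step for step: prove $(\epsilon,\delta)$-pDP and invoke pDP $\Rightarrow$ DP, factor the transcript likelihood so that the (output-determined, hence dataset-independent) halting indicators cancel, bound the first $T-1$ per-step losses by $\epsilon_{\text{post},i}(o_i)$ via ex-post DP, and use the guard $\sum_{i<T}\epsilon_i+\epsilon_{\max,T}<\epsilon$ to reduce the bad event to $\{\mathcal{L}_{M_T,D,D'}(o_T\mid o_{1:T-1})>\epsilon_{\max,T}\}$. The only divergence is that you stop and flag the final bound $\Pr\left[\mathcal{L}_{M_T,D,D'}(o_T\mid o_{1:T-1})>\epsilon_{\max,T}\right]\leq\delta$ as the crux, whereas the paper asserts it in one line by applying the pDP guarantee of $M_T$ as though $T$ were a fixed index. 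So you have not found a different route; you have found the same route and correctly identified where it is thinnest.

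Your concern at that point is substantive, not pedantic, and as far as I can tell neither your sketch nor the paper's proof closes it. The pDP hypothesis yields $\Pr[\mathcal{L}_{M_t}>\epsilon_{\max,t}\mid o_{1:t-1},\text{reach }t]\leq\delta$ for each fixed $t$, but $\{T=t\}$ is determined only after observing $o_t$ (since $\epsilon_t=\epsilon_{\text{post},t}(o_t)$ feeds the next guard), so summing the disjoint events gives only $\sum_t\Pr[\{T=t\}\cap\{\mathcal{L}_{M_t}>\epsilon_{\max,t}\}]\leq\delta\sum_t\Pr[\text{reach }t]=\delta\,\mathbb{E}[T]$, which is not $\leq\delta$ when the filter can run for many rounds. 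This is not merely a loose bound: one can correlate failure with halting. Let every $M_t$ output a symbol $\top$ with probability $\delta$ under $D$ and $\delta e^{-100}$ under $D'$ (so $\mathcal{L}(\top)=100$ and $\epsilon_{\text{post},t}(\top)=100$, forcing a halt at the next guard) and $\bot$ otherwise (with $|\mathcal{L}(\bot)|=O(\delta)$, so $\epsilon_{\text{post},t}(\bot)=O(\delta)$). Each such $M_t$ is $(\epsilon_{\max,t},\delta)$-pDP with $\epsilon_{\max,t}$ as small as $10^{-3}$, the guard then admits on the order of $\epsilon/\delta$ rounds, and with constant probability (not probability $\delta$) the released transcript contains a $\top$ and has privacy loss near $100\gg\epsilon$. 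To complete the argument one needs either an a priori cap $T_{\max}$ on the number of rounds (degrading the conclusion to $(\epsilon,T_{\max}\delta)$-DP), or a time-uniform / optional-stopping argument of the kind developed in the privacy-filter literature \citep{rogers2016privacy,Whitehouse2022-se}. You were right to call this the step requiring the most care; it is the step the proof actually needs and, in both your write-up and the paper's, does not yet have.
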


\begin{algorithm}[H]
\SetAlgoLined
\SetKwInOut{Input}{input}
\Input{dataset $D$; privacy budget $\epsilon$; a stream of adaptive mechanisms $M_1, M_2, \ldots$; a stream of values $\epsilon_{\max,1}, \epsilon_{\max, 2}, \ldots$; a stream of functions $\epsilon_{\text{post},1}, \epsilon_{\text{post},2}, \ldots$}
\For{$t = 1, 2, \ldots$}{
    \uIf{$\sum_{i=1}^{t-1} \epsilon_i + \epsilon_{\max,t} \geq \epsilon$}{
        HALT
    }
    \Else{
        $o_t = M_t(D; o_{1:t-1})$
        
        $\eps_t = \epsilon_{\text{post},t}(o_t)$
        
        Release $o_t$
    }
}
\caption{Filtered Composition, Ex-Post Privacy}
\label{alg:fsrc_meta}
\end{algorithm}

\begin{proof}
We will prove that the mechanism satisfies $(\epsilon, \delta)$-pDP and therefore also $(\epsilon,\delta)$-DP. Let $M$ denote the mechanism and fix a pair of neighboring datasets $D$ and $D'$. Let $o = o_1, \ldots, o_T \sim M(D)$ be sampled from the mechanism (here $T$ is a random stopping time) and consider the privacy loss random variable
\begin{align*}
    \mathcal{L}(o) &= \log \frac{\Pr[M(D)=o]}{\Pr[M(D')=o]}
    \\
    &=
    \sum_{i=1}^T \log \frac{\Pr[M_i(D; o_{1:i-1}) = o_i]}{\Pr[M_i(D'; o_{1:i-1}) = o_i]}
    +
    \log \frac{\Pr[\text{$M(D)$ halts after outputting $o_{1:T}$}]}{\Pr[\text{$M(D')$ halts after outputting $o_{1:T}$}]} \enspace.
\end{align*}
First of all we observe that given the outputs up to a certain time step $t$, the stopping condition is deterministic and independent of the dataset. Thus, the last term above vanishes.
Furthermore, since each of the mechanisms $M_i$ satisfies $\epsilon_{\text{post},i}$-ex-post-DP, for all $i \in [T-1]$ we have
\begin{align*}
    \log \frac{\Pr[M_i(D; o_{1:i-1}) = o_i]}{\Pr[M_i(D'; o_{1:i-1}) = o_i]} \leq \epsilon_{\text{post},i}(o_i) \enspace.
\end{align*}
In addition, since $M_T$ is $(\epsilon_{\max,T}, \delta)$-pDP, we also have
\begin{align*}
    \Pr\left[ \log \frac{\Pr[M_T(D; o_{1:T-1}) = o_T]}{\Pr[M_T(D'; o_{1:T-1}) = o_T]} > \epsilon_{\max,T} \right] \leq \delta \enspace.
\end{align*}
Now, since the stopping rule guarantees that $\sum_{i=1}^{T-1} \epsilon_{\text{post},i}(o_i) + \epsilon_{\max,T} \leq \epsilon$, we get
\begin{align*}
    \Pr[ \mathcal{L}(o) > \epsilon]
    &\leq
    \Pr\left[ \sum_{i=1}^{t-1} \epsilon_{\text{post},i}(o_i) + \log \frac{\Pr[M_T(D; o_{1:T-1}) = o_T]}{\Pr[M_T(D'; o_{1:T-1}) = o_T]} > \epsilon \right] < \delta \enspace.
\end{align*}

\end{proof}

\newpage 

\section{PURE DP OFFLINE SELECTION MECHANISMS}
In \cref{fig:gnmax-num-estimate-spread} we illustrate the variance observed when taking a Monte-Carlo estimate over a range of values $\sigma$ when computing our Pure DP Gaussian Report Noisy Max bound.
\begin{figure}[H]
    \centering
    \includegraphics[width=1.\linewidth]{./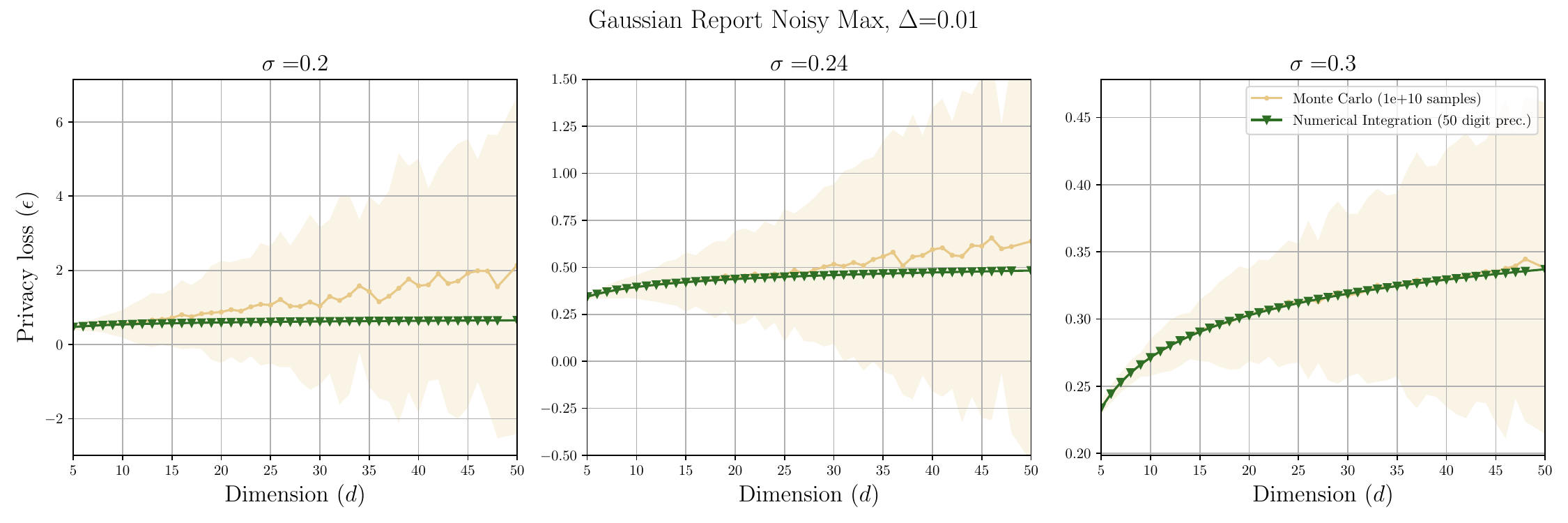}
    \caption{
    Comparison of numerical integration (in green) and Monte Carlo estimation (in beige, with 100 trials, each with 10B samples) are reported. Note Monte Carlo sampling methods are much slower than numerical methods and require greater numbers of samples as the dimension increases. We observe higher variance in the privacy loss estimate as dimension increases. The shaded region represents the standard deviation.}
    \label{fig:gnmax-num-estimate-spread}
\end{figure}

\subsection{Numerical evaluation compared to post-processing bounds}
We study how privacy loss changes as noise (and privacy) are increased in \cref{fig:num_estimate_dim_loss} for Gaussian Report Noisy Max. The standard deviation of the query noise, $\sigma$, and the number of queries (dimension $d$) on the privacy loss estimate. We note a slow increase in the privacy loss as the number of queries increases, in particular when compared to the ex-ante analysis with $\Delta_q = \Delta \sqrt{d}$.

\begin{figure}[H]
    \centering
    \includegraphics[width=1\linewidth]{./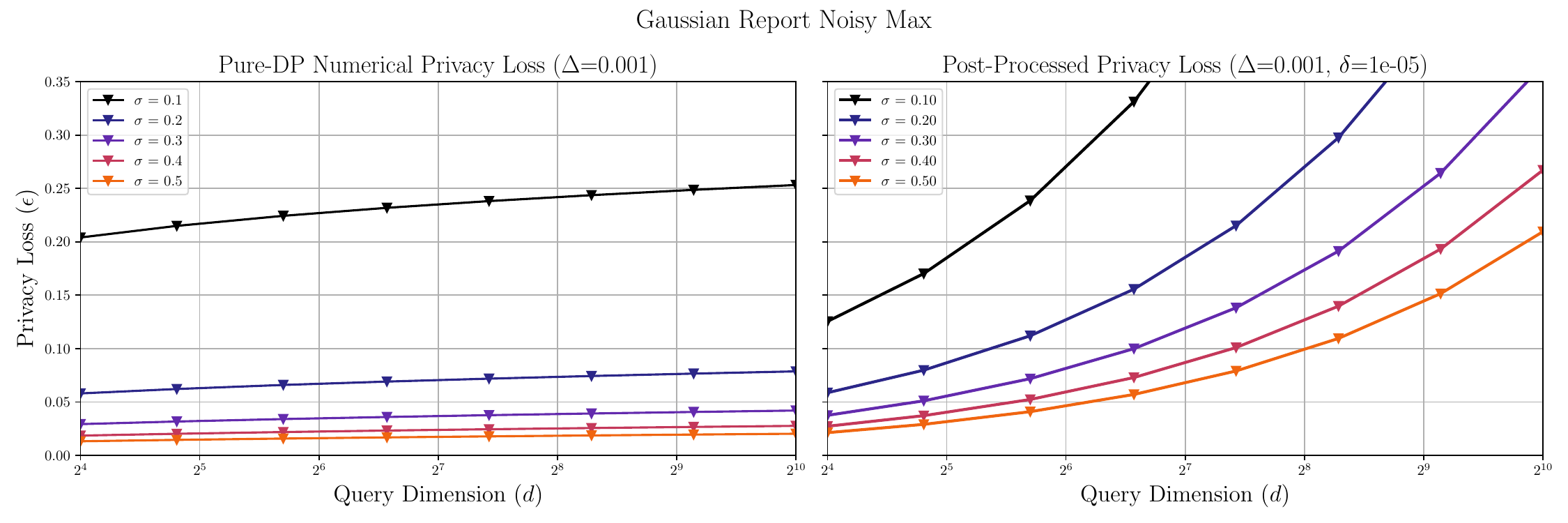}
    \caption{Left: numerical evaluation of the privacy loss for Gaussian Report Noisy Max. Privacy loss increase is very slowly as the number of queries increases. Right: we calculate privacy loss with a classic $(\epsilon, \delta)$ post-processing bound. }
    \label{fig:num_estimate_dim_loss}
\end{figure}

\newpage
\subsection{Where pure DP offers tighter accounting for Report Noisy Max}
To assess whether there are any utility gains from performing a numerical evaluation of the privacy loss, we produce a heatmap of the difference reported between the standard post-processing bound under RDP and our method in \cref{fig:gnmax-heatmap}. There exists a smooth region where the privacy accounting difference is significant, particularly in the the high privacy, high query setting. As the queries become less sensitive ($\Delta \to 0$), this region becomes more pronounced.

\begin{figure}[H]
    \centering
    \includegraphics[width=1\linewidth]{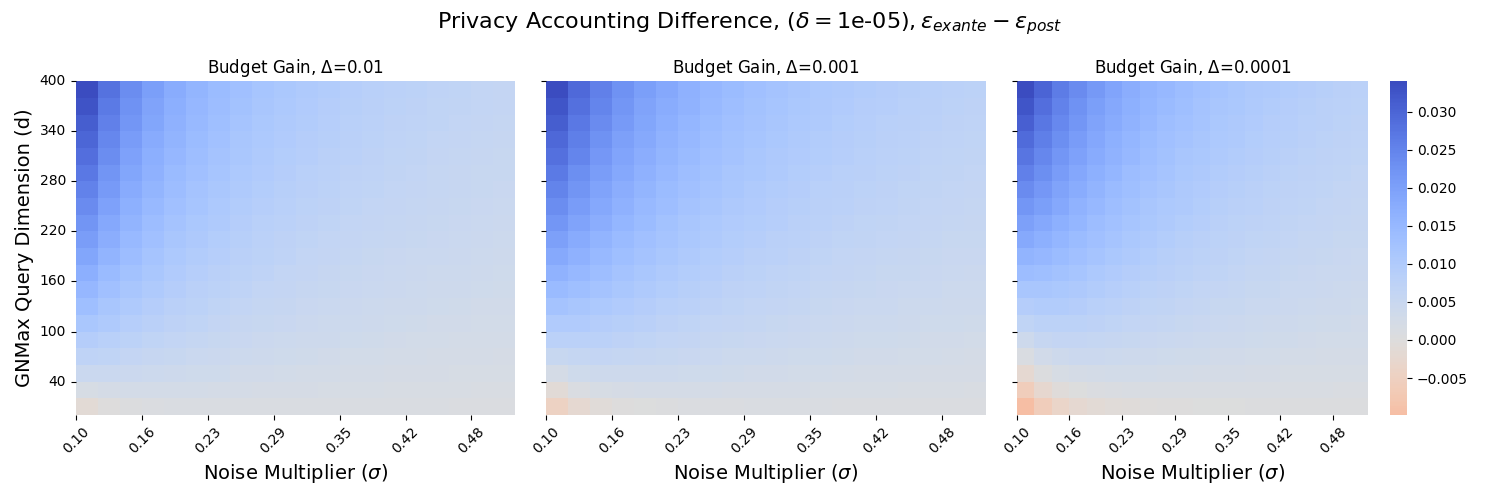}
    \caption{Comparison of the reported privacy loss difference between the standard Gaussian Report Noisy Max analysis and an ex-post evaluation for different levels of privacy ($\sigma$) and number of queries ($d$). For large $d$ and small $\sigma$, we observe the greatest difference in reported privacy loss.}
    \label{fig:gnmax-heatmap}
\end{figure}

\subsection{Offline Selection with Gaussian Report Noisy Max}
 Each accuracy/privacy loss plot reports the mean value over 1,000 trials, with the shaded region covering the standard error. 

In our experiments, we normalize the dataset to values between zero and one. The shaded region represents the standard error across several runs. The mean value is represented in as a line. A classic example of this sort of problem is query selection over a long time horizon. Our experiments center on energy and mobility datasets, which have been known to leak privacy  \citep{Narayanan2008-hw}. In both instances, user behavior, such as whether power consumption deviates from historical levels, can be joined with other datasets to infer changes to a person's socio-economic situation. 

\paragraph{Dataset Pre-Processing} The datasets we consider are temporal event logs over a fixed period of time with a regular reporting interval (e.g. hour, day, month). In each instance, we re-scale the reported values to be between zero and one. These transformations do not affects the task result since we are interesting in reporting a discrete value. In the offline selection task, we wish to find the time step whose reported value is largest, and in the online selection task, the goal is to produce a binary vector indicating which step exceeds a fixed threshold.

\paragraph{Utility} We measure the accuracy of Gaussian Report Noisy Max by comparing the distance between the true answer and the noisy (private) answer step. Let $q_i^*$ be the true query answer and $\hat{q}_i$ be the answer selected by Report Noisy Max. Then accuracy $\text{Acc}_{\sigma}$ is defined as
\begin{align}
  \text{Acc}_{\sigma}(D) = 1 - \mathbb{E} \left [ \frac{| q_{i^{*}}(D) - q_{\hat{i}}(D) |}{q_{\text{max}}} \right ] \enspace.
\end{align}
where $q_{\text{max}}$ is the maximum value of any query. In our experiments, $q_{\text{max}} = 1$.
In this construction, perfect utility occurs when $\text{Acc}_{\sigma}(D) = 1$. This is a reasonable measure of utility since we wish to measure how close on average we are to the max query value.

\newpage
\paragraph{NEAR Energy Dataset.} The NEAR Energy Dataset \citep{neardataset} includes the annual energy consumption of 300 customers, we formulate the following question: Which day was consumption highest?

\begin{figure}[H]
    \centering
    \includegraphics[width=0.7\linewidth]{./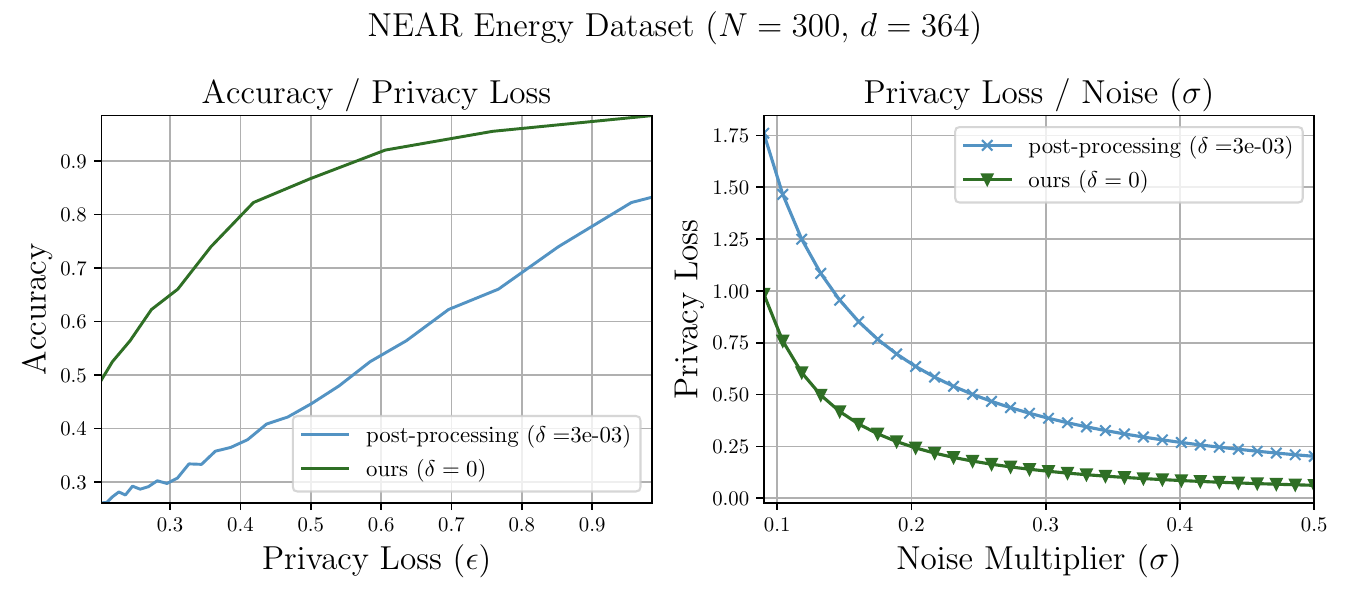}
    \caption{Privacy / utility when performing Gaussian Report Noisy Max on a query over $d=364$. $90\%$ Accuracy is attainable with our method whereas the same would only be possible with over double the privacy budget. }
    \label{fig:near365}
\end{figure}
\vspace{-1.5\baselineskip}
\paragraph{London Energy Dataset}
Here we follow the same analysis as in the case of the NEAR dataset
\citep{lcl_smartmeter_london_energy_data}, however the dataset comprises of $N= 5,564$ customers over $d = 829$ days. The larger number of queries \emph{increases} the privacy cost, however this is balanced by a \emph{decrease} in individual contribution to each query due to each query having $\Delta = 1/N$.
\vspace{-0.5\baselineskip}
\begin{figure}[H]
    \centering
    \includegraphics[width=0.7\linewidth]{./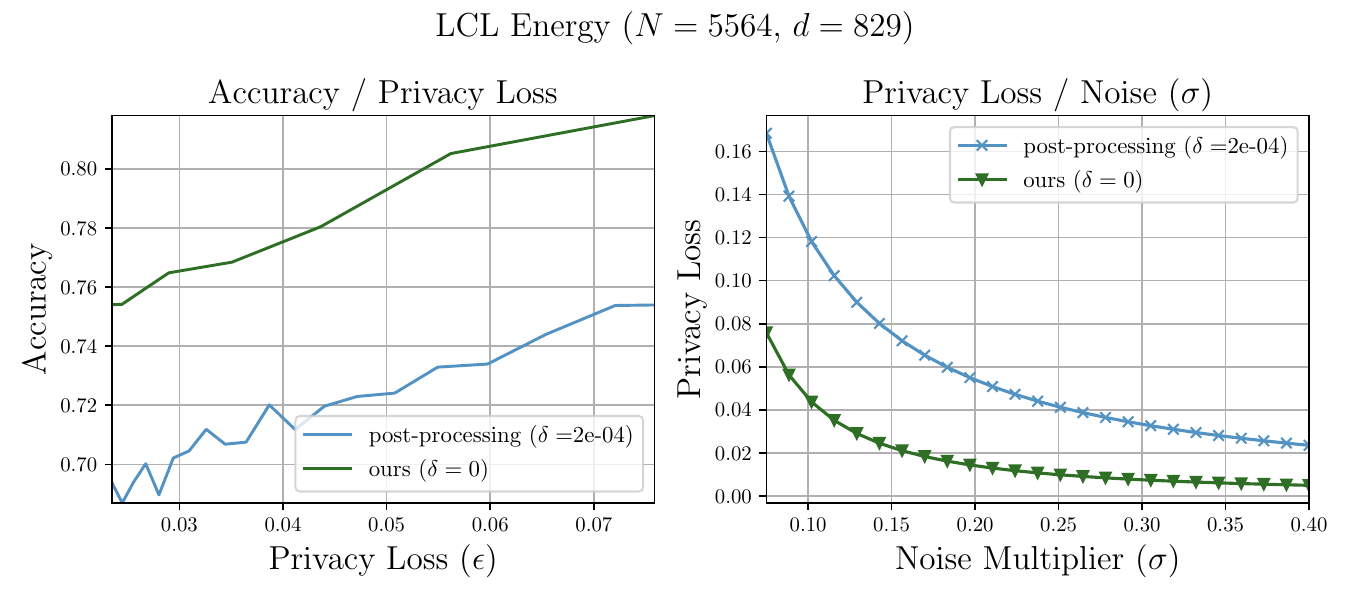}
    \caption{Privacy / utility when performing Gaussian Report Noisy Max on a query over $d=829$ with the London Energy Dataset. As $N$ and $d$ increase, the difference in reported privacy loss widens.}
    \label{fig:lcl829}
\end{figure}
\vspace{-0.5\baselineskip}
\begin{figure}[H]
    \centering
    \includegraphics[width=0.7\linewidth]{./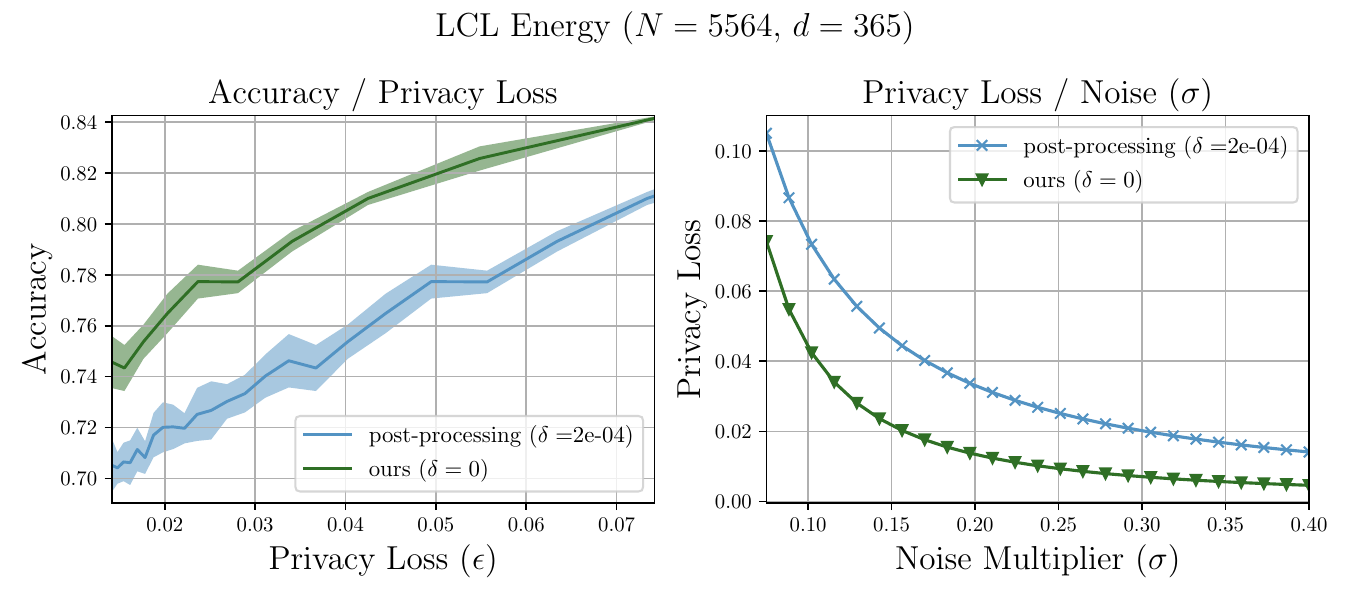}
    \caption{Privacy / utility when performing Gaussian Report Noisy Max on a query over $d=365$ with the London Energy Dataset.}
    \label{fig:lcl365}
\end{figure}

We remark that there are other offline selection mechanisms that guarantee pure DP without Gaussian noise. The exponential mechanism, which calibrates an exponential distribution with respect to a utility function, is commonly used solution to the offline query selection problem \citep{Dwork2014}. Despite a relatively simple implementation, the exponential mechanism requires that a sum over all query values be computed to fix the probability distribution. Permute-and-flip is a drop-in replacement for the exponential mechanism which has been shown to provide some utility benefits, but has the same calibration requirement \citep{mckenna2020permute}. In both instances the global sensitivity is no longer dependent on the number of queries since the maximum contribution for each user is taken with respect to each query individually.

In \cref{fig:energy_exp} we measure compare our method to pure DP baselines with other noise distributions. The global sensitivity for these exponential-based mechanisms is $\Delta := 1/N$, where $N$ is the number of users, compared to our bound, which depends on $d$ queries. One could make apply the same post-processing argument for Gaussian Report Noisy Max with noise drawn from the Laplace distribution however, this approach was omitted from our baselines since it did not appear competitive in our setting. Finally, we remark that our method may be easier to calibrate since each noisy query can be computed separately.
\vspace{-0.5\baselineskip}
\begin{figure}[H]
    \centering
    \begin{subfigure}{0.4\linewidth}
        \includegraphics[width=\linewidth]{./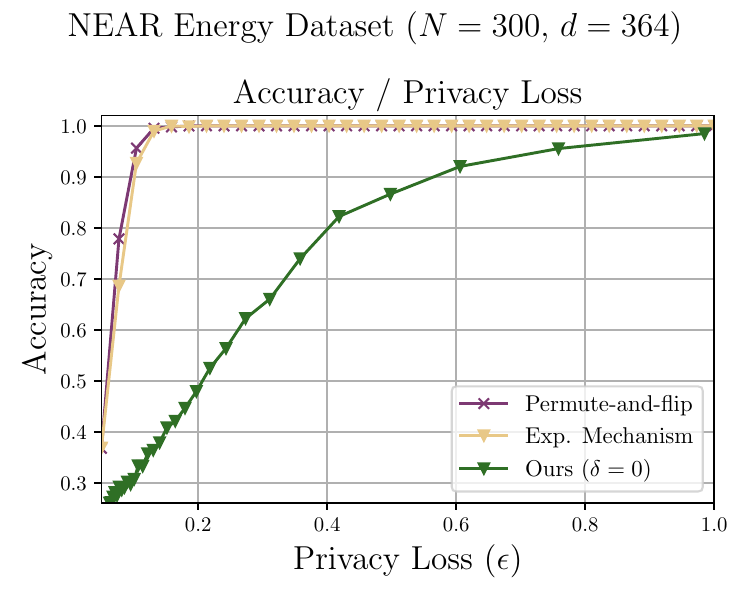}
        \caption{Comparison for $d=364$ on the NEAR Dataset.}
        \label{fig:energy_exp.1}
    \end{subfigure}
    \hfill
    \begin{subfigure}{0.4\linewidth}
        \includegraphics[width=\linewidth]{./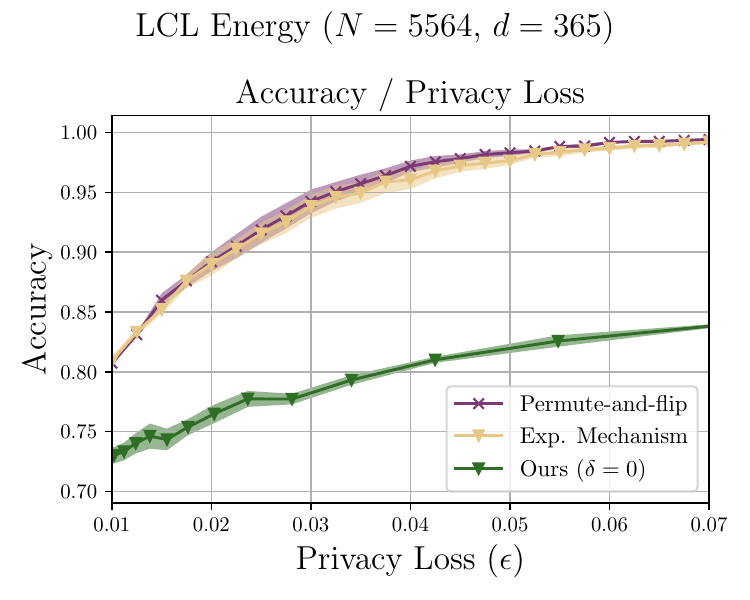}
        \caption{Comparison for $d=365$ on the LCL Energy Dataset.}
        \label{fig:energy_exp.3}
    \end{subfigure}
    \caption{Comparison between our method and two classical offline selection mechanisms.}
    \label{fig:energy_exp}
\end{figure}
\vspace{-0.5\baselineskip}
\paragraph{UCI Bikes Dataset}
\label{sec:orga51b5cd}
The UCI Bike Sharing Dataset captures the utilization of shared bikes in a geographic area over the course of a year \citep{ucibikesdataset}.
We apply the same definition of accuracy and report on the day with peak consumption. Since we do not know the upper bound on registered customers, we take the maximum (6,946 users) and assume that this is a public value. We note that our analysis is still worst-case, in that a user is assumed to be contributing to each daily (normalized) count query. We limit our Report Noisy Max query to 365 days.
\vspace{-0.5\baselineskip}
\begin{figure}[H]
    \centering
    \includegraphics[width=0.7\linewidth]{./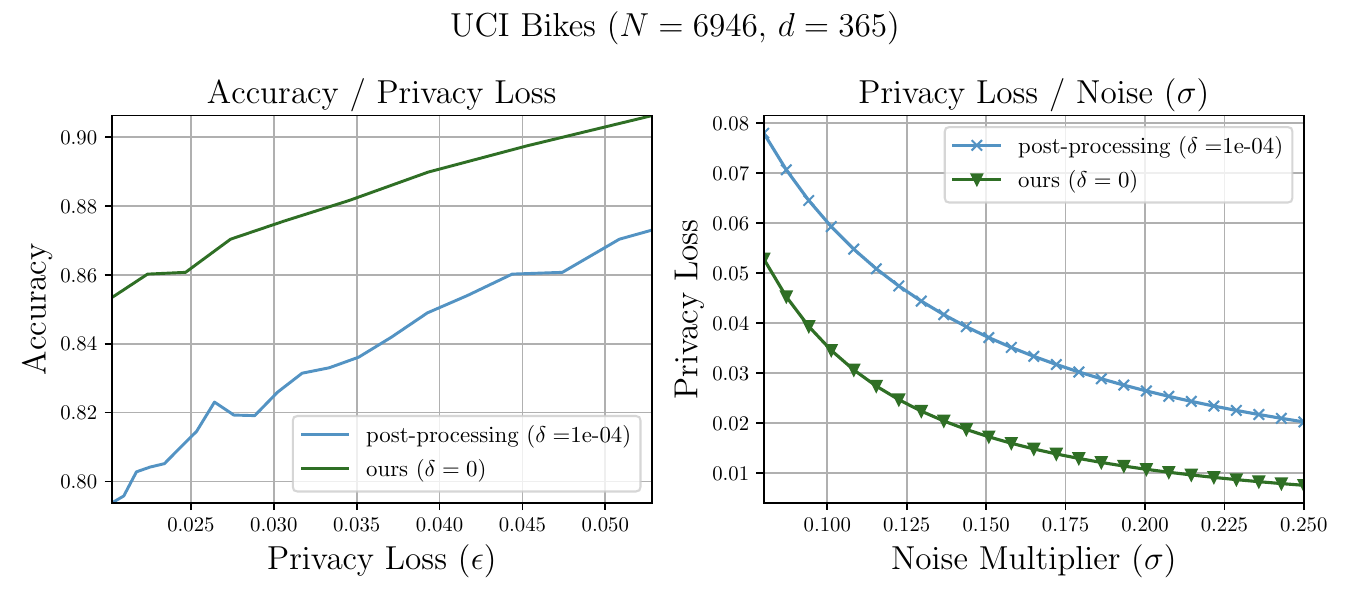}
    \caption{Privacy / utility when performing Gaussian Report Noisy Max on a query over $d=365$ with the UCI Bike Sharing Dataset.}
    \label{fig:uci1}
\end{figure}
\vspace{-1.5\baselineskip}
\begin{figure}[H]
    \centering
    \includegraphics[width=0.7\linewidth]{./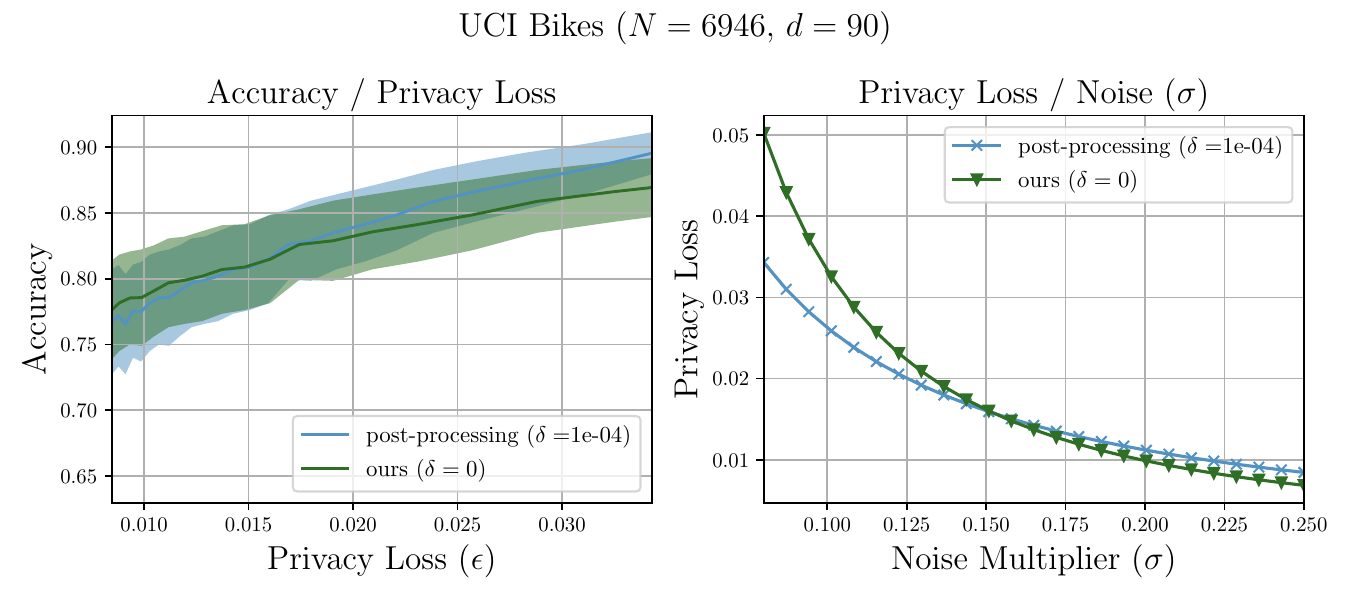}
    \caption{Privacy / utility when performing Gaussian Report Noisy Max on a query over $d=90$ with the UCI Bike Sharing Dataset. Note that as the number of queries, $d$, decreases, baseline methods become more competitive.}   
    \label{fig:uci90}
\end{figure}
\vspace{-1.5\baselineskip}
\begin{figure}[H]
    \centering
    \begin{subfigure}{0.4\linewidth}
        \centering
        \includegraphics[width=\linewidth]{./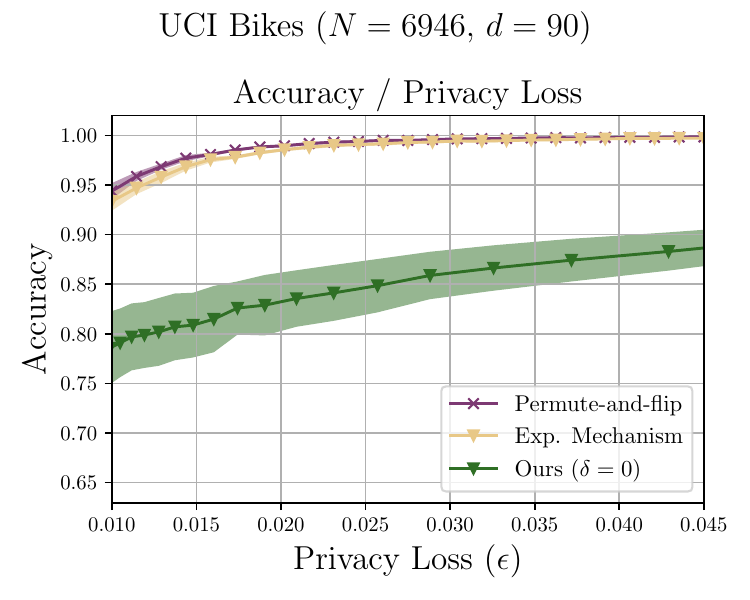}
        \caption{Comparison for $d=90$.}
        \label{fig:uci_exp.2}
    \end{subfigure}
    \hfill
    \begin{subfigure}{0.4\linewidth}
        \centering
        \includegraphics[width=\linewidth]{./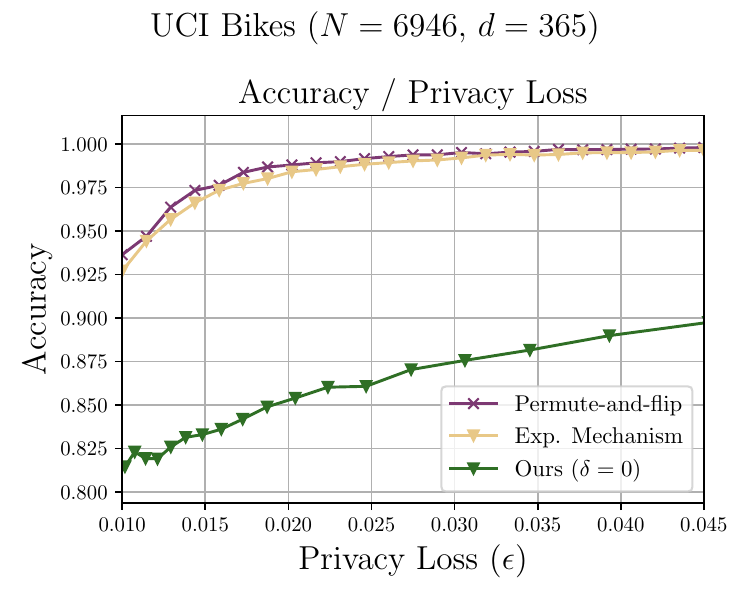}
        \caption{Comparison for $d=365$.}
        \label{fig:uci_exp.1}
    \end{subfigure}
    \caption{Comparison between our method and two classical offline selection mechanisms.}
    \label{fig:uci_exp}
\end{figure}
As shown in \cref{fig:uci_exp}, the Exponential Mechanism and Permute-and-flip offer a competitive baseline. 
\cref{fig:uci_exp_lap} provides a comparison of our method with Laplace Report Noisy Max. We observe a clear improvement in the pure DP setting using Gaussian noise and our bounds. 
\begin{figure}[H]
    \centering
    \begin{subfigure}{0.4\linewidth}
        \centering
        \includegraphics[width=\linewidth]{./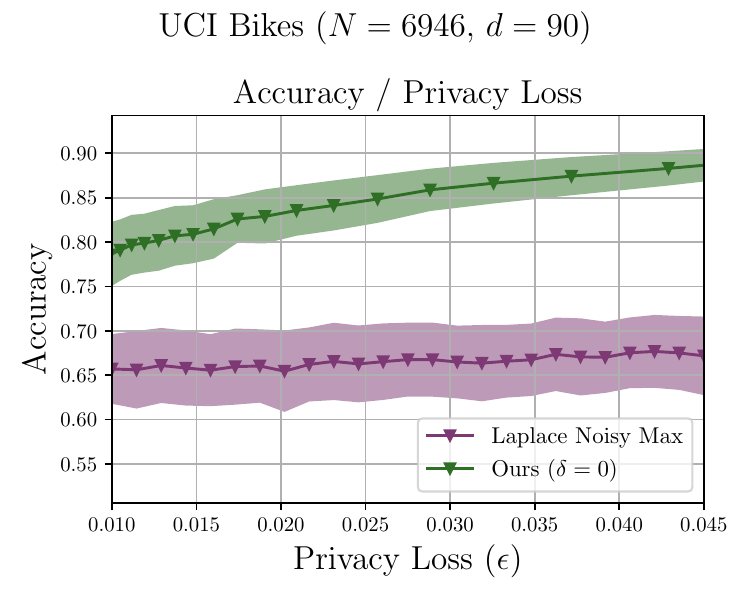}
        \caption{Comparison for $d=90$.}
        \label{fig:uci_lap.2}
    \end{subfigure}
    \hfill
    \begin{subfigure}{0.4\linewidth}
        \centering
        \includegraphics[width=\linewidth]{./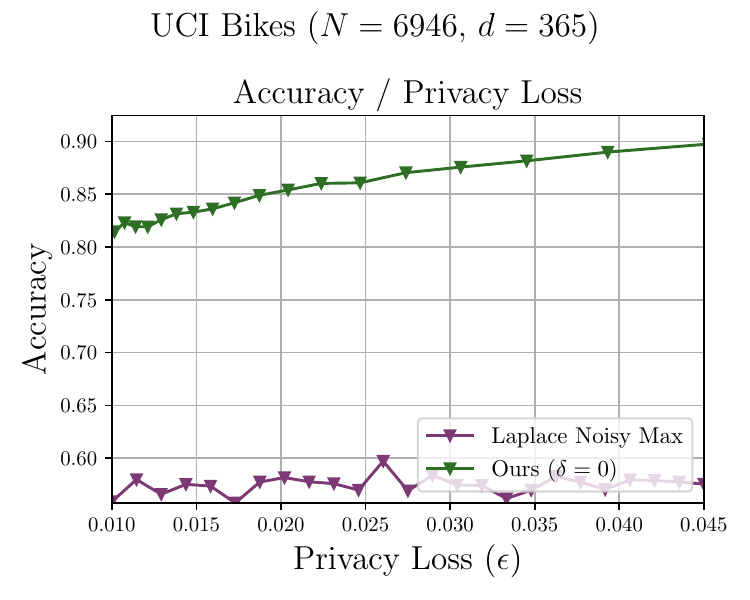}
        \caption{Comparison for $d=365$.}
        \label{fig:uci_lap.1}
    \end{subfigure}
    \caption{Comparison between our method and the Laplace Report Noisy Max on the UCI Bike Sharing Dataset.}
    \label{fig:uci_exp_lap}
\end{figure}

\section{COMPARISON WITH PRIVACY FILTERS}
Our baseline for comparison are recent results by \citep{Rogers2023-uv}, who propose a privacy filter which introduces an additive bound in both $\epsilon$ and $\delta$, thereby creating a result similar to advanced composition. Their construction does not rely on computing some $\epsilon_{\text{max}}$. 

\begin{theorem}[$(\epsilon, \delta)$-DP Filters \citep{Whitehouse2022-se}]
    Suppose $(M_t) \geq 1$ is a sequence of mechanisms such that, for any $t \geq 1$, $M_t$ is $(\epsilon_t, \delta_t)$-differentially private conditioned on $M_{1:t-1}$. Let $\epsilon > 0$, and $\delta = \delta' + \delta''$ be max privacy parameters s.t. $\delta' > 0, \delta'' \geq 0$. Let the stopping time function $N : \mathbb{R}_{\geq 0}^{\infty} \times \mathbb{R}_{\geq 0}^{\infty} \to \mathbb{N} $ be given by,
    \begin{align}
        N((\epsilon_t)_{t \geq 1}, (\delta_t)_{t \geq 1}) &= 
        \inf \left\{n : \epsilon<\sqrt{2 \log \left(\frac{1}{\delta^{\prime}}\right) \sum_{m \leq t+1} \epsilon_m^2}+\frac{1}{2} \sum_{m \leq t+1} \epsilon_m^2 \quad \text { or } \quad \delta^{\prime \prime}<\sum_{m \leq t+1} \delta_m\right\} \enspace .
    \end{align}
    Then $M_{1:N(\cdot)}(\cdot) : \mathcal{X} \to \mathcal{O}^{\infty} $ is $(\epsilon, \delta)$-DP.
\end{theorem}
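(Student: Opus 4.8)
The plan is to prove the stronger statement that the stopped composition $M_{1:N}$ is $(\epsilon,\delta)$-pDP, which implies $(\epsilon,\delta)$-DP by the remark following \Cref{lem:pDP_dp}. Fix neighbouring datasets $D \simeq D'$, write $P = M_{1:N}(D)$ and $Q = M_{1:N}(D')$, and let $(\mathcal{F}_t)$ be the filtration generated by the outputs. The central object is the privacy loss process $\mathcal{L}_n = \sum_{t=1}^{n} L_t$ with $L_t = \log \frac{\Pr[M_t(D;o_{1:t-1})=o_t]}{\Pr[M_t(D';o_{1:t-1})=o_t]}$, so that everything reduces to showing $\Pr_P[\mathcal{L}_N > \epsilon] \le \delta$ at the data-dependent stopping time $N$.

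First I would split the slack according to $\delta = \delta' + \delta''$. For the $\delta''$ part I would invoke the coupling characterisation of approximate DP: since $M_t$ is $(\epsilon_t,\delta_t)$-DP conditioned on the past, its law on $D$ can be coupled to a pure $\epsilon_t$-DP law up to total variation $\delta_t$. Summing these couplings over the released steps and using the stopping clause $\delta'' \ge \sum_{m \le N}\delta_m$, the whole trajectory coincides with a pointwise pure-$\epsilon_t$-DP process except on an event of probability at most $\delta''$. On this good event each step is controlled, $|L_t| \le \epsilon_t$, with conditional mean $\mathbb{E}_P[L_t \mid \mathcal{F}_{t-1}] \le \tfrac{1}{2}\epsilon_t^2$.

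The heart of the argument is a time-uniform concentration bound for $\mathcal{L}_n$ on this good event. Fixing $\lambda > 0$, I would use that a pure $\epsilon_t$-DP step is $\tfrac{1}{2}\epsilon_t^2$-zCDP, which yields the conditional bound $\mathbb{E}_P[e^{\lambda L_t} \mid \mathcal{F}_{t-1}] \le \exp\big((\tfrac{\lambda}{2} + \tfrac{\lambda^2}{2})\epsilon_t^2\big)$. Hence, writing $S_n = \sum_{t \le n}\epsilon_t^2$, the process $W_n(\lambda) = \exp\big(\lambda \mathcal{L}_n - (\tfrac{\lambda}{2} + \tfrac{\lambda^2}{2}) S_n\big)$ is a nonnegative supermartingale under $P$ with $W_0 = 1$. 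Ville's maximal inequality then gives, with probability at least $1 - \delta'$ and simultaneously for all $n$, the bound $\mathcal{L}_n \le \tfrac{1}{\lambda}\log(1/\delta') + (\tfrac{1}{2} + \tfrac{\lambda}{2}) S_n$; tuning the scale to $\lambda \approx \sqrt{2\log(1/\delta')/S_n}$ reproduces exactly the advanced-composition budget $\sqrt{2\log(1/\delta')\,S_n} + \tfrac{1}{2} S_n$. Because the stopping rule makes $N$ the first index at which this budget (evaluated one step ahead) would exceed $\epsilon$, the released loss satisfies $\mathcal{L}_N \le \epsilon$ on the $1 - \delta'$ event. Combining the $\delta'$ martingale-tail event with the $\delta''$ coupling event gives $\Pr_P[\mathcal{L}_N > \epsilon] \le \delta' + \delta'' = \delta$.

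The main obstacle, and the reason a naive optional-stopping argument fails, is that both the horizon $N$ and the per-step parameters $(\epsilon_t,\delta_t)$ are selected adaptively from past outputs, so the tuning parameter $\lambda$ cannot be set to its optimal value $\sqrt{2\log(1/\delta')/S_N}$ in advance. Resolving this requires the anytime-valid form of the bound: either the Ville's-inequality argument above, which holds for all $n$ at once, or a stitching/union bound over a geometric grid of $\lambda$ so that a near-optimal $\lambda$ is always available whatever value $S_N$ takes. The remaining bookkeeping is to verify that $N$ is genuinely a stopping time for $(\mathcal{F}_t)$ and that the one-step lookahead in its definition prevents releasing the output that would itself breach the budget, which is what makes the self-tuning of $\lambda$ at the random stopping time rigorous.
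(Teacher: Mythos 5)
First, a point of reference: the paper does not prove this theorem at all --- it is imported verbatim as a baseline from \citet{Whitehouse2022-se}, so the comparison here is against the argument in that source rather than anything in this paper. Your reconstruction matches the architecture of that original proof quite faithfully: the split $\delta = \delta' + \delta''$, the per-step coupling of each conditionally $(\epsilon_t,\delta_t)$-DP mechanism to a pointwise $\epsilon_t$-DP one (with the coupling failures absorbed by the clause $\delta'' \geq \sum_{m \leq N} \delta_m$), the bound $\mathbb{E}[e^{\lambda L_t} \mid \mathcal{F}_{t-1}] \leq \exp\bigl((\tfrac{\lambda}{2}+\tfrac{\lambda^2}{2})\epsilon_t^2\bigr)$ via the fact that pure $\epsilon_t$-DP is $\tfrac{1}{2}\epsilon_t^2$-zCDP, the resulting nonnegative supermartingale $W_n(\lambda)$, Ville's inequality, and the role of the one-step lookahead in the stopping rule. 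All of that is the right proof and is the proof.

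The one place you wobble is exactly the step you flag as the main obstacle, and your proposed fix is the wrong one. You are right that a single fixed $\lambda$ gives only the line bound $\mathcal{L}_n \leq \tfrac{1}{\lambda}\log(1/\delta') + (\tfrac{1}{2}+\tfrac{\lambda}{2})S_n$ for all $n$, and that writing ``tune $\lambda \approx \sqrt{2\log(1/\delta')/S_n}$'' is illegitimate since $\lambda$ may not depend on the random, adaptively chosen $S_n$. But the resolution is not stitching over a geometric grid of $\lambda$: stitching incurs multiplicative slack and would \emph{not} recover the exact boundary $\sqrt{2\log(1/\delta')S} + \tfrac{1}{2}S$ stated in the theorem (stitching is what one pays for in the \emph{odometer} setting, where no budget is fixed in advance). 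The key observation in the filter setting is that $\epsilon$ and $\delta'$ are fixed up front and the map $x \mapsto \sqrt{2\log(1/\delta')x} + \tfrac{x}{2}$ is strictly increasing, so the stopping rule's lookahead condition is equivalent to a \emph{deterministic} cap $S_N \leq s^*$, where $s^*$ is the unique solution of $\sqrt{2\log(1/\delta')s^*} + \tfrac{s^*}{2} = \epsilon$. One may therefore commit in advance to the single value $\lambda^* = \sqrt{2\log(1/\delta')/s^*}$; plugging $S_N \leq s^*$ into the line bound then yields $\mathcal{L}_N \leq \sqrt{2\log(1/\delta')s^*} + \tfrac{1}{2}s^* = \epsilon$ exactly, with no union bound and no lost constants. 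A second, smaller caveat: your framing of the conclusion as $(\epsilon,\delta)$-pDP for the original process is slightly off, since on the coupling-failure event the actual privacy loss is uncontrolled; the clean statement is that the output distribution is within total variation $\delta''$ of a process that is $(\epsilon,\delta')$-pDP, which still yields $(\epsilon,\delta'+\delta'')$-DP as required.
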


To compare Filtered Self-Reporting Composition, we apply the following privacy filter, (\cref{def:rog_dp_filter}) with \cref{alg:comp_privacy_filter}. The bound we compute for Gaussian Above Threshold comes from \citep{zhu2020improving}.

\begin{definition}[DP Privacy Filter \citep{rogers2016privacy}]
\label{def:rog_dp_filter}
    Let $N(\cdot, \cdot)$ be a $(\epsilon, \delta)$-DP Filter. Then the DP Composition Privacy Filter $\cal{F}(\cdot)$ is given by 
    \begin{align}
        \mathcal{F}_{\epsilon, \delta}((\epsilon_{t})_{t \geq 1}, (\delta_t)_{t \geq 1}) &= 
        \begin{cases}
             \text{HALT} &  \text{ if } N((\eps_t)_{t \geq 1}, (\delta_t)_{t \geq 1}) > t \\
             \text{CONT.} &  \text{ otherwise }
        \end{cases} \enspace .
    \end{align}
\end{definition}

 \begin{algorithm}[H]
    \SetAlgoLined
    \SetKwInOut{Input}{input}
    \Input{Dataset $D$, Privacy budget $\epsilon_{\text{max}}$, $\delta_{\text{max}}$, $(\eps_t, \delta_t)$-differentially private mechanisms $M_t$, for $t=1, \ldots, T$, Privacy Filter $\cal{F}_{\eps, \delta}$.}
    
    \For{$t = 1, \ldots, T$}{
        $o_t = M_t(D)$ 
        
        \uIf{${\cal{F}}_{\epsilon_{\text{max}}, \delta_{\text{max}}}((\eps_t)_{t \geq 1}, (\delta)_{t \geq 1}) = \text{HALT}$}{
        
            $v_t = \bot$
            
        }
        \Else{
            
            $v_t = o_t$
        }
    }
    \Return{$(v_1, \ldots, v_T)$}
    \caption{Composition with Privacy Filter}
    \label{alg:comp_privacy_filter}
\end{algorithm}

\newpage
\section{PURE DP ONLINE SELECTION: ADDITIONAL EXPERIMENTS}

\paragraph{UCI Bikes Dataset} We use the same UCI dataset as before, however this time configure Above Threshold to HALT when bike sharing exceeds a certain threshold between zero and one. In \cref{fig:fsrc_uci_scatter} we plot a range of calibrations, and in \cref{fig:fsrc_uci_spend} we show the privacy spend over time.

\begin{figure}[H]
    \centering
    \includegraphics[width=1\linewidth]{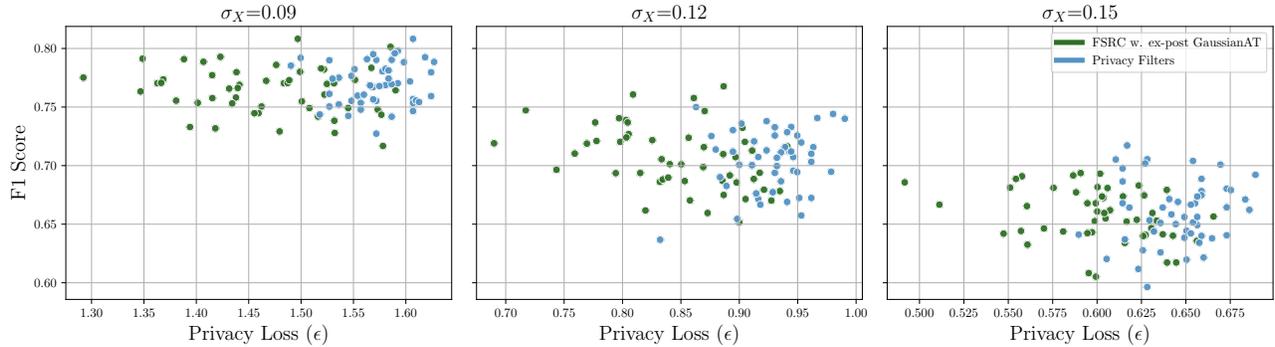}
    \caption{
    Scatter plot indicating the accuracy and final privacy spend over a range of noise multipliers for thresholds with the UCI Bikes dataset. Final privacy loss ($\epsilon$) is reported for Filtered Self-Reporting Composition (green). Threshold noise, $\sigma_X$, evaluated in the range of $[0.09, 0.15]$. We note a clear separation in privacy accounting over a range of noise multipliers.}
    \label{fig:fsrc_uci_scatter}
\end{figure}

\vspace{-0.5\baselineskip}
\begin{figure}[H]
    \centering
    \includegraphics[width=1\linewidth]{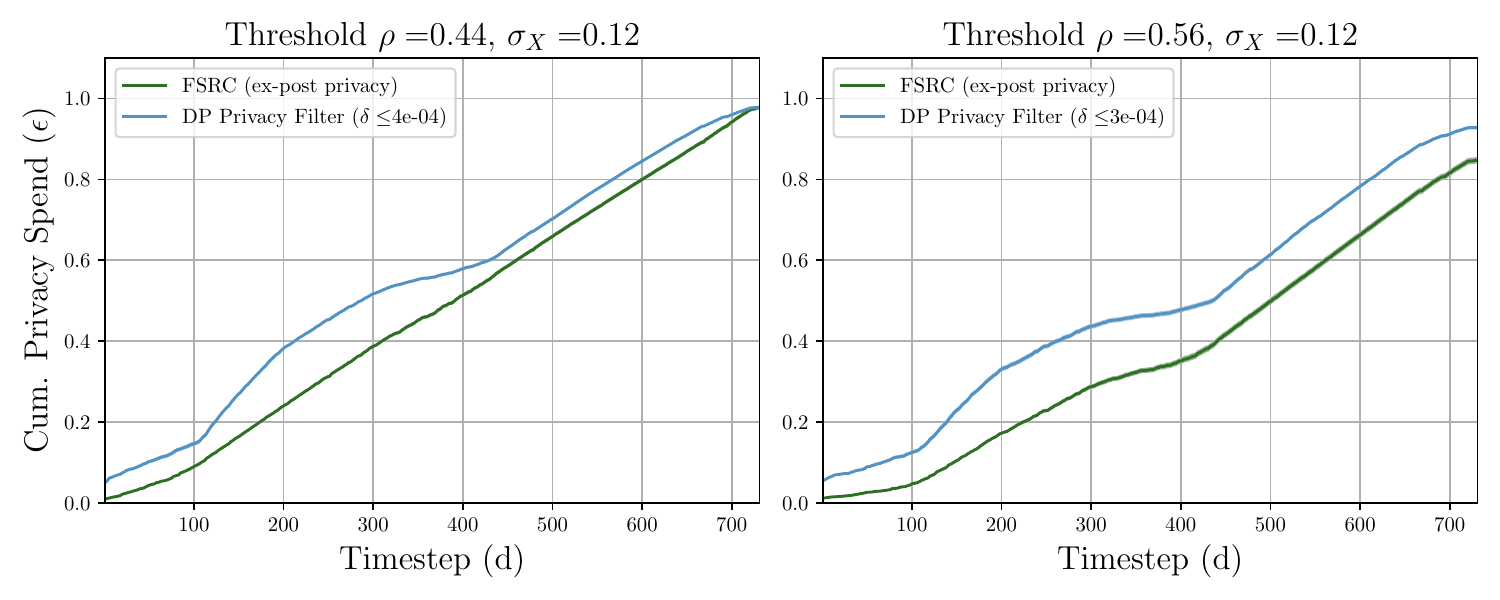}
    \caption{Privacy spend comparison between Gaussian Above Threshold and Filtered Self-Reporting Composition and Privacy Filters with Gaussian Above Threshold on the UCI Bikes dataset.}
    \label{fig:fsrc_uci_spend}
\end{figure}

\newpage
\paragraph{London Energy Dataset.} We plot Gaussian Above Threshold under the same utility metric on the LCL London energy dataset, with 5,564 customers. In \cref{fig:fsrc_lcl_scatter_sup} we plot a range of calibrations. As the threshold decreases, we witness more queries released and therefore a greater privacy spend. we show the effect over time in \cref{fig:fsrc_lcl_spend_sup}.

\begin{figure}[H]
    \centering
    \includegraphics[width=1\linewidth]{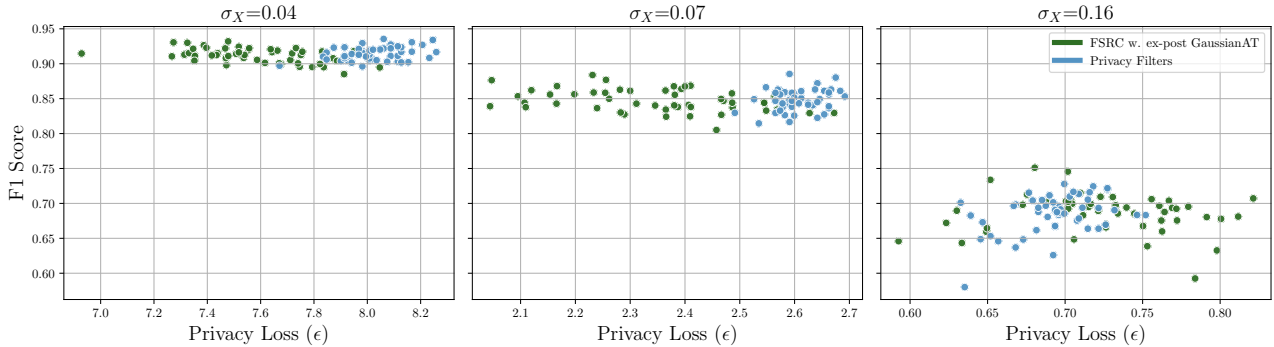}
    \caption{Scatter plot indicating the accuracy and final privacy spend over a range of noise multipliers for thresholds with the LCL London dataset. Final privacy loss ($\epsilon$) is reported for FSRC (green). Threshold noise, $\sigma_X$, was evaluated in the range of $[0.04, 0.16]$. With this dataset, our accounting method provides benefits when $\sigma_X > 0.06$.}
    \label{fig:fsrc_lcl_scatter_sup}
\end{figure}

\begin{figure}[H]
    \centering
    \includegraphics[width=1\linewidth]{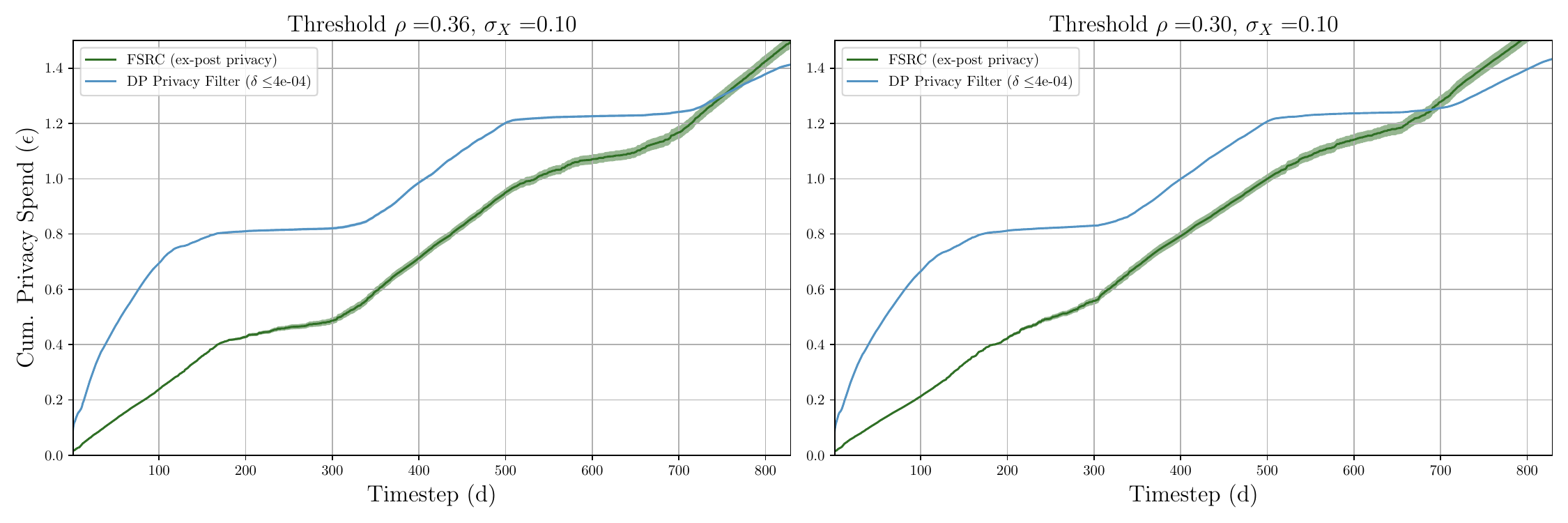}
    \caption{Privacy spend comparison between Gaussian Above Threshold and FSRC and Privacy Filters with Gaussian Above Threshold on the LCL London Energy dataset.}
    \label{fig:fsrc_lcl_spend_sup}
\end{figure}


\end{document}